\newcommand{\score}{s}
\newcommand{\scorerv}{S}
\newcommand{\toto}{\rightrightarrows}
\newcommand{\rdmfcn}{\varphi}
\newcommand{\weak}{\textup{weak}}
\newcommand{\biject}{\mathfrak{S}}
\newcommand{\weakset}{W}
\newcommand{\weaklabel}{Y^\weak}
\begin{document}

\begin{center}
  {\Large
    Predictive Inference with Weak Supervision}

  \vspace{.3cm}
  Maxime Cauchois$^1$ ~~~ Suyash Gupta$^1$ ~~~ Alnur Ali$^1$ ~~~ John Duchi$^{1,2}$

  \vspace{.2cm}
  Stanford University, Departments of $^1$Statistics and
  $^2$Electrical Engineering

  \vspace{.1cm}
  January 2022
\end{center}

% \title{Valid conformal predictive sets with partial labels}

% \maketitle

\begin{abstract}
  The expense of acquiring labels in large-scale statistical machine
  learning makes partially and weakly-labeled data attractive, though it is
  not always apparent how to leverage such data for model fitting or
  validation.  We present a methodology to bridge the gap between partial
  supervision and validation, developing a conformal prediction framework to
  provide valid predictive confidence sets---sets that cover a true label
  with a prescribed probability, independent of the underlying
  distribution---using weakly labeled data.  To do so, we introduce a
  (necessary) new notion of coverage and predictive validity, then develop
  several application scenarios, providing efficient algorithms for
  classification and several large-scale structured prediction problems.
  We corroborate the hypothesis that the new coverage definition
  allows for tighter and more informative (but valid) confidence
  sets through several experiments.
\end{abstract}

%% \begin{abstract}
%%   In machine learning, acquisition and full labeling of new data can often
%%   be costly, time-consuming or even down right impossible if the label space
%%   is too large.  This may additionally lead the practitioner to favor these
%%   few strongly supervised examples for training purposes.  In this scenario,
%%   partially labeled data, obtained through user feedback or noisier
%%   ``high-level" classification, often provides a cheaper, more attractive
%%   alternative to this data scarcity.  On the other hand, conformal
%%   prediction provides a framework for uncertainty quantification, and
%%   transforms any supervised model into valid confidence sets with strong
%%   guarantees of coverage, but it can only do so by leveraging independent
%%   fully supervised data.

%%   In this work, we bridge the gap between partial supervision and conformal
%%   prediction: we introduce a new notion of coverage, more suited to
%%   partially supervised data, and propose 
%% \end{abstract}

%% TODO: Make sure W is given a weak mnemonic

\section{Introduction}
%When monitoring and evaluating existing models in production, a practitioner wishes to detect when and if the model falls below an acceptable level of performance. 
%This is a standard yet challenging task.
%Indeed, even if the practitioner is continuously monitoring her model and tracking its performance, she often only receives partial feedback, through unsupervised or at best only partially labeled data, which makes it harder to detect a significant drop in performance.
%Of course, postulating that such drop will eventually occur implicitly assumes that the evaluation of a model on out-of-sample data prior to deployment is insufficient to ensure generalization to future data. 
%This hypothesis is admittedly debatable, but an extensive line of work~\cite{SugiyamaKrMu07, Quionero-CandelaSuSc09,
%Ben-DavidBlCrPeVa10, WenYuGr14, LiptonWaSm18, CauchoisGuAlDu20} has already showed how distribution shift can greatly affect model performance over time.
%In consequence,  we propose in this paper methods allowing to evaluate our existing models, especially when the feedback available is scarce or incomplete.  

Consider the typical supervised learning pipeline that we teach students
learning statistical machine learning: we collect data in $(X, Y)$ pairs,
where $Y$ is a label or target to be predicted; we pick a model and loss
measuring the fidelity of the model to observed data; we choose the model
minimizing the loss and validate it on held-out data.  This picture obscures
what is becoming one of the major challenges in this endeavor: that of
actually collecting high-quality labeled
data~\cite{SculleyHoGoDaPhEbChYoCrDe15, Donoho17, RatnerBaEhFrWuRe17}.
Hand labeling large-scale training sets is often impractically expensive.
Consider, as simple motivation, a ranking problem: a prediction is an
ordered list of a set of items, yet available feedback is likely to be
incomplete and partial, such as a top element (for example, in web search a
user clicks on a single preferred link, or in a grocery, an individual buys
one kind of milk but provides no feedback on the other brands
present). Developing methods to leverage such partial and weak feedback is
therefore becoming a major focus, and researchers have developed methods to
transform weak and noisy labels into a dataset with strong,
``gold-standard'' labels~\cite{RatnerBaEhFrWuRe17, ZhangReCaDeRaShWaWu17}.

In this paper, we adopt this weakly labeled setting, but instead of
considering model fitting and the construction of strong labels, we focus on
validation, model confidence, and predictive inference, moving beyond point
predictions and single labels. Our goal is to develop methods to rigorously
quantify the confidence a practitioner should have in a model given only weak
labels.  First consider the standard supervised learning scenario for data
$(X, Y) \in \mc{X} \times \mc{Y}$: here, given a desired confidence level
$\alpha$, the goal, rather than to provide point estimates
$\what{Y}$ of $Y$ given $X$, is to give a confidence set mapping $\what{C}_n$
based on $(X_i, Y_i)_{i = 1}^n$ that guarantees the distribution-free
coverage
\begin{equation}
  \label{eqn:conformal-inference-guarantee}
  \P\left[ Y_{n+1} \in \what{C}_n(X_{n+1}) \right] \ge 1 - \alpha,
\end{equation}
where $(X_{n+1}, Y_{n + 1})$ is a new observation following the same
distribution as the first $n$ points.  Conformal inference provides
precisely these guarantees~\cite{VovkGaSh05, Lei14, LeiWa14, LeiGSRiTiWa18,
  BarberCaRaTi19a}.

There are many scenarios, however, where it is natural to transition away
from this strongly supervised setting with fully labeled examples.  Above we
note ranking: individuals are very unlikely to provide full
feedback~\cite{AilonMoNe08, DuchiMaJo13, NegahbanOhSh16}. In multi-label
image classification~\cite{BoutellLuShBr04, ElisseeffWe01}, a labeler may
identify a few items in a given scene but not all, leading to partial
labeled feedback. A major challenge in industrial machine learning
deployment is to monitor models once they are in production, where it
may be challenging to collect high-quality labels, but weak supervision---in
the form of clicks on a recommended website, or agreeing to a suggested text
message completion---is relatively easy and cheap to collect.
In all of these, developing valid confidence sets and measures for
our predictions is of growing importance, as we wish for models
to be trustable, usable, verifiable.

With this as motivation, we consider supervised learning problems where,
instead of directly observing the ground truth labels $\{ Y_i \}_{i=1}^{n}$,
we observe only noisy partial labeling.  We make this formal in two
equivalent ways. In the first, for each instance $i \in [n]$, there exists a
(random) function $\rdmfcn_i : \mc{Y} \to \mc{Y}^\weak$, that belongs to a
set $\Phi \subset \{ \mc{Y} \to \mc{Y}^\weak \}$ of partially supervising
functions, such that we only observe $\weaklabel_i = \rdmfcn_i(Y_i)$.
Equivalently, the pair $(\weaklabel_i, \rdmfcn_i)$ specifies a weak set
$\weakset_i \subset \mc{Y}$ that contains $Y_i$:
\begin{align}
  \label{eqn:weak-set-from-function}
  \weakset_i  \defeq \left\{ y \in \mc{Y} \mid  \rdmfcn_i(y)
  = \weaklabel_i \right\} \subset \mc{Y},
\end{align}
so that we observe a set $\weakset_i$ consistent with $Y_i$.  Instead of
observing strong labels $(X_i, Y_i) \simiid P$, we thus observe
only $(X_i, \rdmfcn_i, \weaklabel_i)_{i=1}^n$.
We consider two fundamental questions in this weakly-labeled setting:
(i) for what is it even possible to provide (distribution-free) coverage (e.g.,
true labels $Y$, weak labels $\weakset$, or something else), and
(ii) what methods can guarantee coverage?

While a first goal would be to produce a confidence mapping using $(X_i,
\weaklabel_i, \rdmfcn_i)_{i=1}^n$ guaranteeing the
coverage~\eqref{eqn:conformal-inference-guarantee}, as we prove in
Section~\ref{subsec:impossible-strong-coverage}, this is impossible without
further distribution assumptions and would in general produce uninformative
confidence sets.  We therefore relax our coverage desiderata, instead
seeking a confidence set $\what{C}_n : \mc{X} \toto \mc{Y}$ that covers the
weak counterpart $\weaklabel_{n+1} = \rdmfcn_{n+1}(Y_{n+1})$ of the true
label in the sense that
\begin{align}
  \label{eqn:partial-conformal-inference-coverage}
  \P\left[ \what{C}_n(X_{n+1}) \cap \weakset_{n+1} \neq \emptyset \right] 
  =
  \P \left[ \exists y \in \what{C}_n(X_{n+1}) ~
    \mbox{s.t.} ~ \rdmfcn_{n+1}(y) = \weaklabel_{n+1}  \right] \ge 1-\alpha.
\end{align}
One could certainly use standard conformal inference
approaches~\cite{VovkGaSh05, Lei14, BarberCaRaTi19a} to obtain this, but
naive application would yield predictions in the space $\mc{Y}^\weak$ of
weak labels---we wish to return usable labels and configurations in the
actual target space $\mc{Y}$ of interest.  The
condition~\eqref{eqn:partial-conformal-inference-coverage} is weaker than
the initial coverage~\eqref{eqn:conformal-inference-guarantee}, and any
confidence set satisfying the former will also satisfy the latter, though it
allows smaller confidence set sizes. A major challenge is that the function
$\rdmfcn_{n+1}$ representing an individual's weak supervision is \emph{a
  priori} unknown (e.g., we do not know precisely what items a labeler will
label in an image ahead of time).  Indeed, if we observe $\rdmfcn_{n+1}$
prior to our prediction, a simple and optimal way to achieve
coverage~\eqref{eqn:partial-conformal-inference-coverage} follows. First,
construct a valid confidence set mapping $\what{C}_{n,\weak} : \mc{X} \toto
\mc{Y}^\weak$ for $\weaklabel_{n+1}$ using classical conformal
methodology~\cite{VovkGaSh05, Lei14, BarberCaRaTi19a}, which as
in~\eqref{eqn:conformal-inference-guarantee} would guarantee
$\P(Y^\weak_{n+1} \in \what{C}_{n,\weak}(X_{n+1})) \ge 1 - \alpha$, Then
define $\what{C}_n(x) \subset \mc{Y}$ to include a single $y \in \mc{Y}$ for
each $y^\weak \in \what{C}_{n,\weak}(x)$ such that $\rdmfcn_{n+1}(y) =
y^\weak$.  Such an approach is unfortunately impossible: we do not know
ahead of time if an individual cares only about the top item of her ranking
or requires a ranking accurate up to the $10$th item.

Given the subtleties of
coverage~\eqref{eqn:partial-conformal-inference-coverage}, we dedicate
Section~\ref{sec:weak-conformal} to question (i) above: what types of
coverage are even possible. We devote
Sections~\ref{sec:weak-supervised-scores}
and~\ref{sec:structured-prediction-scores} to question (ii): the development
of methodologies that can guarantee the
coverage~\eqref{eqn:partial-conformal-inference-coverage}.  In the former
(Sec.~\ref{sec:weak-supervised-scores}), we provide a general recipe, while
in Section~\ref{sec:structured-prediction-scores} we provide more tailored
methods for large output spaces, such as those in structured prediction.
To provide some initial insights into the methods and potential
applications, we provide experiments on several real-world domains in
Section~\ref{sec:experiments}.

%% \jcdcomment{ In the above and the related work, what needs to happen is an
%%   actual description of what other people are doing that's related to
%%   us. For example, we need to describe the Bates approach, and show why it's
%%   \emph{different} than what we're going to do.

%%   Additionally, we should describe something on nested conformal
%%   methods---assuming they are actually relevant---as in the
%%   \cite{GuptaKuRa19} paper maybe? If we're going to use someone's results,
%%   we should describe why/how we'll use them, otherwise omit.}

\subsection{Related Work}

An extensive line of work addresses prediction with partially labeled data.
The major focus is on strong label recovery under weak supervision,
including in multiclass~\cite{CourSaTa11, NguyenCa08} and
multilabel~\cite{YuJaKaDh14} tasks as well as structured prediction
problems, such as ranking~\cite{HullermeierFuChBr08, KorbaGaAl18},
segmentation~\cite{TriggsVe08, PapandreouChMuYu15}, and natural language
processing~\cite{FernandesBr11, MayhewChTsRo19}. More recent work tackles
constructing strongly labeled datasets from disparate weak supervision
tasks~\cite{RatnerBaEhFrWuRe17, ZhangReCaDeRaShWaWu17}, while the
papers~\cite{CidGaSa14, RooyenWi18, CabannesRuBa20} provide generic
theoretical conditions allowing strong label recovery. Yet this literature
focuses primarily on point prediction problems, where a model only returns a
single label with the (putative) highest likelihood, in contrast to our
confidence-based approach, which provides calibrated uncertainty estimates
and guarantees valid confidence sets with virtually no distributional
assumptions.

Our work also connects to the substantial literature on conformal inference,
where the goal is to provide valid predictive confidence
sets~\eqref{eqn:conformal-inference-guarantee}. \citet{VovkGaSh05} introduce
the main techniques---that examples are exchangeable, and so essentially can
provide $p$-values for significance of one-another---and suggest the simple
and generic split-conformal algorithm for building valid confidence sets.
Essentially all conformalized confidence sets offer the coverage
guarantee~\eqref{eqn:conformal-inference-guarantee}, so it is of interest to
improve various aspects of the mappings $\what{C}_n$.  For example, works
focus on improving the precision of these methods and optimizing average
confidence set size~\cite{LeiGSRiTiWa18, SadinleLeWa19, HechtlingerPoWa19,
  RomanoBaSaCa19, AngelopoulosBaMaJo20, BarberCaRaTi19a}, or on bridging the
gap with other forms of coverage, like classwise~\cite{SadinleLeWa19} or
conditional~\cite{RomanoPaCa19, BarberCaRaTi19a, CauchoisGuDu21,
  RomanoSeCa20} coverage.

Along these lines, \citet{BatesAnLeMaJo21} generalize conformal inference to
offer error control with respect to loss functions beyond the 0-1 loss
(coverage or non-coverage) central to the
guarantee~\eqref{eqn:conformal-inference-guarantee}, taking structured
prediction problems as motivation---as we do. \citet{BatesAnLeMaJo21} focus
on settings where the loss function naturally reflects the structure of the
label space $\mc{Y}$, such as hierarchical classification problems where one
wishes to label an example $X$ at a resolution (level of the tree)
appropriate to the confidence with which it can be labeled. We view our
approaches as complementary to theirs: their approaches make sense for
scenarios with fully labeled data in which a particular loss function is
natural, for example in tree-structured hierarchical classification, where a
prediction can be made at a given level in the tree.  Conversely, 
approaches are sensible when one receives weakly supervised data and wishes
to make a single good prediction; think of a grocery store deciding which of
a large collection of shaving creams to stock, a ranking problem with the
where one wishes to make sure that each individual's desired shaving cream
is stocked (in this context, this is the
guarantee~\eqref{eqn:partial-conformal-inference-coverage}). 
In that respect, our approach relates to the expanded admission problem~\cite{FischScJaBa21}, which allows for multiple labels to be ``admissible",  except that we do not observe to strongly supervised labels in our setting.
Consequently,
we motivate our distinct coverage guarantees from a set of impossibility
results we present in the next section.  
Additionally, we pay special
attention (see Sec.~\ref{sec:structured-prediction-scores}) to developing
practical algorithms that scale to large label spaces, an important
consideration with real-world weak supervision.  

%% Finally, we mention that one of our algorithmic contributions to come (as
%% well as the framework of \citet{BatesAnLeMaJo21}) builds on the recent
%% ``nested sets'' interpretation of conformal inference, due to
%% \citet{GuptaKuRa21}.  In this interpretation, conformal inference
%% is---arguably, more naturally---viewed as directly optimizing the prediction
%% set size, which facilitates a cleaner algorithmic development.

% Unlike our present paper,  their procedures leverage strongly supervised exchangeable data, whereas we adapt the conformal inference framework in the context of partially supervised data and propose practical methods that offer a coverage guarantee more suited to this type of tasks.

\paragraph{Notation}
Throughout this paper, $[n]$ stands for the set $\{1, 2, \dots, n \}$.  We
use $C: \mc{X} \toto \mc{Y}$ to denote a set valued mapping $C: \mc{X} \to
2^\mc{Y} \defeq \{ \weakset \mid \weakset \subset \mc{Y} \}$.  $P$ is either
the probability distribution generating the data $(X,Y, \rdmfcn) \in \mc{X}
\times \mc{Y} \times \Phi$, or equivalently $(X, Y, \weakset) \in \mc{X}
\times \mc{Y} \times 2^{\mc{Y}}$, as both notations are equivalent for our
purpose, and $U \sim \uniform[0,1]$ defines a uniform random variable on
$[0,1]$.  $\biject(U,V)$ is the set of bijections between two sets $U$ and
$V$, using the shorthand $\biject_K \defeq \biject([K],[K])$; $(i, j)$ is the
transposition of elements $i$ and $j \in [K]$, and for $k \in \N$, $\Delta_k
\defeq \{p\in \R_+^k \mid p^T \ones = 1\} $ is the space of probability
distributions on $[k]$.

% -*- mode: latex -*- %

\providecommand{\Pstrong}{P_{\textup{strong}}}
\providecommand{\Pweak}{P_{\textup{weak}}}
\providecommand{\detpart}{\textup{Det}}
\newcommand{\lebesgue}{\textup{Leb}}

\section{Conformal inference with weakly supervised data}
\label{sec:weak-conformal}

The starting point of this paper is to delineate realistic goals in
weak-conformal inference by determining what is actually possible---as we
show, a form of weak coverage---and what is unachievable.  To that end, we
demonstrate that strong coverage~\eqref{eqn:conformal-inference-guarantee},
while desirable, is impossible without further distributional
assumptions. We thus relax our goals, presenting a general weak
conformal scheme (Section~\ref{subsec:general-weak-conform-scheme}) that
relies on weakly supervised data.

%We then introduce Algorithm~\ref{alg:partial-supervised-conformal}, a simple scheme for achieving coverage~\eqref{eqn:partial-conformal-inference-coverage} that only requires any model trained on past data (either fully supervised or not).

\subsection{The strong coverage dilemma with partially supervised data}
\label{subsec:impossible-strong-coverage}

Consider a fully supervised classification setting with feature space $
\mc{X}$ and output space $\mc{Y}$, and let $\Pstrong$ be a joint
distribution on $\mc{X} \times \mc{Y}$ representing strong, as
opposed to weak, supervision. In this fully supervised setting, we
observe samples $(X_i,Y_i) \simiid
\Pstrong$, in contrast to observing the weak label set $\weakset
\subset \mc{Y}$ satisfying only $Y \in \weakset$.  We first require
definitions of consistency and validity.
\begin{definition}
  A probability distribution $P$ on $(X,Y, \weakset) \in \mc{X} \times \mc{Y}
  \times 2^{\mc{Y}}$ is \emph{consistent} if $P(Y \in W) =1$.  For any
  consistent distribution $P$, $\Pweak$ and $\Pstrong$ denote the marginal
  distributions of $(X,\weakset)$ and $(X,Y)$, respectively, when
  $(X,Y,\weakset) \sim P$.
\end{definition}

%The fully supervised setting is a special case of partial labeling, where $\weakset = \{ Y \}$ is always a singleton.

\begin{definition}
  Let $\what{C}_n : \mc{X} \toto \mc{Y}$ be a (potentially randomized)
  procedure depending only on the weakly supervised sample
  $(X_i,\weakset_i)_{i=1}^n \in \mc{X} \times 2^\mc{Y}$.  Then
  $\what{C}_n$ provides \emph{$(1-\alpha)$-strong distribution free coverage}
  if for
  all consistent distributions $P$ on $\mc{X} \times \mc{Y} \times 2^\mc{Y}$
  and $(X_i,Y_i,\weakset_i)_{i=1}^{n+1} \simiid P$, we have
  Eq.~\eqref{eqn:conformal-inference-guarantee}, i.e.
  \begin{align*}
    \P \left[ Y_{n+1} \in \what{C}_n(X_{n+1}) \right] \ge 1 - \alpha,
  \end{align*}
  and that $\what{C}_n$ provides \emph{$(1-\alpha)$-weak distribution free
  coverage} if it satisfies
  Eq.~\eqref{eqn:partial-conformal-inference-coverage}, i.e.
  \begin{align*}
    \P \left[ \weakset_{n+1} \cap \what{C}_n(X_{n+1}) \neq \emptyset \right]
    \ge 1 - \alpha.
  \end{align*}
\end{definition}

With these definitions, we can provide the (negative) result that, on
average over the data set, any procedure satisfying strong distribution free
coverage~\eqref{eqn:conformal-inference-guarantee} must include every
individual label $y \in \weakset_{n+1}$ with probability at least
$1-\alpha$. To formalize this, for a confidence set mapping
$\what{C}_n : \mc{X} \toto \mc{Y}$ constructed with
$(X_i, W_i)_{i=1}^n$, define the function
\begin{align*}
  p_n(x,y) \defeq \P \left( y  \in \what{C}_n(x) \right),
\end{align*}
which is the probability, taken over the weakly supervised sample $(X_i,
W_i)_{i=1}^n$, that $\what{C}_n(x)$ contains the potential label $y$.  We
prove the following theorem in
Appendix~\ref{proof-thm:lower-bound-strong-coverage}.
\begin{theorem}
  \label{thm:lower-bound-strong-coverage}
  Suppose that $\what{C}_n : \mc{X} \toto \mc{Y}$
  provides $(1-\alpha)$-strong distribution free coverage.
  Then for all consistent distributions $P$ on $\mc{X} \times
  \mc{Y} \times 2^\mc{Y}$,
  \begin{align*}
    \E_{  \{ (X_i, \weakset_i) \}_{i=1}^ {n+1} \simiid \Pweak}
    \left[ \inf_{y \in  \weakset_{n+1}}p_n(X_{n+1}, y)\right]
    \ge 1-\alpha.
  \end{align*}
\end{theorem}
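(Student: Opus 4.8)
The plan is to exploit the fact that strong coverage must hold uniformly over \emph{all} consistent distributions, whereas the procedure $\what{C}_n$---and hence the function $p_n$---depends only on the weak marginal $\Pweak$. The key observation is that I am free to construct an adversarial consistent distribution whose $(X,\weakset)$-marginal equals the given $\Pweak$ but whose conditional law of $Y$ given $(X,\weakset)$ concentrates on the label in $\weakset$ that $\what{C}_n$ is least likely to cover. Since altering the conditional law of $Y$ changes neither the distribution of the weak sample $(X_i,\weakset_i)_{i=1}^n$ nor that of $(X_{n+1},\weakset_{n+1})$, the quantity $p_n(x,y) = \P(y \in \what{C}_n(x))$ is unchanged, so strong coverage for this adversary becomes exactly the asserted bound on $p_n$.

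Concretely, I would fix a consistent $P$ with weak marginal $\Pweak$. For each $\epsilon > 0$, choose a (measurable) selection $y_\epsilon : \mc{X} \times 2^{\mc{Y}} \to \mc{Y}$ with $y_\epsilon(x,\weakset) \in \weakset$ and $p_n(x, y_\epsilon(x,\weakset)) \le \inf_{y \in \weakset} p_n(x,y) + \epsilon$, and define a new distribution $\tilde P_\epsilon$ that draws $(X,\weakset) \sim \Pweak$ and then sets $Y = y_\epsilon(X,\weakset)$ deterministically. By construction $\tilde P_\epsilon$ is consistent ($Y \in \weakset$ almost surely) and shares the weak marginal $\Pweak$ with $P$, so the function $p_n$ computed under $\tilde P_\epsilon$ equals the one computed under $P$.

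The next step is to compute the strong coverage probability under $\tilde P_\epsilon$. Because the weak sample (and any internal randomness of $\what{C}_n$) is independent of the test point $(X_{n+1},Y_{n+1},\weakset_{n+1})$, conditioning on $(X_{n+1},\weakset_{n+1}) = (x,\weakset)$ and taking expectation over the sample gives
\begin{align*}
  \P_{\tilde P_\epsilon}\!\left[ Y_{n+1} \in \what{C}_n(X_{n+1}) \mid X_{n+1} = x, \weakset_{n+1} = \weakset \right]
  & = p_n\!\left(x, y_\epsilon(x,\weakset)\right) \\
  & \le \inf_{y \in \weakset} p_n(x,y) + \epsilon,
\end{align*}
since under $\tilde P_\epsilon$ the label is the deterministic point $y_\epsilon(x,\weakset)$ while $\what{C}_n(x)$ has its original (sample-induced) law. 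Averaging over $(X_{n+1},\weakset_{n+1}) \sim \Pweak$ and invoking $(1-\alpha)$-strong coverage for the consistent distribution $\tilde P_\epsilon$ then yields
\begin{align*}
  1 - \alpha
  & \le \P_{\tilde P_\epsilon}\!\left[ Y_{n+1} \in \what{C}_n(X_{n+1}) \right] \\
  & \le \E_{\Pweak}\!\left[ \inf_{y \in \weakset_{n+1}} p_n(X_{n+1}, y) \right] + \epsilon .
\end{align*}
Letting $\epsilon \downarrow 0$ gives the claim, after noting that the integrand depends only on $(X_{n+1},\weakset_{n+1})$, whose marginal is $\Pweak$.

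The main obstacle is technical rather than conceptual: ensuring the adversarial conditional is a genuine, measurable probability distribution. When $\mc{Y}$ is finite the infimum is attained and I can take $y^\star(x,\weakset) \in \weakset$ minimizing $p_n(x,\cdot)$ directly; for general $\mc{Y}$ I would rely on a measurable-selection argument to produce the near-minimizers $y_\epsilon$, which is precisely why I phrase the construction with an $\epsilon$-slack rather than an exact minimizer. I would also verify carefully the independence/Fubini step that interchanges the conditional expectation over the sample with the deterministic label assignment, since that is what converts a coverage guarantee about $Y_{n+1}$ into a statement purely about $p_n$ and $\Pweak$.
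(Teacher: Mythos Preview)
Your proposal is correct and follows essentially the same adversarial-distribution argument as the paper: construct a consistent $\tilde P$ with the same weak marginal but with $Y$ chosen to (nearly) minimize $p_n(X,\cdot)$ over $\weakset$, then apply strong coverage under $\tilde P$. The only difference is cosmetic---the paper takes the exact $\argmin$ directly (which is unproblematic for finite $\mc{Y}$), whereas you introduce an $\epsilon$-slack and a measurable selection to handle general $\mc{Y}$ more carefully.
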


Theorem~\ref{thm:lower-bound-strong-coverage} essentially
states that
$\what{C}_n$ simultaneously includes each element
$y \in \weakset_{n+1}$ with probability at least $1-\alpha$.
The theorem is generally not improvable,
as $\weakset_{n+1}$ need not be a subset of $\what{C}_n(X_{n+1})$.  Indeed,
think of the trivial procedure $\what{C}_n$ that includes every label $y \in
\mc{Y}$ independently with probability $1-\alpha$: it obviously satisfies
strong distribution-free coverage but has no connection with
$\weakset_{n+1}$. As an additional immediate corollary, if the sets
$\weakset$ contain at least a fixed number of labels, then so does
$\what{C}_n(X_{n+1})$.

\begin{corollary}
  \label{cor:strong-cvg-size-dilemma}
  Suppose that $\what{C}_n : \mc{X} \toto \mc{Y}$ provides
  $(1-\alpha)$-strong distribution free coverage, and that $P(| \weakset|
  \ge L) = 1$ for some $L \ge 1$. Then
  \begin{align*}
    \E_{ \{ (X_i, \weakset_i) \}_{i=1}^ {n+1} \simiid \Pweak} \left[ \big|\what{C}_n(X_{n+1}) \big| \right] \ge L(1-\alpha).
  \end{align*}
\end{corollary}
\begin{proof}
  By Theorem~\ref{thm:lower-bound-strong-coverage},
  \begin{align*}
    \E\left[
      %% \E_{ \{ (X_i, \weakset_i) \}_{i=1}^ {n+1} \simiid \Pweak} \left[
      \big|\what{C}_n(X_{n+1}) \big|  \right]
    &= \E \left[ \sum_{y \in \mc{Y}} p_n(X_{n+1},y) \right]
    \ge \E\left[ |\weakset_{n+1}| \inf_{y \in \weakset_{n+1}} p_n(X_{n+1},y)  \right]
    \ge L(1-\alpha)
  \end{align*}
  as claimed.
  %%by Theorem~\ref{thm:lower-bound-strong-coverage}.
\end{proof}

An alternative perspective is to consider large-sample limits; often, the
procedure $\what{C}_n$ converges to some population confidence set mapping
$C : \mc{X} \toto \mc{Y}$ as $n \to \infty$, in that
\begin{equation}
  \label{eqn:limiting-C}
  \E\left[\big|\what{C}_n(X) \setdiff C(X)\big| \right] \to 0
\end{equation}
as $n \to \infty$, where the expectation is over both the construction of
$\what{C}_n$ and $X$ independent of $(X_i, W_i) \simiid \Pweak$. 
Typically, the limiting $C$ is a (nearly) deterministic
function\footnote{ In some cases, we use randomization over a single label
  to guarantee that $\P(Y \in C(X)) = 1 - \alpha$} of $x$; for example, the
standard construction~\cite[e.g.][]{VovkGaSh05, Lei14, BarberCaRaTi19a} 
takes $C(x) = \{y \in \mc{Y} \mid s(x, y) \le \tau\}$ for some scoring
function $s : \mc{X} \times \mc{Y} \to \R$ and threshold $\tau$, which is
deterministic.  In this case, we can show that we nearly have
$\weakset \subset C(X)$, so $C(X)$ must be large whenever $\weakset$ is. To
formalize, let
\begin{equation*}
  \detpart_C(x) \defeq \left\{ y
  \in \mc{Y} \mid \P(y \in C(x)) \in \{0, 1\} \right\}
\end{equation*}
be the labels that are deterministically in \emph{or} out of $C(x)$
(where the probability is over any randomization in the mapping $C$)
so that $\detpart_C(x) = \mc{Y}$ whenever $C$ is deterministic.
Then can show that $\weakset \subset C(X)$ with probability at least
$1 - \alpha$:
\begin{corollary}
  \label{cor:pop-conf-set-big-aSS}
  Suppose that $\what{C}_n : \mc{X} \toto \mc{Y}$ provides
  $(1-\alpha)$-strong distribution free coverage and satisfies the
  limit~\eqref{eqn:limiting-C}. Then
  \begin{align*}
    \P( \weakset \cap \detpart_C(X) \subset C(X) ) \ge 1-\alpha.
  \end{align*}
\end{corollary}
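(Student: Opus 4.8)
The plan is to transfer the finite-sample bound of Theorem~\ref{thm:lower-bound-strong-coverage} to its population analogue and then read off the claim. Write $p(x,y) \defeq \P(y \in C(x))$ for the population inclusion probability, so that $\detpart_C(x) = \{y : p(x,y) \in \{0,1\}\}$. The first step is a purely deterministic reduction of the target event. Conditionally on $(X,\weakset)$, membership of any $y \in \detpart_C(X)$ in $C(X)$ is non-random; hence $\weakset \cap \detpart_C(X) \subset C(X)$ holds almost surely exactly when every $y \in \weakset$ with $p(X,y)\in\{0,1\}$ in fact has $p(X,y)=1$, i.e. when no $y \in \weakset$ has $p(X,y)=0$. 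Since the event that no $y \in \weakset$ has $p(X,y)=0$ contains the event $\{\inf_{y\in\weakset} p(X,y) > 0\}$, it suffices to prove $\P(\inf_{y\in\weakset} p(X,y) > 0) \ge 1-\alpha$.

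Second, I would establish the population version of Theorem~\ref{thm:lower-bound-strong-coverage}, namely $\E_{(X,\weakset)\sim\Pweak}[\inf_{y\in\weakset} p(X,y)] \ge 1-\alpha$, by letting $n\to\infty$ in the theorem's bound $\E[\inf_{y\in\weakset_{n+1}} p_n(X_{n+1},y)] \ge 1-\alpha$. The bridge is the limit~\eqref{eqn:limiting-C}: for fixed $x$ and $y$, the probability that $y$ lies in the symmetric difference $\what{C}_n(x)\setdiff C(x)$ is at least $|p_n(x,y)-p(x,y)|$, since $\P(A\setdiff B)\ge|\P(A)-\P(B)|$ for any events $A,B$. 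Summing over $y$ and taking expectation over $X$ yields $\E_X\sum_{y\in\mc{Y}}|p_n(X,y)-p(X,y)| \le \E[|\what{C}_n(X)\setdiff C(X)|]\to 0$.

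Third, using the elementary stability estimates $|\inf_y f-\inf_y g|\le \sup_y|f-g|$ and $\sup_y|a_y|\le\sum_y|a_y|$, I bound $\E_{(X,\weakset)}|\inf_{y\in\weakset}p_n(X,y)-\inf_{y\in\weakset}p(X,y)| \le \E_X\sum_{y\in\mc{Y}}|p_n(X,y)-p(X,y)| \to 0$, so that $\inf_{y\in\weakset}p_n(X,y)$ converges to $\inf_{y\in\weakset}p(X,y)$ in $L^1(\Pweak)$; this transfers the lower bound $1-\alpha$ to the population quantity. Finally, since $Z \defeq \inf_{y\in\weakset} p(X,y)$ takes values in $[0,1]$, we have $\E[Z] = \E[Z\,\mathbf{1}\{Z>0\}] \le \P(Z>0)$, so $\P(\inf_{y\in\weakset}p(X,y)>0) \ge \E[\inf_{y\in\weakset}p(X,y)] \ge 1-\alpha$, which together with the first step proves the corollary.

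I expect the main obstacle to be the limiting argument of the second and third steps: one must make precise that the set-level convergence~\eqref{eqn:limiting-C} controls the pointwise inclusion probabilities $p_n(x,y)$ uniformly enough over $y\in\weakset$ to pass both the infimum and the expectation to the limit. The symmetric-difference inequality $\P(A\setdiff B)\ge|\P(A)-\P(B)|$ is the key device that converts the $L^1$ convergence of sets into $L^1$ convergence of the relevant infima; by contrast, the deterministic reduction of the first step and the final Markov-type inequality are routine.
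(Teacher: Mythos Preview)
Your proposal is correct and follows essentially the same approach as the paper's proof: both transfer Theorem~\ref{thm:lower-bound-strong-coverage} to the population quantity $p(x,y)=\P(y\in C(x))$ via the bound $\sum_{y}|p_n(X,y)-p(X,y)|\le \E[|\what{C}_n(X)\setdiff C(X)|]$ (the paper frames this step as Jensen's inequality, you as the symmetric-difference inequality $\P(A\setdiff B)\ge|\P(A)-\P(B)|$; these are the same thing), then conclude from $\E[\inf_{y\in\weakset}p(X,y)]\ge 1-\alpha$ and $Z\in[0,1]$ that $\P(\inf_{y\in\weakset}p(X,y)=0)\le\alpha$. Your deterministic reduction in the first step is slightly more carefully stated than the paper's, but the argument is otherwise identical.
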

\noindent
See
Appendix~\ref{proof-cor:pop-conf-set-big-aSS} for a proof.

Theorem~\ref{thm:lower-bound-strong-coverage} and its corollaries suggest
that any procedure achieving strong (distribution-free) coverage necessarily
produces inefficient (large) confidence sets when one uses only weakly
supervised data. Even in cases where there is implicitly a single correct
label, such as the structured prediction problems \citet{CabannesRuBa20}
consider, where the weak labels $w$ that a single $x$ supports (those for
which $\P(\weakset = w \mid X = x) > 0$) have a single label $y$ in their
intersection $\cap_{w : \P(w \mid x) > 0} \{w\} = \{y\}$, never disallow
large weak sets $\weakset$. We thus must take a different tack, targeting
new coverage desiderata.

\paragraph{An aside: regression.}

Although we primarily focus on classification (where $\mc{Y}$ is finite),
our development applies equally to regression or other problems with
continuous or infinite response sets, e.g., $\mc{Y} = \R$, as nothing in
Theorems~\ref{thm:lower-bound-strong-coverage}
or~\ref{thm:partial-supervised-conformal} requires $\mc{Y}$ to be any
particular space. We leverage this in our
experiments (Sec.~\ref{sec:experiments}) in the sequel to give numerical
examples, touching on the $\R$-valued case here to
demonstrate the analogues of our theoretical results.

As an example, weak sets $\weakset$ in the continuous case may
be intervals, arising, for example, from measurements with limited
resolution.  
We adapt Corollary~\ref{cor:strong-cvg-size-dilemma} to regression by
replacing counting measure with the Lebesgue measure $\lebesgue$, where
the response set $\mc{Y} = \R$ and the weak sets $\weakset \subset \R$. Assuming
that the weak sets all have a minimal volume, any valid confidence
set mapping necessarily is large (on average) as well:

\begin{corollary}
  \label{cor:strong-cvg-size-dilemma-regression}
  Suppose that $\what{C}_n : \mc{X} \toto \mc{Y} = \R$ provides
  $(1-\alpha)$-strong distribution free coverage, and let $L > 0$. If
  $P(\lebesgue(\weakset_i) \ge L ) = 1$, for $i=1,\ldots,n+1$, then
  \begin{align*}
    \E \Big[\lebesgue\big(\what{C}_n(X_{n+1}) \big) \Big] \ge L(1-\alpha).
  \end{align*}
\end{corollary}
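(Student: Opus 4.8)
The plan is to follow the proof of Corollary~\ref{cor:strong-cvg-size-dilemma} in structure, swapping the counting measure for $\lebesgue$ and the label sum for an integral over $\R$. The key identity is that, writing the volume as $\lebesgue(\what{C}_n(X_{n+1})) = \int_\R \mathbf{1}\{y \in \what{C}_n(X_{n+1})\}\,dy$ and integrating out the training sample $(X_i, \weakset_i)_{i=1}^n$ first, Tonelli's theorem gives
\[
  \E\big[\lebesgue(\what{C}_n(X_{n+1}))\big]
  = \E_{(X_{n+1}, \weakset_{n+1})}\Big[\int_\R p_n(X_{n+1}, y)\,dy\Big],
\]
since the integrand is nonnegative and $p_n(x,y) = \P(y \in \what{C}_n(x))$ is exactly the inner expectation over the (randomized) construction of $\what{C}_n$.

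From here I would restrict the domain of integration from $\R$ down to $\weakset_{n+1}$ and lower bound the integrand by its infimum over that set: $\int_\R p_n(X_{n+1}, y)\,dy \ge \int_{\weakset_{n+1}} p_n(X_{n+1}, y)\,dy \ge \lebesgue(\weakset_{n+1}) \inf_{y \in \weakset_{n+1}} p_n(X_{n+1}, y)$. Invoking the hypothesis $\lebesgue(\weakset_{n+1}) \ge L$ almost surely and taking expectations, the factor $L$ pulls out, leaving $\E[\lebesgue(\what{C}_n(X_{n+1}))] \ge L \cdot \E[\inf_{y \in \weakset_{n+1}} p_n(X_{n+1}, y)]$. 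Theorem~\ref{thm:lower-bound-strong-coverage} bounds the remaining expectation below by $1-\alpha$, which yields the claim.

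The only place where the regression argument genuinely differs from the finite-$\mc{Y}$ case is the justification of the interchange of expectation and the $dy$-integral, together with the measurability of $y \mapsto p_n(x,y)$ and of $x \mapsto \inf_{y \in \weakset} p_n(x,y)$. Because $(x,y) \mapsto \mathbf{1}\{y \in \what{C}_n(x)\}$ is nonnegative, Tonelli applies directly and sidesteps any integrability concern, so the main (mild) obstacle is purely measurability: one needs $\what{C}_n(x)$ to be measurable in $y$ for each $x$ and jointly measurable in $(x,y)$---which holds for the threshold-type sets $\{y : \score(x,y) \le \tau\}$ used throughout---and $\weakset_{n+1}$ to be Lebesgue measurable so that $\lebesgue(\weakset_{n+1})$ and the restricted integral are well defined. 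These are the exact analogues of the (trivial) measurability facts underlying the counting-measure proof, so no new ideas are required.
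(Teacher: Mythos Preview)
Your proposal is correct and follows exactly the approach the paper intends: the paper does not give an explicit proof of this corollary but states that one adapts Corollary~\ref{cor:strong-cvg-size-dilemma} by replacing counting measure with $\lebesgue$, which is precisely what you do---rewrite the volume as an integral, apply Tonelli to pull out $p_n$, restrict to $\weakset_{n+1}$, bound by $\lebesgue(\weakset_{n+1})\inf_{y\in\weakset_{n+1}}p_n(X_{n+1},y)$, and invoke Theorem~\ref{thm:lower-bound-strong-coverage}. Your added remarks on measurability and the use of Tonelli are a welcome clarification that the paper leaves implicit.
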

\noindent
An analogy to Corollary~\ref{cor:pop-conf-set-big-aSS} follows
nearly immediately from Corollary~\ref{cor:strong-cvg-size-dilemma-regression},
just as Corollary~\ref{cor:pop-conf-set-big-aSS} follows
Corollary~\ref{cor:strong-cvg-size-dilemma}. That is,
if there exists a deterministic set-valued mapping $C : \mc{X} \toto \mc{Y}$
for which $\E[\lebesgue(\what{C}_n(X) \setdiff C(X))] \to 0$, then
$\P(\weakset \subset \what{C}_n(X_{n+1})) \ge 1 - \alpha$.

%% Besides, even low-noise assumptions akin to~\cite{CabannesRuBa20},
%% where, conditionally on $x \in \mc{X}$, there is exactly one label that
%% always belongs to $\weakset$, do not prevent $\weakset$ from getting very
%% large.

%% Theorem~\ref{thm:lower-bound-strong-coverage} and Corollary~\ref{cor:pop-conf-set-big-aSS} thus suggest that in presence of partial supervision, one should not necessarily target strong distribution free coverage. 
%% For instance,  low-noise conditions akin to~\cite{CabannesRuBa20} assume that, conditionally on $x \in \mc{X}$, there is exactly one label $y^\star(x) \in \mc{Y}$ such that $\P(y^\star(x) \in \weakset \mid X=x) = 1$, but that does not prevent $\weakset$ to grow prohibitively large: in the most unfavorable case, we would have $y^\star(x) = 1$, and $\weakset \in \{ [K] \setminus \{ j \} \mid 2\le j\le K\}$, which leads to $|\weakset| = K-1$.

\subsection{A general weak-conformal scheme via scoring functions}
\label{subsec:general-weak-conform-scheme}

The theoretical limitations we identify motivate the weak
coverage~\eqref{eqn:partial-conformal-inference-coverage} we target instead
of the strong converage~\eqref{eqn:conformal-inference-guarantee}. Following
our discussion above, the new coverage definition stems from two desiderata:
if the problem is actually low-noise and there already exists a highly
predictive model we can leverage to build our confidence sets---roughly,
that conditional on $x$, a single label $y$ belongs to the weak sets
$\weakset$ with high probability and a model exists that can predict this
$y$---then while we should return this singleton even if we cannot guarantee
strong coverage. In the alternative perspective that we care only about the
value $\rdmfcn(Y)$---recall the weak
set~\eqref{eqn:weak-set-from-function}---providing any $y$ satisfying
$\rdmfcn(y) = \weaklabel$ should suffice for prediction.
We turn now to provide our general weak conformalization scheme.

Our starting point is via the typical output of a machine-learned model, a
scoring function $\score : \mc{X} \times \mc{Y} \to \R$ that ranks potential
labels (or responses) $y$ for an input example $x \in \mc{X}$.  We treat
$\score(x,y)$ is a \textit{non-conformity} score, meaning the model predicts
that values of $y$ for which $s(x,y)$ is small are more likely. Standard
examples of such scoring functions include $\score (x,y) \defeq -| y -
\what{\mu}(x)|$ in regression, where $\what{\mu} : \mc{X} \to \R$ predicts
$y \mid x$; or $\score (x,y) \defeq - \log p_y(x)$ in multiclass
classification, where $p_y(x)$ models the conditional probability of $y \mid
x$.  Throughout this section, we adopt a split-conformal
perspective~\cite{VovkGaSh05, BarberCaRaTi19a}, assuming the practitioner
provides a scoring function independent of the sample $(X_i, \rdmfcn_i,
\weaklabel_i)_{i=1}^n$ (the sample would typically be a \emph{validation
  set}), and we show how to transform any such scoring function into a valid
(weakly-covering) confidence mapping.

Algorithm~\ref{alg:partial-supervised-conformal} starts from a simple
observation, assuming that the scoring function $\score$ is relatively
good. Given a query function $\rdmfcn$ and weak label
$\weaklabel$---equivalently, the weak set $\weakset = \{y \mid \rdmfcn(y) =
\weaklabel\}$---the most likely label should typically be the $y$ satisfying
$\rdmfcn(y) = \weaklabel$ minimizing $\score(x, y)$.

%% Consider, for a partial label $\weaklabel \in \mc{Y}^\weak$
%% and a random query function $\rdmfcn$, the \textit{most likely label} $y \in
%% \mc{Y}$ to be the one with minimal score $\score(x,y)$, while satisfying
%% $\rdmfcn(y) = \weaklabel$: if the scoring function induces a consistent
%% ordering of labels, i.e., that higher probability labels have lower scores,
%% there is a high chance that $y$ is the strong label $Y$ itself.  The partial
%% supervision allows us to restrict ourselves to a smaller set $\weakset$ of
%% potential labels (depending on $\weaklabel$ and $\rdmfcn$), while the
%% scoring function provides an order for the remaining ones.

\begin{algorithm}
  \caption{Partially supervised conformalization}
  \label{alg:partial-supervised-conformal}
  \begin{algorithmic}
    \STATE {\bf Input:} sample $\{(X_i,\weaklabel_i, \rdmfcn_i)
    \}_{i=1}^{n}$; score function $\score: \mc{X} \times \mc{Y} \to \R$
    independent of the sample; desired coverage $1-\alpha \in (0,1)$

    \STATE For each $i \in [n]$, compute
    \begin{align}
      \label{eqn:min-partial-score}
      \scorerv_i \defeq \min_{y: ~ \rdmfcn_i(y) = \weaklabel_i} \score(X_i, y).
    \end{align}

    \STATE Set 
    $\what{t}_n \defeq (1+n^{-1}) (1-\alpha) \text{-quantile of} ~ \{ \scorerv_i  \}_{i=1}^n$.
    \STATE \textbf{Return:} predictive set mapping
    $\what{C}_n : \mc{X} \toto \mc{Y}$
    defined by
    \begin{equation*}
      \what{C}_n(x) \defeq
      \left\{ y \in \mc{Y} \mid \score(x,y) \le \what{t}_n \right\}.
    \end{equation*}
  \end{algorithmic}
\end{algorithm}

\noindent
Note that a completely equivalent scheme to the
scores~\eqref{eqn:min-partial-score} with label mappings $\rdmfcn$ and weak
labels $\weaklabel_i$ uses weak sets $\weakset_i$, where we
replace the scores~\eqref{eqn:min-partial-score} with
\begin{equation*}
  \scorerv_i \defeq \min_{y \in \weakset_i} \score(X_i, y).
\end{equation*}
In either case,
Algorithm~\ref{alg:partial-supervised-conformal} achieves valid
coverage weak~\eqref{eqn:partial-conformal-inference-coverage}:

\begin{theorem}
  \label{thm:partial-supervised-conformal}
  Let $(X_i, Y_i, \rdmfcn_i)_{i=1}^{n+1} \simiid P$ and $\weaklabel_i =
  \rdmfcn_i(Y_i)$ for $i \in [n+1]$. Then
  Algorithm~\ref{alg:partial-supervised-conformal} returns a confidence set
  mapping satisfying
  \begin{align*}
    \P \left[ \text{There exists } y \in \what{C}_n(X_{n+1})~ s.t ~ \rdmfcn_{n+1}(y) = \rdmfcn_{n+1}(Y_{n+1})  = \weaklabel_{n+1}  \right] \ge 1-\alpha.
  \end{align*}
\end{theorem}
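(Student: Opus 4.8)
The plan is to reduce the stated weak-coverage event to a one-dimensional statement about the calibration scores, after which the claim is exactly the standard split-conformal quantile guarantee. The only genuine content is the reduction; everything downstream is textbook.

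First I would make the key observation that, by the definition of $\what{C}_n$ in Algorithm~\ref{alg:partial-supervised-conformal}, the event
\begin{equation*}
  \left\{ \exists\, y \in \what{C}_n(X_{n+1}) ~\text{s.t.}~ \rdmfcn_{n+1}(y) = \weaklabel_{n+1} \right\}
\end{equation*}
is exactly the event that some $y$ with $\rdmfcn_{n+1}(y) = \weaklabel_{n+1}$ satisfies $\score(X_{n+1},y) \le \what{t}_n$. Since $\mc{Y}$ is finite (or, more generally, the minimum over the weak set is attained), this holds if and only if $\min_{y:\, \rdmfcn_{n+1}(y) = \weaklabel_{n+1}} \score(X_{n+1}, y) \le \what{t}_n$. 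Extending the score~\eqref{eqn:min-partial-score} to the test index by setting
\begin{equation*}
  \scorerv_{n+1} \defeq \min_{y:\, \rdmfcn_{n+1}(y) = \weaklabel_{n+1}} \score(X_{n+1}, y),
\end{equation*}
the target probability becomes precisely $\P(\scorerv_{n+1} \le \what{t}_n)$. Note this minimum is over a nonempty set, as $Y_{n+1}$ itself satisfies $\rdmfcn_{n+1}(Y_{n+1}) = \weaklabel_{n+1}$.

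Second, I would establish exchangeability of $\scorerv_1, \dots, \scorerv_{n+1}$. Each $\scorerv_i$ is a fixed deterministic function $g(X_i, Y_i, \rdmfcn_i) = \min_{y:\, \rdmfcn_i(y) = \rdmfcn_i(Y_i)} \score(X_i, y)$ of the tuple $(X_i, Y_i, \rdmfcn_i)$, where $\score$ is independent of the sample by the split-conformal assumption. Because $(X_i, Y_i, \rdmfcn_i)_{i=1}^{n+1} \simiid P$, the scores $\scorerv_1, \dots, \scorerv_{n+1}$ are i.i.d.\ and hence exchangeable. Finally, having reduced to the exchangeable scalar setting, the conclusion follows from the usual conformal quantile lemma: the threshold $\what{t}_n$, being the $(1+n^{-1})(1-\alpha)$-empirical quantile of $\scorerv_1, \dots, \scorerv_n$, coincides with their $\lceil (n+1)(1-\alpha)\rceil$-th order statistic (and equals $+\infty$ when $\alpha < 1/(n+1)$, rendering coverage trivial), so that the rank of $\scorerv_{n+1}$ among all $n+1$ scores is sub-uniform and $\P(\scorerv_{n+1} \le \what{t}_n) \ge \lceil (n+1)(1-\alpha)\rceil/(n+1) \ge 1-\alpha$; see~\cite{VovkGaSh05, BarberCaRaTi19a}.

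There is no deep obstacle here—the ``hard part'' is really just the recognition in the first step that covering the weak set is equivalent to covering its minimum-score representative, which is exactly what the score~\eqref{eqn:min-partial-score} is engineered to encode. The only points demanding care are minor and measure-theoretic: attainment of the minimum defining $\scorerv_{n+1}$ (immediate for finite $\mc{Y}$) and the treatment of ties and of the boundary case $\alpha < 1/(n+1)$ in the quantile lemma, both of which are handled by the standard sub-uniform rank argument.
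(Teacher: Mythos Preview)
Your proposal is correct and follows essentially the same argument as the paper: reduce the weak-coverage event to $\scorerv_{n+1} \le \what{t}_n$ via the minimum-score characterization, observe that $(\scorerv_i)_{i=1}^{n+1}$ are i.i.d.\ because $\score$ is independent of the sample, and invoke the standard split-conformal quantile lemma. The paper's proof is slightly terser (it cites \cite{TibshiraniBaCaRa19} for the final step), but the structure and content are the same.
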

\begin{proof}
  Let $\scorerv_i \defeq \min_{\rdmfcn_i(y) = \weaklabel_i} \score(X_i,y)$
  for each $i \in [n+1]$.  By definition of $\what{C}_n$, we have
  \begin{align*}
    \left\{y\in \what{C}_n(X_{n+1}) \mid \rdmfcn_{n+1}(y)
    = \weaklabel_{n+1} \right\}
    = \left\{y\in \mc{Y} \mid \rdmfcn_{n+1}(y) = \weaklabel_{n+1}
    \text{ and } \score(X_{n+1},y) \le \what{t}_n \right\},
  \end{align*}
  which is nonempty if and only if
  \begin{align*}
    \scorerv_{n+1} \defeq
    \min_{\rdmfcn_{n+1}(y)  = \weaklabel_{n+1}} \score(X_{n+1}, y) \le \what{t}_n.
  \end{align*}
  As $\{ \scorerv_i \}_{i=1}^{n+1}$ are i.i.d., this occurs with
  probability at least $1-\alpha$~\cite[e.g.][Lemma 1]{TibshiraniBaCaRa19}.
\end{proof}

% AA TODO: need to double check
%The argument is similar to that of Theorem 2 in Barber et al.~(2020).

\section{Constructing effective conformal prediction sets}
\label{sec:weak-supervised-scores}

Algorithm~\ref{alg:partial-supervised-conformal} provides a generic method
for conformalization in the presence of partially supervised data, and it
makes no assumptions on the input score function
$\score$. Though the coverage
guarantee~\eqref{eqn:partial-conformal-inference-coverage} holds
regardless, we can delineate a few additional desiderata that
the predictive sets and score functions $\score$ should satisfy
to make them more practically useful, which is our focus in this section:
\begin{itemize}
\item The score function $\score$ must allow the practitioner to efficiently
  carry out the computation of the partial infimum
  scores~\eqref{eqn:min-partial-score}.
\item The lower level sets $\what{C}_n(x) = \{y \in \mc{Y} \mid \score(x, y)
  \le \what{t}_n\}$ should be efficiently representable.
\item The confidence sets $\what{C}_n(x)$ should be small, as smaller
  confidence sets (for a fixed confidence level $\alpha$) carry more
  information.
\end{itemize}
Deferring our discussion of computational efficiency to
Section~\ref{sec:structured-prediction-scores}, in this section we only
focus on the last desideratum, implicitly assuming computation is tractable
(e.g., if $\mc{Y}$ is small). We first
(Sec.~\ref{subsec:size-optimal-scores}) develop conditions sufficient for
optimally-sized confidence sets to even exist---a few subtleties
arise---before giving greedy algorithms for confidence set-size
minimization, describing their properties, and providing a few optimality
guarantees in Sections~\ref{subsec:greedy-algorithm}
and~\ref{subsec:submodular-optim-bound}.

\subsection{Size-optimal scoring mechanism}
\label{subsec:size-optimal-scores}

As in standard approaches to conformal inference~\cite{Lei14, LeiWa14,
  BarberCaRaTi19a}, we aim to construct a confidence set mapping
$\what{C}_n$ with minimal average size over $X \sim P_X$. Our starting point
is simply to define size-optimality, where to achieve exact coverage and
size guarantees, we allow randomization of our confidence sets via an
independent variable $U \sim \uniform[0,1]$.

\newcounter{savecounter}
\setcounter{savecounter}{\theequation}

\begin{definition}
  A randomized confidence set mapping $C_{1-\alpha} : \mc{X} \times [0,1]
  \toto \mc{Y}$ is \emph{marginally size-optimal} at level
  $\alpha$ if it solves
  \renewcommand{\theequation}{\textsc{Marg}}
  \begin{equation}
    \label{eqn:marginal-optimal-score-partial-supervision}
    \begin{split}
      \minimize_{C : \mc{X} \times [0,1] \toto \mc{Y}}
      & ~ \E_{X, U \sim \uniform[0,1]} \left[ |C(X,U)| \right] \\
      \subjectto & ~
      \P( \weakset \cap C(X,U) \neq \emptyset) \ge 1 - \alpha.
    \end{split}
  \end{equation}
  It is \emph{conditionally size-optimal} at level $\alpha$
  if for almost every
  $x \in \mc{X}$, $C(x, \cdot)$ solves
  \renewcommand{\theequation}{\textsc{Cond}}
  \begin{align}
      \label{eqn:conditional-optimal-score-partial-supervision}
      \minimize_{C: [0,1] \toto \mc{Y}} ~
      \left\{ \E_{U \sim \uniform[0,1]} \left[ |C(U)| \right] ~\mbox{s.t.}~
      ~ \P( \weakset \cap C(U) \neq \emptyset \mid X=x) \ge 1- \alpha \right\}.
    \end{align}
\end{definition}

\setcounter{equation}{\thesavecounter}

Even with full knowledge of the distribution $P$, techniques for finding
marginally size-optimal confidence
sets~\eqref{eqn:marginal-optimal-score-partial-supervision} are not
immediately apparent; as a consequence, we focus on the conditional case
first. Even in this case, it is in general non-trivial to obtain smallest
confidence sets. Yet as we follow the standard
practice~\cite{BarberCaRaTi19a, Lei14, VovkGaSh05} in conformal prediction
of defining confidence sets via the scores $\score$ (recall
Alg.~\ref{alg:partial-supervised-conformal}) as
$C_t(x) = \{y \mid \score(x, y) \le t\}$, our confidence
sets have the natural nesting property that
$C_t(x) \subset C_{t'}(x)$ whenever $t < t'$. Abstracting away the
particular form of $C$ to enable a purely set-based focus,
we thus consider nested confidence sets, where we show that
optimality guarantees are possible.

%% %\begin{definition}
%% %A randomized confidence set mapping  $C^\text{Cond}_{1-\alpha}  : \mc{X} \times [0,1] \toto \mc{Y}$ is conditionally size-optimal if it satisfies, for almost every $x \in \mc{X}$,
%% %\begin{align}
%% %\label{eqn:conditional-optimal-score-partial-supervision}
%% %C^\text{Cond}_{1-\alpha}(x, \cdot) \defeq &\argmin_{C: [0,1] \toto \mc{Y}} ~ \left\{ \E_{U \sim \mc{U}[0,1]} |C(U)| \mid
%% % ~ \P( \weakset \cap C(U) \neq \emptyset \mid X=x) \ge 1- \alpha \right\}.
%% %\end{align}
%% %\end{definition}
%% Computing a conditionally size-optimal mapping is not necessarily efficiently doable either, as it may require enumerating all the subsets of $\mc{Y}$, but we develop in Section~\ref{subsec:greedy-algorithm} an algorithm that computes it exactly under some conditions over $P$, and in general returns a close approximation.
%% % AA NOTE: what is \mc{Y}^\star?  should this just be \mc{Y}?
%% %Moreover, for conformalization purposes, we need to use a sequence of potential confidence set mappings that satisfies the following nestedness property.
%% We next introduce the following nestedness property. 

\begin{definition}
  \label{assptn:nested-confidence-sets}
  A collection of mappings $\{ C_\eta : \mc{X} \times [0,1] \toto
  \mc{Y}\}_{\eta \in (0,1)}$ is \emph{nested} if
  \begin{equation*}
    P ( C_{\eta_1}(X,U) \subset C_{\eta_2}(X,U) ) = 1
    ~ \text{for all} ~  0< \eta_1 < \eta_2 < 1.
  \end{equation*}
\end{definition}

\newcommand{\nest}{^\textup{nest}}

There is an immediate equivalence between
score-based conformalization schemes and nested collections of
confidence mappings~\cite[cf.][]{GuptaKuRa19}: we simply define
\begin{align}
  \label{eqn:score-optimal-conditional}
  \score\nest(x, y , u)
  \defeq \inf \left\{ \eta\in (0,1) \mid y \in C_{\eta}(x, u) \right\}.
\end{align}
The next lemma formalizes this equivalence (see
Appendix~\ref{proof:equivalence-score-nested} for a proof).
\begin{lemma}
  \label{lemma:equivalence-score-nested}
  Assume the confidence set mappings $\{C_\eta \}_{\eta \in (0,1)}$ are nested
  and $s\nest(x,y,U)$
  has continuous distribution for $U \sim \uniform[0,1]$. Then
  \begin{align*}
    C_{\eta}(x, U) = \left\{ y \in \mc{Y} \mid \score\nest(x,y,U) \le \eta
    \right\} ~~ \mbox{with~} U\mbox{-probability}~ 1.
  \end{align*}
\end{lemma}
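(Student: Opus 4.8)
The plan is to fix an $x$ (one for which the almost-sure nesting survives---i.e.\ outside a $P_X$-null set extracted via Fubini) together with the threshold $\eta$, and to prove the two set inclusions separately, reducing to a label-by-label analysis. Since we only need equality of two subsets of $\mc{Y}$ with $U$-probability one, and in the classification settings of interest $\mc{Y}$ is countable, it suffices to show for each fixed $y$ that
\begin{equation*}
  \P_U\big(\score\nest(x,y,U)\le\eta,\; y\notin C_\eta(x,U)\big)=0,
\end{equation*}
and then take a union bound over $y\in\mc{Y}$ (the general case reduces to this via joint measurability of the family). The forward inclusion $C_\eta(x,U)\subset\{y:\score\nest(x,y,U)\le\eta\}$ holds deterministically for every $u$: if $y\in C_\eta(x,u)$, then $\eta$ belongs to the set $\{\eta'\mid y\in C_{\eta'}(x,u)\}$ whose infimum defines $\score\nest$, so $\score\nest(x,y,u)\le\eta$. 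No probabilistic input is needed here.

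The reverse inclusion is the substance. For fixed $y$ I would split the bad event into the strict part $\{\score\nest(x,y,U)<\eta,\ y\notin C_\eta(x,U)\}$ and the boundary $\{\score\nest(x,y,U)=\eta\}$. On the strict part, $y\in C_{\eta'}(x,U)$ for some $\eta'<\eta$ while $y\notin C_\eta(x,U)$, which is exactly a failure of nesting for the pair $(\eta',\eta)$. The nesting hypothesis only supplies $\P(C_{\eta_1}(X,U)\subset C_{\eta_2}(X,U))=1$ for each \emph{fixed} pair, so to convert this into a single probability-one statement I would first restrict the levels to a countable dense set $D\subset(0,1)$ (the rationals), intersect the countably many a.s.\ nesting events over pairs in $D$, and use Fubini to fix $x$; on the resulting $U$-probability-one event the map $\eta'\mapsto\mathbf 1\{y\in C_{\eta'}(x,u)\}$ is monotone along $D$, which together with the definition of $\score\nest$ as an infimum yields $\{\score\nest(x,y,U)<\eta\}\subset C_\eta(x,U)$ up to a $U$-null set. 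The boundary term $\{\score\nest(x,y,U)=\eta\}$ is precisely where the hypothesis that $\score\nest(x,y,U)$ has a continuous distribution enters: it has $U$-probability zero for every $\eta$, so it contributes nothing. Summing the two null contributions and unioning over $y\in\mc{Y}$ gives the claim.

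I expect the reverse inclusion to be the main obstacle, for two linked reasons. First, nesting is only an almost-sure statement for each individual pair of levels, so passing to a genuine all-levels monotonicity requires the continuum-to-countable reduction above; one must be careful that the witness $\eta'$ for $\score\nest(x,y,u)<\eta$ is $u$-dependent, so the pairwise nesting cannot be applied naively at a single random level, and the reduction to $D$ is what makes the union of exceptional events countable. Second, the continuity-of-distribution hypothesis is genuinely indispensable at the threshold: were $\score\nest(x,y,U)$ to carry an atom at $\eta$, the family $C$ could fail to be right-continuous there, and the inclusion $\{\score\nest(x,y,U)\le\eta\}\subset C_\eta(x,U)$ could be strict on a set of positive $U$-probability. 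The randomization by $U$ is exactly what smooths the threshold so that the sublevel sets of the induced score $\score\nest$ recover the original nested family almost surely.
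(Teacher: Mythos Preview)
Your proposal is correct and follows essentially the same route as the paper: the forward inclusion is immediate from the definition of $\score\nest$, and for the reverse you show that $\{\score\nest(x,y,U)\le\eta,\ y\notin C_\eta(x,U)\}$ is contained (modulo nesting-failure null sets) in $\{\score\nest(x,y,U)=\eta\}$, which the continuity hypothesis annihilates. The paper's proof is terser and implicitly treats the nesting as holding for all $u$ and all level pairs simultaneously, whereas your countable-dense-levels reduction makes this passage rigorous---a genuine gain in care, but not a different strategy.
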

\noindent
That is, obtaining weak coverage for nested confidence mappings
is equivalent to obtaining weak coverage using the
scoring function $\score\nest$, which
Alg.~\ref{alg:partial-supervised-conformal} provides, that is,
equivalent to choosing the smallest $\eta \in (0, 1)$ such that $\P(\weakset
\cap C_\eta(X, U) \neq \emptyset) \ge 1 - \alpha$.
A second useful distributional property of the nested
scores~\eqref{eqn:score-optimal-conditional} is that, assuming
the confidence sets $C_\eta$ are conditionally valid, we can provide
strong distributional results on $\score\nest$. To make this
precise, we say that $C_\eta$ is
\emph{conditionally valid for the weak labels $\weakset$} if
for each $\eta \in (0, 1)$ and with $\P$-probability $1$ over $X$,
\begin{equation}
  \label{eqn:conditional-weak-validity}
  \P\left(C_\eta(x, U) \cap \weakset \neq \emptyset
  \mid X = x\right) = \eta.
\end{equation}
We then have the following
uniformity property as an immediate
consequence of Lemma~\ref{lemma:equivalence-score-nested}:
\begin{lemma}
  \label{lemma:scores-uniform}
  In addition to the conditions of Lemma~\ref{lemma:equivalence-score-nested},
  assume that
  $C_\eta$ is conditionally valid~\eqref{eqn:conditional-weak-validity}
  for the weak label $\weakset$. Then
  the minimum score~\eqref{eqn:min-partial-score} is independent of $X$ and
  satisfies
  \begin{align*}
    \inf_{y \in \weakset} s\nest(x, y, U)  \sim \uniform[0,1].
  \end{align*}
\end{lemma}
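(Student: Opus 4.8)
The plan is to reduce the statement to a clean fact about the conditional law of the infimum score $T(x,U) \defeq \inf_{y \in \weakset} \score\nest(x,y,U)$ given $X = x$. The bridge is Lemma~\ref{lemma:equivalence-score-nested}, which under the standing nestedness and continuity hypotheses identifies $C_\eta(x,U) = \{y \in \mc{Y} \mid \score\nest(x,y,U) \le \eta\}$ with $U$-probability $1$, for each fixed $\eta \in (0,1)$. Consequently, on this probability-one event the weak-coverage event $\{C_\eta(x,U) \cap \weakset \neq \emptyset\}$ coincides with $\{\exists\, y \in \weakset : \score\nest(x,y,U) \le \eta\}$, so that these two events have the same conditional probability. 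First I would rewrite the conditional validity hypothesis~\eqref{eqn:conditional-weak-validity}, $\P(C_\eta(x,U) \cap \weakset \neq \emptyset \mid X = x) = \eta$, as a statement about $\P(\exists\, y \in \weakset : \score\nest(x,y,U) \le \eta \mid X = x) = \eta$, and then convert the latter into inequalities on the conditional distribution function of $T(x,U)$.

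The main subtlety, and the only place that needs care, is that the infimum defining $T$ need not be attained, so $\{T(x,U) \le \eta\}$ and $\{\exists\, y \in \weakset : \score\nest(x,y,U) \le \eta\}$ are not literally the same event. I would handle this with the elementary sandwich
\[
  \{T(x,U) < \eta\} \subset \{\exists\, y \in \weakset : \score\nest(x,y,U) \le \eta\} \subset \{T(x,U) \le \eta\},
\]
where the left inclusion uses the defining property of the infimum (if $T(x,U) < \eta$ then some $y \in \weakset$ has score strictly below $\eta$) and the right inclusion is immediate since any witnessing $y$ forces the infimum to be at most $\eta$. Combining the sandwich with the event identity above and with conditional validity yields, for every $\eta \in (0,1)$ and almost every $x$ (i.e.\ with $\P$-probability one over $X$),
\[
  \P(T(x,U) < \eta \mid X = x) \le \eta \le \P(T(x,U) \le \eta \mid X = x).
\]

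Finally I would squeeze the conditional distribution function $F(\eta) \defeq \P(T(x,U) \le \eta \mid X = x)$ to equal $\eta$. The right inequality gives $F(\eta) \ge \eta$ directly. For the reverse, since $F$ is nondecreasing and $\{T(x,U) \le \eta\} \subset \{T(x,U) < \eta'\}$ for any $\eta' > \eta$, the left inequality applied at $\eta'$ gives $F(\eta) \le \P(T(x,U) < \eta' \mid X = x) \le \eta'$; letting $\eta' \downarrow \eta$ yields $F(\eta) \le \eta$. Hence $F(\eta) = \eta$ on $(0,1)$, so $T(x,U)$ is $\uniform[0,1]$ conditionally on $X = x$. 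Because this holds for almost every $x$ with the same (uniform) conditional law, $T$ is independent of $X$ and $T \sim \uniform[0,1]$, as claimed. It is worth noting that no separate continuity assumption on the law of $T$ is needed: uniformity — and hence the absence of atoms — falls out automatically from the squeeze, the only genuine obstacle being to keep the strict-versus-nonstrict inequalities correctly aligned through the non-attainment sandwich.
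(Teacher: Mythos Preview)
Your proposal is correct and follows essentially the same route as the paper's two-line proof: use Lemma~\ref{lemma:equivalence-score-nested} to identify the weak-coverage event with a sublevel event of $\score\nest$, then read off uniformity from the conditional validity hypothesis~\eqref{eqn:conditional-weak-validity}. The only difference is that the paper asserts the ``if and only if'' between $\{\inf_{y\in\weakset}\score\nest(x,y,U)\le\eta\}$ and $\{C_\eta(x,U)\cap\weakset\neq\emptyset\}$ directly (implicitly taking the infimum to be attained, as when $\mc{Y}$ is finite), whereas you add the sandwich-and-squeeze argument to handle potential non-attainment---a harmless elaboration, not a different approach.
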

\begin{proof}
  With $U$-probability 1,
  $\inf_{y \in \weakset} \score\nest (x, y, U)
  \le \eta$ if and only if $C_\eta(x, U) \cap \weakset \neq \emptyset$,
  and so
  $\P(\inf_{y \in \weakset} \score\nest(x, y, U) \le \eta \mid X = x)
  = \P(\weakset \cap C_\eta(x, U) \neq \emptyset \mid X = x)
  = \eta$.
\end{proof}
% AA NOTE: are there some "Cond's" missing in the above?  also, fyi, i think there were a few typos above, i.e., i replaced instances of "S" (the old notation) w/ "W" (the new notation) ...

%% \jcdcomment{
%%   I think we should add some additional commentary
%%   about when there are \emph{good} score functions that give more or less
%%   conditional coverage}

To illustrate this lemma, suppose that there exist nested conditionally
size-optimal mappings $\{ C^\textup{cond}_\eta \}_{\eta \in (0,1)}$ solving
problem~\eqref{eqn:conditional-optimal-score-partial-supervision}: in that
case, they satisfy the conditions for application of
Lemma~\ref{lemma:scores-uniform} so that the induced scores $\scorerv_i$ are
uniform; Alg.~\ref{alg:partial-supervised-conformal} will thus compute
$\what{t}_n = (1 - \alpha) + O_P(n^{-1/2})$ as $\what{t}_n$ is the
$(1-\alpha)$ quantile of $\scorerv_i \simiid \uniform[0,1]$.  So---in the
case that we have (near) conditional
coverage---Alg.~\ref{alg:partial-supervised-conformal} maintains it.
Notably, given a score function $\score$, not necessarily
the nested score~\eqref{eqn:score-optimal-conditional},
but strong in the sense that it models $(X, Y)$ well enough that
for each $\alpha$, we can choose $t$ so that
$\P(\score(x, Y) \le t \mid X = x) = \alpha$, then
the confidence sets Algorithm~\ref{alg:partial-supervised-conformal}
returns are indeed nested, and Lemma~\ref{lemma:scores-uniform}
applies to the induced nested score $\score\nest$.
Unfornately,
optimal
nested sets need not always exist (see Example~\ref{exm:cond-vs-cond=prox}
below), but we can provide natural conditions on the distribution
of $\weakset \mid X = x$ sufficient to allow such nested coverage,
which we do in the next subsection.

%If actually computable, these scores do not only guarantee weak coverage as in Theorem~\ref{thm:partial-supervised-conformal}, but even weak conditional coverage. 
%Of course, in practice, we can only approximate the confidence sets $C^\text{Cond}_{\eta}(X)$, but the conformalization step of Alg.~\ref{alg:partial-supervised-conformal} ensures that, even with noisy estimates $\what{C}^\text{Cond}_{\eta}(X)$  of $C^\text{Cond}_{\eta}(X)$ will produce confidence sets that satisfy Theorem~\ref{thm:partial-supervised-conformal}: the algorithm will simply return $\what{C}^\text{Cond}_{\what{t}_n}(X)$ instead of $\what{C}^\text{Cond}_{1-\alpha}(X)$, to account for the potential error in the sequence of confidence set  mappings.

\subsubsection{From conditionally to marginally valid confidence sets}

%Suppose that we can compute the sequence of confidence set mappings $\{  C^\text{Cond}_{\eta}\}_{\eta \in (0,1)}$, and for simplicity that they satisfy Assumption~\ref{assptn:nested-confidence-sets}.

\newcommand{\Ccond}{C^{\textup{cond}}}

Our initial criterion~\eqref{eqn:marginal-optimal-score-partial-supervision}
is purely marginal: we wish to compute a marginally size-optimal confidence
set. Conveniently, conditionally size-optimal mappings
can yield marginally size-optimal problems. In particular,
assume that the mappings
$\{\Ccond_\eta\}_{\eta \in (0, 1)}$ are conditionally
size-optimal~\eqref{eqn:conditional-optimal-score-partial-supervision}
and satisfy $\P(\weakset \cap \Ccond(x, U) \neq \emptyset \mid X=x)
\ge \eta$.
The following proposition shows how to transform these into marginally
size-optimal confidence sets.

\begin{proposition}
  Let the mappings $\{\Ccond_\eta\}$ be conditionally
  size-optimal~\eqref{eqn:conditional-optimal-score-partial-supervision}
  as above, and define the average
  $\textup{size}(x, \eta) \defeq \E_U[|\Ccond_\eta(x, U)|]$.
  Let $\score_{\textup{marg}}$ be any minimizer of
  \begin{align*}
    \E \left[ \textup{size}(X, \score(X))
      \right] ~~ \mbox{s.t. } ~~  \E[\score(X)] \ge 1 - \alpha
  \end{align*}
  over $\score : \mc{X} \to [0, 1]$.
  Then a solution to the initial marginal
  problem~\eqref{eqn:marginal-optimal-score-partial-supervision} is
  \begin{align*}
    C^\textup{marg}_{1-\alpha}(x,u) \defeq  \Ccond_{\score_\textup{marg}(x)}(x, u).
  \end{align*}
\end{proposition}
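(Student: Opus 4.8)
The plan is to show two things: that the proposed mapping $C^\textup{marg}_{1-\alpha}$ is feasible for the marginal problem~\eqref{eqn:marginal-optimal-score-partial-supervision} and attains objective value $\E_X[\textup{size}(X, \score_\textup{marg}(X))]$, and that \emph{every} feasible competitor has objective at least this large. The crux is a ``two-level'' reduction: decompose the marginal coverage constraint into a pointwise conditional constraint plus a scalar budget allocation over $x$, using conditional size-optimality of the $\Ccond_\eta$ to reduce the infinite-dimensional problem over set-mappings to the scalar allocation problem defining $\score_\textup{marg}$.

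First I would verify feasibility. Since $C^\textup{marg}_{1-\alpha}(x, u) = \Ccond_{\score_\textup{marg}(x)}(x, u)$, conditioning on $X$ and invoking the standing hypothesis $\P(\weakset \cap \Ccond_\eta(x, U) \neq \emptyset \mid X = x) \ge \eta$ with $\eta = \score_\textup{marg}(x)$ gives
\begin{align*}
  \P(\weakset \cap C^\textup{marg}_{1-\alpha}(X, U) \neq \emptyset)
  = \E_X\big[\P(\weakset \cap \Ccond_{\score_\textup{marg}(X)}(X, U) \neq \emptyset \mid X)\big]
  \ge \E_X[\score_\textup{marg}(X)] \ge 1 - \alpha,
\end{align*}
the final inequality being exactly the constraint in the $\score_\textup{marg}$ problem. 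Its objective is then, by the definition of $\textup{size}$,
\begin{align*}
  \E_{X, U}\big[|C^\textup{marg}_{1-\alpha}(X, U)|\big]
  = \E_X\big[\E_U[|\Ccond_{\score_\textup{marg}(X)}(X, U)|]\big]
  = \E_X[\textup{size}(X, \score_\textup{marg}(X))].
\end{align*}

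The key step is the matching lower bound. Given any $C$ feasible for~\eqref{eqn:marginal-optimal-score-partial-supervision}, I would define its conditional coverage profile $\eta_C(x) \defeq \P(\weakset \cap C(x, U) \neq \emptyset \mid X = x)$, so that $\E_X[\eta_C(X)] \ge 1 - \alpha$. For each fixed $x$, the slice $C(x, \cdot)$ is feasible for the conditional problem~\eqref{eqn:conditional-optimal-score-partial-supervision} at level $\eta_C(x)$ (it meets the constraint with equality), so conditional size-optimality of $\Ccond_{\eta_C(x)}$ yields the pointwise bound $\E_U[|C(x, U)|] \ge \textup{size}(x, \eta_C(x))$. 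Integrating over $X$ gives $\E_{X,U}[|C(X,U)|] \ge \E_X[\textup{size}(X, \eta_C(X))]$. Since $\eta_C$ satisfies $\E_X[\eta_C(X)] \ge 1 - \alpha$, it is an admissible competitor in the $\score_\textup{marg}$ problem, whence $\E_X[\textup{size}(X, \eta_C(X))] \ge \E_X[\textup{size}(X, \score_\textup{marg}(X))]$, which equals the objective of $C^\textup{marg}_{1-\alpha}$ computed above. Chaining these inequalities shows no feasible $C$ does better, establishing optimality.

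The main obstacle is making the pointwise reduction fully rigorous. I would need $\eta_C$ to be measurable in $x$ so that it is a legitimate competitor in the $\score_\textup{marg}$ problem, and I would have to handle the boundary values $\eta_C(x) \in \{0, 1\}$, where $\Ccond_\eta$—and hence $\textup{size}(x, \eta)$—is only specified on $(0,1)$; I would extend $\textup{size}(x, \cdot)$ to $[0,1]$ by the natural monotone limits (size $0$ at $\eta = 0$, the full-coverage set at $\eta = 1$). A secondary subtlety is that the pointwise bound presumes $\textup{size}(x, \eta)$ is precisely the \emph{optimal} conditional value at coverage level $\eta$; this is exactly what the standing assumption that each $\Ccond_\eta$ is conditionally size-optimal buys us, so I would invoke it there.
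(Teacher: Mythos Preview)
Your proposal is correct and follows essentially the same argument as the paper: define the conditional coverage profile $\eta_C(x)=\P(\weakset\cap C(x,U)\neq\emptyset\mid X=x)$ of a generic feasible $C$, use conditional size-optimality pointwise to bound $\E_U[|C(x,U)|]\ge\textup{size}(x,\eta_C(x))$, then invoke the optimality of $\score_{\textup{marg}}$ over all feasible scalar allocations. Your discussion of measurability and the $\eta\in\{0,1\}$ endpoints is additional care that the paper omits.
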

\noindent
More directly, any conditionally size-optimal sets---which are at least
easier to \emph{characterize} as they need only randomize over $U \sim
\uniform[0, 1]$---yield marginally size-optimal confidence sets in a
relatively straightforward way: one chooses the probability of miscoverage,
$\score(x)$, minimizing the expected confidence set size.

\begin{proof}
  That $C^\text{Marg}_{1-\alpha}$ provides valid $1 - \alpha$ coverage
  is nearly immediate: by conditional size optimality, we have
  $\P( \weakset \cap C^\text{marg}_{1-\alpha}(X , U) \neq \emptyset) =
  \E[t_\textup{marg}(X)] \ge 1-\alpha$.

  Let $C$ be any confidence set mapping such that $\P( \weakset \cap C(X,U)
  \neq \emptyset) \ge 1- \alpha$, and define $\score_C(x) \defeq \P(
  \weakset \cap C(x,U) \neq \emptyset \mid X=x) \in [0,1]$, which satisfies
  $\E[\score_C(X)] \ge 1-\alpha$.  By assumption on $\Ccond$, for each fixed
  $x \in \mc{X}$, the set $\Ccond_{\score_C(x)}(x,U)$ is
  size-optimal~\eqref{eqn:conditional-optimal-score-partial-supervision} at
  level $\score_C(x)$, so that for $P_X$-almost every $x \in
  \mc{X}$, we have
  \begin{align*}
    \textup{size}(x, \score_C(x)) = \E_{U \sim \uniform[0,1]} \left[
      |\Ccond_{\score_C(x)}(x,U))| \right] \le
    \E_{U \sim \uniform[0,1]} \left[ |C(x, U)| \right].
  \end{align*}
  Integrating both sides of the inequality over $X \sim P_X$, and using the
  assumed optimality condition on $\score_\textup{marg}$, we obtain
  \begin{align*}
    \E \left[ \textup{size}(X, \score_\textup{marg}(X))\right]
    \le \E \left[\textup{size}(X, \score_C(X)) \right]
    \le \E_{X, U} \left[ |C(X,U))| \right].
  \end{align*}
  The left-hand size is the average size of
  $C^\textup{margin}_{1-\alpha}$.
\end{proof}

%
%On the other hand, for a conditional coverage equal to $\eta \defeq \eta^\text{Marg}_{1-\alpha}(x)$, we know that the smallest $\eta$-conditionally valid confidence set precisely is $C^\text{Cond}_\eta (x,U)$, and it has an average size equal to $\Ccond(x, \eta)$. 
%It therefore only remains to find the distribution of $\eta :\mc{X} \to (0,1)$  that minimizes the overall average size. 
%In particular, $\eta^\text{Marg}_{1-\alpha}(x)$ will approach $1$ if $C^\text{Cond}_\eta (x,U)$ remains small even for high values of $\eta$, but will remain closer to $0$ when $C^\text{Cond}_\eta (x,U)$ grows large as $\eta \to 1$, for instance if the distribution of $S$ given $X=x$ has very high entropy.

\subsection{Greedy algorithms for confidence set-size minimization}
\label{subsec:greedy-algorithm}

\newcommand{\Cgreedy}[1][]{
  \ifthenelse{\isempty{#1}}{%
    C^\textup{gr}
  }{%
    C^{\textup{gr},{#1}}
  }
}

Given the distribution---or a model of the distribution---of the weak
set $\weakset$ conditional on $x$, we propose a natural greedy
algorithm to construct a confidence set satisfying the weak coverage
constraint: at each step,
Algorithm~\ref{alg:greedy-weakly-supervised-optimal-scoring} adds the
label that increases coverage the most until the confidence set
achieves a desired level. As we show presently, there are natural
families of distributions where this greedy algorithm is optimal;
however, there are failure modes, of which we also provide an
example. In Section~\ref{subsec:submodular-optim-bound}, we relate
this greedy construction to submodular optimization to provide
general guarantees of confidence set size and coverage.

\begin{algorithm}[h]
  \caption{Greedy weakly supervised scoring mechanism}
  \label{alg:greedy-weakly-supervised-optimal-scoring}
  \begin{algorithmic}
    \STATE {\bf Input:} model for the distribution of $\weakset$ given $X=x$;
    coverage rate $\eta \in (0,1)$

    \STATE \textbf{for each} $j \in [K]$ define recursively
    \begin{equation*}
      y_j(x) \defeq \argmax_{y \in \mc{Y}}
      \, P \!\left( y\in \weakset,
      \, \bigcap_{i = 1}^{j - 1} \{y_i(x) \not \in \weakset\}
      \mid  X = x \right).
    \end{equation*}
    %% \STATE Define
    %% \begin{align*}
    %%   y_1(x) \defeq \argmax_{y \in \mc{Y}} P \left( y\in \weakset \mid  X = x \right),
    %% \end{align*} 
    %% and for $j \in \{ 2, \dots  K\}$ (breaking ties uniformly at random),
    %% \begin{align*}
    %%   y_j(x) \defeq \argmax_{y \in \mc{Y}} P \left( y\in \weakset, y_1(x) \notin \weakset, \dots, y_{j-1}(x) \notin \weakset \mid  X = x \right).
    %% \end{align*}
    \STATE \textbf{for each} $j \in [K]$ define
    $\Cgreedy[j](x) \defeq \{ y_i(x) \mid i \le j\}$
    and \textbf{set}
    \begin{align*}
      j(x,\eta) \defeq \min \left\{ j \in [K] \mid
      P\left( \weakset \cap \Cgreedy[j](x) \neq \emptyset \mid X=x \right)
      \geq \eta \right\}.
    \end{align*}
    
    \STATE \textbf{set}
    \begin{align*}
      t_\eta(x) \defeq \dfrac{\eta - P( \Cgreedy[j(\eta,x) - 1](x, U)
        \cap \weakset \neq \emptyset \mid X=x )}{
        P ( \Cgreedy[j(x,\eta)](x, U) \cap \weakset \neq \emptyset
        \mid X=x )
        - P ( \Cgreedy[j(x,\eta) - 1](x, U) \cap \weakset \neq \emptyset
        \mid X=x )}.
    \end{align*}
    
    \STATE \textbf{return} function $\Cgreedy_\eta : \mc{X} \times [0,1]
    \toto \mc{Y}$ defined by
    \begin{align*}
      \Cgreedy_\eta(x, u) \defeq
      \begin{cases} 
        \Cgreedy[j(x,\eta)](x) &\mbox{if } u < t_\eta(x), \\
        \Cgreedy[j(x, \eta)-1](x) & \mbox{otherwise.}
      \end{cases}
\end{align*}
\end{algorithmic}
\end{algorithm}
% AA NOTE: i think you want to delete "\eta" from some of the subscripts above? => Yep thanks!

Alg.~\ref{alg:greedy-weakly-supervised-optimal-scoring} returns a
nested sequence $\{ \Cgreedy_\eta(x, U) \}_{\eta \in (0,1)}$, where $U
\sim \uniform[0, 1]$ randomizes to achieve an appropriate level.
While the sequence need not necessarily solve
problem~\eqref{eqn:conditional-optimal-score-partial-supervision} (see
Example~\ref{exm:cond-vs-cond=prox} to come), there are natural
sufficient conditions for
Algorithm~\ref{alg:greedy-weakly-supervised-optimal-scoring} to return
a size-optimal set, of which we present two.  As the first particular
case, consider that conditional on $x$, labels $y \in \mc{Y}$ belong
to $\weakset$ independently:
\begin{definition}
  \label{def:independence-structure-probability-distribution}
  A probability distribution $P$ on $\weakset \in
  2^{\mc{Y}}$ has \emph{label-independent structure} if $\{ \indic{y \in
    \weakset} \}_{y \in \mc{Y}}$ are independent random variables when
  $\weakset \sim P$.
\end{definition}
\noindent
We might expect $\weakset$ to exhibit label independence when all
labels $y \in \mc{Y}$ satisfy $\pi(y \mid x) \ll 1$, with the
exception of a single label $y^\star(x)$, for which $\pi(y^\star(x)
\mid x) \approx 1$, as will often be the case in low-noise
classification settings.

Another scenario occurs when the label space exhibits a hierarchical tree
structure, as one may expect in image classification~\cite{DengDoSoLiLiFe09}
or structured prediction tasks~\cite{CabannesRuBa20}. When the weak sets
$\weakset$ obey the same structure the distribution---they are subtrees of
the global tree---we say the labels have a tree structure (see
Figure~\ref{fig:exmp-distribu-tree}):
\begin{definition}
  \label{def:tree-structured-probability-distribution}
  A probability distribution $P$ on $\weakset \in
  2^{\mc{Y}}$ has a \emph{tree structure} if for all $w_1, w_2 \subset
  \mc{Y}$,
  \begin{align}
    \label{eqn:label-tree-structure}
    P(\weakset=w_1)>0 ~\text{and} ~ P(\weakset=w_2)>0 ~ \text{imply}
    ~ w_1 \cap w_2 \in \{ w_1, w_2, \emptyset \}.
  \end{align}
\end{definition}

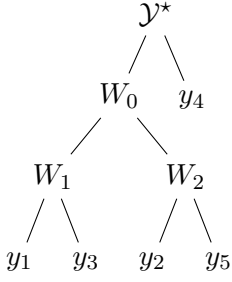
\begin{figure}[t]
  \begin{center}
    \begin{tabular}{cc}
      \begin{minipage}{.4\columnwidth}
        \centering
  \begin{forest} 
    [$\mc{Y}^\star$
      [$\weakset_0$
        [$\weakset_1$
          [$y_1$] 
          [$y_3$] 
        ]   
        [$\weakset_2$
          [$y_2$]   
          [$y_5$] 
        ]
      ]
      [$y_4$]
    ] 
  \end{forest}
      \end{minipage} &
      \begin{minipage}{.55\columnwidth}
        \caption{  \label{fig:exmp-distribu-tree}
          A tree-structured~\eqref{eqn:label-tree-structure}
          distribution for $\weakset$ given $X=x$, with $\mc{Y}^ \star =
          \{1, 2, 3, 4 \}$. The
          possible configurations for $\weakset$ are the singletons
          $\{y_1\}, \{y_2\}, \{y_3\}, \{y_4\}$,
          the two pairs
          $\weakset_1 = \{y_1,y_3\}$ and $\weakset_2 = \{y_2, y_5\}$,
          $\weakset_0 = \{y_1, y_2, y_3, y_5 \}$, and $\mc{Y}^\star$
          itself.}
      \end{minipage}
    \end{tabular}
  \end{center}
\end{figure}

Both definitions (independent labels and hierarchically-structured
weak labels) are sufficient to guarantee size-optimality for the
greedy confidence sets
Algorithm~\ref{alg:greedy-weakly-supervised-optimal-scoring}
constructs. The next Proposition, whose proof we provide in
Appendix~\ref{proof-prop:tree-structured-distribution}, makes this
formal.

\begin{proposition}
  \label{prop:tree-structured-distribution}
  Suppose the probability law $\mc{L}(\weakset \mid X=x)$ has either
  label-independent structure
  (Def.~\ref{def:independence-structure-probability-distribution}) or
  a tree structure
  (Def.~\ref{def:tree-structured-probability-distribution}).  Then for
  all $\eta \in (0,1)$, $\Cgreedy_\eta$ is conditionally
  size-optimal, and therefore is a minimizer in
  equation~\eqref{eqn:conditional-optimal-score-partial-supervision}.
\end{proposition}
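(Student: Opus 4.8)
The plan is to split the argument into a distribution-free reduction, valid for any weak-set law, and the structural input that upgrades greedy from approximately to exactly optimal. Write $f(C) \defeq \P(\weakset \cap C \neq \emptyset \mid X = x)$ for the conditional coverage of a deterministic set $C \subset \mc{Y}$; this is a monotone submodular coverage function with $f(\emptyset) = 0$ and $f(\mc{Y}) = 1$ (the latter by consistency, $Y \in \weakset$). Algorithm~\ref{alg:greedy-weakly-supervised-optimal-scoring} builds a chain $\Cgreedy[0] \subset \Cgreedy[1] \subset \cdots \subset \Cgreedy[K] = \mc{Y}$, and I set $a_k \defeq f(\Cgreedy[k])$ with marginal gains $\delta_k \defeq a_k - a_{k-1} = \P(y_k \in \weakset, \ \Cgreedy[k-1] \cap \weakset = \emptyset \mid X = x)$. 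The first observation, which needs no structure, is that the $\delta_k$ are non-increasing: for any $y \notin \Cgreedy[j]$ one has $\P(y \in \weakset, \Cgreedy[j] \cap \weakset = \emptyset) \le \P(y \in \weakset, \Cgreedy[j-1] \cap \weakset = \emptyset) \le \delta_j$, and taking the max over such $y$ yields $\delta_{j+1} \le \delta_j$. Hence $k \mapsto a_k$ is concave.

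The heart of the matter (call it Claim~A) is that the greedy set is size-optimal at every cardinality, i.e.\ $a_k = g(k) \defeq \max_{|C| \le k} f(C)$ for each $k$; this is exactly where the two hypotheses enter and is the main obstacle, since for a generic submodular $f$ greedy attains only the $(1 - 1/e)$ bound rather than the exact optimum. Under label-independence (Def.~\ref{def:independence-structure-probability-distribution}) I would write $f(C) = 1 - \prod_{y \in C}(1 - p_y)$ with $p_y \defeq \P(y \in \weakset \mid X = x)$, so that maximizing $f$ over $|C| \le k$ amounts to choosing the $k$ largest $p_y$; at step $j$ the greedy objective factors as $p_y \prod_{i < j}(1 - p_{y_i})$, whose dependence on $y$ is only through $p_y$, so greedy selects $\argmax_{y \notin \Cgreedy[j-1]} p_y$ and returns precisely the top-$k$ labels, giving Claim~A. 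Under the tree/laminar structure (Def.~\ref{def:tree-structured-probability-distribution}) I would induct on the tree: the maximal weak sets are disjoint and every weak set lies in exactly one of them, so for nonempty $C$ one gets the decomposition $f(C) = p_0 + \sum_i f_i(C \cap T_i)$ over the child subtrees $T_i$, where $p_0$ is the mass on the full support and $f_i$ is the coverage restricted to $T_i$. By the inductive hypothesis each within-subtree optimum $g_i$ is attained greedily and is concave, so allocating a cardinality budget across the $T_i$ to maximize $\sum_i g_i(k_i)$ is solved by marginal-gain (water-filling) greedy; a check that the global greedy rule reproduces this allocation—the common term $p_0$ being captured by the first chosen leaf regardless of subtree—closes the induction.

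Granting Claim~A, concavity of $g$ on $\{0, \dots, K\}$ follows from that of $a_k$, and I extend $g$ to $[0, K]$ by piecewise-linear interpolation. For any feasible randomized set $C(U)$ with law $\mu$ on $2^\mc{Y}$ and $\E_\mu[f(C)] \ge \eta$, the bound $f(C) \le g(|C|)$ together with Jensen gives $\eta \le \E_\mu[f(C)] \le \E_\mu[g(|C|)] \le g(\E_\mu[|C|])$, so $\E_\mu[|C|] \ge g^{-1}(\eta)$—a lower bound on the expected size of \emph{every} valid conditional confidence set. It then remains to check that $\Cgreedy_\eta$ attains it: with $j = j(x, \eta)$ minimal such that $a_j \ge \eta$ and $t_\eta = (\eta - a_{j-1})/(a_j - a_{j-1})$, the returned randomized set has coverage $t_\eta a_j + (1 - t_\eta) a_{j-1} = \eta$ and expected size $(j-1) + t_\eta$, which is exactly the interpolated abscissa $g^{-1}(\eta)$ on the segment joining $(j-1, a_{j-1})$ to $(j, a_j)$. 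Matching the lower bound shows $\Cgreedy_\eta$ solves the conditional problem~\eqref{eqn:conditional-optimal-score-partial-supervision} at level $\eta$, establishing conditional size-optimality.
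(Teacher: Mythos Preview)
Your proof is correct and takes a genuinely different route from the paper's. The paper works directly at the level of randomized confidence sets: for the tree case it inducts on $|\mc{Y}^\star|$ and uses an exchange argument---given any competitor $C(U)$, it repeatedly swaps the greedy-last label $y_K$ out for a better-placed label $y_D$ (chosen via the tree's lowest-common-ancestor structure) without increasing expected size or decreasing coverage, until the competitor either never uses $y_K$ (so the inductive hypothesis on $K-1$ labels applies) or deterministically contains $\{y_1,\dots,y_{K-1}\}$. Your approach instead separates the problem into a combinatorial core (Claim~A: greedy is cardinality-optimal) and a convex-analytic reduction (Jensen on the concave envelope $g$). This is more modular: the Jensen step is structure-free and would immediately lift Claim~A for any new distributional class, and it makes transparent why randomization cannot beat the interpolated greedy set. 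The one place your sketch is thin is the tree-case verification that global greedy reproduces the per-subtree greedy plus optimal water-filling allocation; this is true---for nonempty $C$ and $y\in T_i$, the global marginal gain $\P(y\in\weakset,\,C\cap\weakset=\emptyset\mid X=x)$ collapses to the $f_i$-marginal gain at $C\cap T_i$, so global greedy both picks within each subtree exactly as subtree-greedy would and allocates across subtrees by largest marginal gain---but it deserves an explicit line rather than ``a check.'' Conversely, the paper declares the label-independent case ``immediate'' while you actually prove it.
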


In general, even with perfect knowledge of the distribution of
$\weakset \mid X = x$, the nested greedy confidence sets
$\Cgreedy_\eta$ need not be size-optimal, as there may be weak sets
appearing with high probability while their constituents do not, so
that the conditionally size-optimal sets $\{\Ccond_\eta\}_{\eta \in
  [0,1]}$ are not nested.  The next example illustrates one
such failure mode:

\begin{example}
  \label{exm:cond-vs-cond=prox}
  Let the distribution of
  $\weakset$ be
  \begin{align*}
    \weakset = 
    \begin{cases} 
      \left\{ 1, 2   \right\} &\mbox{w.p. } 0.3 \\
      \left\{  1,3 \right\} & \mbox{w.p. }   0.25 \\
      \left\{ 2\right\} & \mbox{w.p. }   0.2 \\
      \left\{ 3 \right\} & \mbox{w.p. }   0.15 \\
      \left\{ 1 \right\} & \mbox{w.p. }   0.1. \\
    \end{cases}
  \end{align*}
  Then for $\eta = 0.9$, it is immediate that
  $\Ccond_{\eta}(x, u) = \{ 2, 3\}$, but
  $\Cgreedy_{\eta}(x, u) = \{1, 2, 3\} \text{ or } \{ 1, 2\}$
  depending on whether $u < 1/3$.  In addition,
  $\Ccond_{eta'}(x, u) = \{ 1, 3\}$ when $\eta' = 0.85$,
  showing that in this case, the confidence set mappings
  $\Ccond_\eta$ need not be nested.
\end{example}

\subsection{A general upper bound for the greedy approach}  
\label{subsec:submodular-optim-bound}

As we saw in the previous section, reasonable conditions on label
distributions guarantee that the greedy mappings $\{\Cgreedy_\eta
\}_{\eta \in (0,1)}$ solve
problem~\eqref{eqn:conditional-optimal-score-partial-supervision}, while
pathologies (as in Example~\ref{exm:cond-vs-cond=prox}) exist.  In
this section, we show that even in general cases, the sizes of the
confidence sets $\Cgreedy$ and $\Ccond$ cannot be too far apart.
We motivate our approach by noting the similarity between
problem~\eqref{eqn:conditional-optimal-score-partial-supervision} and
the minimum set cover problem familiar in submodular
optimization~\cite{Vazirani01, GolovinKrSt14}, which we recall.  Let
$f : 2^\mc{Y} \to [0, 1]$ be a monotone submodular coverage function,
meaning that for each $A \subset B \subset \mc{Y}$ and $y \in \mc{Y}
\setminus B$, $f$ satisfies $f(A) \leq f(B)$, $f(A \cup \{y\}) - f(A)
\geq f(B \cup \{y\}) - f(B)$, $f(\mc{Y}) = 1$, and $f(\emptyset) = 0$.
A solution to the minimum set cover problem is
\begin{align}
  C^\star_\eta \in \argmin_{C \subset \mc{Y}} ~ \left\{ |C| ~ \textrm{s.t.}
  ~ f(C) \ge \eta \right\}. \label{eq:min-set-cover}
\end{align}
A classical result combinatorial optimization of \citet{Wolsey82}
bounds the size of the set that a natural greedy algorithm for
problem~\eqref{eq:min-set-cover} returns. To state the result, we
introduce a bit of notation.  For any set $C \subset \mc{Y}$ and $y
\in \mc{Y}$, we define
\begin{align*}
  \Delta(C, y) \defeq f(C \cup \{y\}) - f(C),
\end{align*}
increase in coverage from adding $y$ to $C$.  At each step $j \in
[K]$, the greedy algorithm chooses
\begin{align*}
  y_j \defeq \argmax_{y \in \mc{Y}} \Delta(\{ y_1 \dots, y_{j-1} \}, y),
\end{align*}
and stops at the first step $j(\eta) \le K$ such that $f( \{
y_1,\ldots,y_{j(\eta)} \} ) \ge \eta$. For the greedy set
$\Cgreedy[j] \defeq \{
y_1, \dots, y_{j} \}$, define the constant
\begin{align*}
  K_{f, \eta} \defeq \min \Bigg\{&
  \frac{\eta}{\eta - f(\Cgreedy[j(\eta)-1])}, \\
  & \max_{\substack{y \in \mc{Y}, \, j \le j(\eta) \\ \Delta(\Cgreedy[j], y)>0}}
  \Bigg( \frac{\Delta(\emptyset, y)}{\Delta(\Cgreedy[j], y)} \Bigg), 
  \frac{\max_{y \in \mc{Y}} \Delta(\emptyset, y)}{
    \max_{y \in \mc{Y} \setminus \Cgreedy[j(\eta) - 1]}
    \Delta(\Cgreedy[j(\eta) - 1], y)} 
\Bigg\}.
\end{align*}
We then have the following result.

\begin{lemma}[\citet{Wolsey82}, Theorem 1]
  \label{thm:wolsey}
  Let $f : 2^\mc{Y} \to [0,1]$ be a monotone submodular coverage function.
  Then
  \begin{align*}
    |\Cgreedy[j(\eta)]|
    \leq \Big(1 + \log K_{f,\eta} \Big) \cdot |C^\star_\eta|
  \end{align*}
\end{lemma}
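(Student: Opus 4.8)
The plan is to show that each of the three quantities defining $K_{f,\eta}$ \emph{individually} produces a bound of the form $|\Cgreedy[j(\eta)]| \le (1 + \log Q)\,|C^\star_\eta|$; since $\log$ is monotone, the minimum of three valid upper bounds is again valid, and $\min\{Q_1,Q_2,Q_3\} = K_{f,\eta}$ then gives the tightest one, namely the claim. First I would fix notation: write $m \defeq |C^\star_\eta|$ (note $m \ge 1$, since $f(\emptyset) = 0 < \eta$), and let $\rho_j \defeq \Delta(\Cgreedy[j-1], y_j)$ be the marginal gain of the $j$-th greedy pick. Submodularity makes $j \mapsto \rho_j$ nonincreasing, and because a strictly coverage-increasing move is available until $f$ reaches $\eta$, the picks $y_1, \dots, y_{j(\eta)}$ are distinct, so $|\Cgreedy[j(\eta)]| = j(\eta)$.

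The engine is the standard submodular-cover estimate, which I would carry out first to obtain the bound governed by $Q_1 \defeq \eta / (\eta - f(\Cgreedy[j(\eta)-1]))$. For every $j \le j(\eta)$, monotonicity together with submodularity and the optimality of $y_j$ gives
\begin{align*}
  \eta - f(\Cgreedy[j-1])
  \le f(C^\star_\eta) - f(\Cgreedy[j-1])
  \le \sum_{y \in C^\star_\eta} \Delta(\Cgreedy[j-1], y)
  \le m\,\rho_j .
\end{align*}
Writing the deficit $g_j \defeq \eta - f(\Cgreedy[j])$, this is exactly $g_j \le g_{j-1}(1 - 1/m)$ with $g_0 = \eta$, so iterating yields $g_{j(\eta)-1} \le \eta\,(1 - 1/m)^{j(\eta)-1} \le \eta\, e^{-(j(\eta)-1)/m}$. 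Taking logarithms and using $g_{j(\eta)-1} = \eta - f(\Cgreedy[j(\eta)-1]) > 0$ gives $j(\eta) - 1 \le m\log Q_1$, hence $|\Cgreedy[j(\eta)]| = j(\eta) \le m(1 + \log Q_1)$ because $m \ge 1$.

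For $Q_2$ and $Q_3$ I would rerun this accounting with sharper per-step control of the gains rather than the crude $\Delta(\Cgreedy[j-1], y) \le \rho_j$. For $Q_2$ I would compare each element's marginal gain along the greedy trajectory to its standalone gain $\Delta(\emptyset, y) = f(\{y\})$ through the curvature ratios $\Delta(\emptyset, y)/\Delta(\Cgreedy[j], y)$, which quantify how much submodular diminishing returns can slow greedy progress; for $Q_3$ I would isolate the final increment, lower-bounding the last greedy gain $\max_{y \notin \Cgreedy[j(\eta)-1]} \Delta(\Cgreedy[j(\eta)-1], y)$ in terms of the largest standalone gain $\max_{y} \Delta(\emptyset, y)$. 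Each refinement again yields $j(\eta) \le m(1 + \log Q_i)$, and combining the three through monotonicity of $\log$ gives $|\Cgreedy[j(\eta)]| \le (1 + \log K_{f,\eta})\,|C^\star_\eta|$. The hard part will be this refined bookkeeping for $Q_2$ and $Q_3$---matching the exact constants in the three-way minimum and correctly handling the last, possibly fractional, greedy step---whereas the $Q_1$ argument above is routine.
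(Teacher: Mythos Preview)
The paper does not prove this lemma. It is stated as a citation of \citet{Wolsey82}, Theorem~1, and is used as a black box to derive Corollary~\ref{cor:submodular-deterministic-bound}; there is no proof of it anywhere in the main text or the appendix. So there is nothing in the paper to compare your proposal against.

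As for the proposal itself: your argument for the first term $Q_1 = \eta/(\eta - f(\Cgreedy[j(\eta)-1]))$ is the standard submodular-cover recursion and is correct. But for $Q_2$ and $Q_3$ you have written only an outline of intent (``I would compare\ldots'', ``I would isolate\ldots'') and explicitly flag that the hard part remains. That is not a proof; the actual content of Wolsey's theorem lies precisely in showing that those curvature-type ratios control $j(\eta)$ with the stated constant, and getting the bookkeeping right across all three branches of the minimum simultaneously is the substance of the result. If you want to include a self-contained proof rather than the citation, you would need to either reproduce Wolsey's original argument or give complete derivations for all three terms, not just the first.
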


Given the apparent similarity between the
problems~\eqref{eq:min-set-cover}
and~\eqref{eqn:conditional-optimal-score-partial-supervision}, we
would like to leverage Lemma~\ref{thm:wolsey} to establish a similar
guarantee for Alg.~\ref{alg:greedy-weakly-supervised-optimal-scoring}.
To apply Lemma~\ref{thm:wolsey} to
Alg.~\ref{alg:greedy-weakly-supervised-optimal-scoring}, we provide
the natural analogous quantities, leveraging the notation in the
algorithm and working conditional on $X = x$. Define $f_x(C) \defeq
P(\weakset \cap C \neq \emptyset \mid X = x)$, which is immediately a
submodular coverage function, and for each $x$ we have increment
function $\Delta_x(C, y) = P(\weakset \cap C = \emptyset, y \in
\weakset \mid X = x)$.  Because the greedy sets $\Cgreedy_\eta(x, u)$
may be randomized but always satisfy $\Cgreedy_\eta(x, 1) \subset
\Cgreedy_\eta(x, 0)$, we provide a slight alternative to the constant
$K_{f,\eta}$, defining
\begin{equation}
  \label{eqn:submodular-bound} 
  \begin{aligned}
    K_{P, \eta, x} \defeq \min \Bigg\{ & \frac{\eta}{\eta - P( \weakset \cap
      \Cgreedy_\eta(x, 1) \neq \emptyset \mid X=x )}, 
    \\ &  \hspace{-0.75in} 
    \max_{\substack{y \in \mc{Y}, \, j \le j(x, \eta) \\
        \Delta_x( \Cgreedy[j](x), y)>0}}
    \left( \frac{\Delta_x(\emptyset, y)}{\Delta_x(
      \Cgreedy[j](x), y)} \right), 
    % \\ &  \hspace{0.25in} 
    \frac{\max_{y \in \mc{Y}}
      \Delta(\emptyset, y)}{\max_{y \in \mc{Y}}
      \Delta(\Cgreedy_\eta(x, 1), y)} 
    \Bigg\}.
  \end{aligned}
\end{equation}
Invoking Lemma~\ref{thm:wolsey} and simplifying
gives the following result, which bounds the size of the greedy
set by a logarithmic quantity times the size of the best (deterministic)
covering set.

% AA NOTE: i think the 0 and 1 were reversed in the cor. statement, below?
\begin{corollary}
  \label{cor:submodular-deterministic-bound}
  Let $\Cgreedy_\eta : \mc{X} \times [0,1] \toto \mc{Y}$ be the
  confidence set mapping
  Algorithm~\ref{alg:greedy-weakly-supervised-optimal-scoring}
  outputs.  Then for all $x \in \mc{X}$ and $u \in [0, 1]$,
  \begin{align*}
    |\Cgreedy_\eta(x, u)| \le
    |\Cgreedy_\eta(x,0)|
    & \le
    \Big(1 + \log K_{P, \eta, x} \Big) \cdot
    \min_{C \subset \mc{Y}} \left\{
    |C| ~ \mbox{s.t.}~ P(\weakset \cap C \neq \emptyset \mid x) \ge \eta
    \right\}.
  \end{align*}
\end{corollary}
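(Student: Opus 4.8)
The plan is to recognize that, after conditioning on $X = x$, Algorithm~\ref{alg:greedy-weakly-supervised-optimal-scoring} is exactly the greedy algorithm for the minimum set cover problem~\eqref{eq:min-set-cover} associated with the coverage function $f_x(C) \defeq P(\weakset \cap C \neq \emptyset \mid X = x)$, so that Lemma~\ref{thm:wolsey} applies essentially verbatim once the notation is translated. First I would fix $x$ and verify that $f_x$ is a monotone submodular coverage function: clearly $f_x(\emptyset) = 0$, while $f_x(\mc{Y}) = P(\weakset \neq \emptyset \mid x) = 1$ because consistency ($P(Y \in \weakset) = 1$) forces $\weakset$ to be nonempty almost surely; monotonicity is immediate, and the increment $\Delta_x(C, y) = P(\weakset \cap C = \emptyset, \, y \in \weakset \mid x)$ is nonincreasing in $C$, which gives submodularity.

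Next I would check that the greedy selections coincide. The recursive choice $y_j(x) = \argmax_{y} P(y \in \weakset, \, \bigcap_{i < j}\{y_i(x) \notin \weakset\} \mid x)$ is precisely $\argmax_{y} \Delta_x(\Cgreedy[j-1](x), y)$, matching Wolsey's greedy step, and $j(x, \eta)$ is the first index with $f_x(\Cgreedy[j](x)) \ge \eta$, matching the stopping rule. Applying Lemma~\ref{thm:wolsey} with $f = f_x$ then yields $|\Cgreedy[j(x,\eta)](x)| \le (1 + \log K_{f_x, \eta}) \, |C^\star_\eta|$, where $C^\star_\eta$ solves $\min_C\{|C| : f_x(C) \ge \eta\}$---exactly the right-hand minimization appearing in the corollary, since $f_x(C) \ge \eta$ is the constraint $P(\weakset \cap C \neq \emptyset \mid x) \ge \eta$.

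The remaining steps handle the randomization and the constant. Because $\Cgreedy_\eta(x, u)$ equals the larger set $\Cgreedy[j(x,\eta)](x)$ when $u < t_\eta(x)$ and the smaller set $\Cgreedy[j(x,\eta)-1](x)$ otherwise, and because minimality of $j(x,\eta)$ forces $t_\eta(x) \in (0, 1]$, we have $\Cgreedy_\eta(x, 0) = \Cgreedy[j(x,\eta)](x)$ and $\Cgreedy_\eta(x, 1) = \Cgreedy[j(x,\eta)-1](x)$; hence $|\Cgreedy_\eta(x, u)| \le |\Cgreedy_\eta(x, 0)| = |\Cgreedy[j(x,\eta)](x)|$ for every $u$, which is the first inequality. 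Finally I would identify $K_{P, \eta, x}$ with $K_{f_x, \eta}$ by substituting $\Cgreedy_\eta(x, 1) = \Cgreedy[j(x,\eta)-1](x)$: the first terms agree since $P(\weakset \cap \Cgreedy_\eta(x,1) \neq \emptyset \mid x) = f_x(\Cgreedy[j(x,\eta)-1](x))$, the middle terms are literally identical, and in the last term one observes that $\Delta_x(\Cgreedy[j(x,\eta)-1](x), y) = 0$ for $y \in \Cgreedy[j(x,\eta)-1](x)$, so the unrestricted maximum over $y \in \mc{Y}$ coincides with Wolsey's maximum over $y \notin \Cgreedy[j(\eta)-1]$. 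Substituting the common constant into the bound proves the second inequality.

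The hard part here is bookkeeping rather than conceptual content: one must confirm term-by-term that the paper's constant $K_{P,\eta,x}$ matches Wolsey's $K_{f_x,\eta}$---most delicately in the third term, where the paper drops the restriction $y \notin \Cgreedy[j(\eta)-1]$ that Wolsey imposes, an equality that hinges on the vanishing of the increment on already-covered labels---and that the randomized output is dominated in size by its deterministic upper endpoint $\Cgreedy_\eta(x, 0)$. Neither obstacle requires calculation, only a careful translation of notation.
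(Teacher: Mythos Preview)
Your proposal is correct and matches the paper's approach exactly: the paper merely states ``Invoking Lemma~\ref{thm:wolsey} and simplifying gives the following result,'' and your argument is precisely that invocation with the simplification spelled out. Your careful identification of $\Cgreedy_\eta(x,0)$ and $\Cgreedy_\eta(x,1)$ with $\Cgreedy[j(x,\eta)](x)$ and $\Cgreedy[j(x,\eta)-1](x)$, together with the observation that $\Delta_x(C,y)=0$ for $y\in C$ so that the two third-term denominators agree, is exactly the bookkeeping the paper suppresses.
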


We can roughly interpret the three terms inside the minimum
in~\eqref{eqn:submodular-bound} as follows. The first term is large
when the greedy algorithm nearly attains the required coverage on the
iteration just before terminating, and therefore measures (in a sense)
how ``wasteful'' the algorithm is.  The second term is large when
choosing a label earlier would have improved the coverage more, and so
expresses a kind of regret.  The third term measures how often the
labels $y \in \mc{Y}$ co-occur in $\weakset$.  Though the bound is
a functional of the discrete derivative
$\Delta_x(C,y)$ and small when the ``local'' information
in $\Delta_x(C, y)$ gives good indicators of globally optimal sets $C$,
it can be hard to compute explicitly;
we therefore evaluate the size of the sets that
Alg.~\ref{alg:greedy-weakly-supervised-optimal-scoring} generates for
a few experimental examples in Section~\ref{sec:experiments}.

\section{Efficient conformalization for large output spaces}
\label{sec:structured-prediction-scores} 

%In Section~\ref{sec:weak-supervised-scores},  we developed an algorithm that only requires weakly labeled data to construct a scoring mechanism that we can feed Algorithm~\ref{alg:greedy-weakly-supervised-optimal-scoring} with.

%On the other hand, for a lot of machine learning tasks, we can leverage pre-trained models to construct our score functions.
%Our goal in this section is to provide methods for adapting these models into score functions that make the infimum score computation~\eqref{eqn:min-partial-score} computationally feasible and the resulting confidence set efficiently representable,  even in cases where the space $\mc{Y}$ is too large to enumerate all its elements. 
%We focus here on two structured prediction tasks: ranking and matching. 
%Theses two types of problems share the same label space $\mc{Y}  = \biject_K$, and the goal is, for each feature $X_i \in \mc{X}$, either to return a ranking of $K$ elements or a perfect matching of a bipartite graph with $2K$ vertices. 
%
%We describe for both tasks how to efficiently carry out each step of Alg.~\ref{alg:partial-supervised-conformal} when the score functions satisfy specific shape constraints.

While Section~\ref{sec:weak-supervised-scores} provides a generic treatment
on for producing scoring functions and associated confidence sets of minimal
size, in typical practice, a (pre-trained) model provides a predictive
scoring function, which may not be directly associated to a probability
metric, and we wish to leverage such models. This is of
particular interest when the label space $\mc{Y}$ is
large, as in structured prediction problems~\cite{Taskar05, CabannesRuBa20},
where computational efficiency becomes a main challenge.  In this
section, we thus first introduce a general method for computing and
representing confidence set mappings of the form $\{ y \mid \score(x,y) \le
\hat{t}_n\}$, and then describe how to efficiently carry out
Alg.~\ref{alg:partial-supervised-conformal} in ranking and matching problems.

\subsection{Conformal confidence sets with sequential partitioning}
\label{subsubsec:efficient-confidence-sets}

We seek to efficiently compute and represent the confidence set
$\what{C}_n(x)$ for any instance $x \in \mc{X}$, typically for a task where
the label space contains more configurations than are efficiently enumerable
($K!$ for matching and ranking problems over $K$ items). At the same time,
recalling that $\what{t}_n$ denotes the threshold
Algorithm~\ref{alg:partial-supervised-conformal}, if our confidence sets are
to be informative they should include relatively few configurations $y \in
\mc{Y}$ satisfying $ s(x, y) \le \what{t}_n$.  To the end of computing the
set $\what{C}_n(x) = \{y \mid \score(x, y) \le \what{t}_n\}$ in
Alg.~\ref{alg:partial-supervised-conformal},
we focus on methods for computing a given number
$M$ of configurations with the smallest score
$\score(x,y)$. This is essentially without loss of generality:
while we may not know the appropriate $M = M_x = |\what{C}_n(x)|$
to guarantee coverage,
if for each $M \in \N$ we can find the $M$ best configurations
in time polynomial in $M$, then by sequentially
doubling $M$ until we obtain
an element $y \in \mc{Y}$ such that $s(x,y) > \what{t}_n$, we
achieve time polynomial in $M_x$.
Algorithm~\ref{alg:efficient-computation-configurations} builds on this
intuition to return a valid confidence set.

We remark briefly that an alternative approach is to conformalize directly on
the size $M$ of the confidence set: suppose we learn a function $\what{M} :
\mc{X} \to \N$ predictive of the rank (according to $\{ \score(x,y) \}_{y
  \in \mc{Y}}$) of the first ``compatible" configuration, i.e predictive of
\begin{align*}
M_i \defeq \text{rank of the first configuration}~y\in \mc{Y}~\text{such that}~\rdmfcn_i(y) = Y_i^\weak.
\end{align*}
In that case,  if we let 
$
\what{Q}_n \defeq \left(1+n^{-1}\right) \left(1-\alpha\right) \text{-quantile of} ~ \{M_i  -  \what{M}(X_i)\}_{i=1}^n,
$
we would only need to return
\begin{align*}
  \what{C}_n(x) \defeq
  \left\{ \what{M}(x) +  \what{Q}_n ~\text{best configurations}~ y \in \mc{Y}
  ~\text{ordered by}~ s(x,y) \right\}.
\end{align*}
This approach makes prediction more efficient (as we know in advance the
number of configurations to compute), but the computational effort of the
conformalization step~\eqref{eqn:min-partial-score} increases, as we must
compute the rank of the best constrained configuration for each instance.

\subsubsection{Returning $M$ best configurations with sequential partitioning}

\tikzset{every picture/.style={line width=0.6pt}} %set default line width to 0.75pt        
\begin{figure}
\centering
\begin{tikzpicture}[x=0.65pt,y=0.65pt,yscale=-1,xscale=1]
%uncomment if require: \path (0,300); %set diagram left start at 0, and has height of 300

%Shape: Circle [id:dp3893549514645547] 
\draw   (4,139) .. controls (4,90.68) and (43.18,51.5) .. (91.5,51.5) .. controls (139.82,51.5) and (179,90.68) .. (179,139) .. controls (179,187.32) and (139.82,226.5) .. (91.5,226.5) .. controls (43.18,226.5) and (4,187.32) .. (4,139) -- cycle ;
%Straight Lines [id:da7960086437763865] 
\draw    (29,78) -- (154,200) ;
%Straight Lines [id:da7477579202726059] 
\draw    (166,92) -- (111,158) ;
%Right Arrow [id:dp5082973932781738] 
\draw   (179,131) -- (217.4,131) -- (217.4,121) -- (243,141) -- (217.4,161) -- (217.4,151) -- (179,151) -- cycle ;
%Right Arrow [id:dp4299097507269909] 
\draw   (418,129) -- (453.4,129) -- (453.4,118.5) -- (477,139.5) -- (453.4,160.5) -- (453.4,150) -- (418,150) -- cycle ;
%Shape: Circle [id:dp3613836899106613] 
\draw   (243,139) .. controls (243,90.68) and (282.18,51.5) .. (330.5,51.5) .. controls (378.82,51.5) and (418,90.68) .. (418,139) .. controls (418,187.32) and (378.82,226.5) .. (330.5,226.5) .. controls (282.18,226.5) and (243,187.32) .. (243,139) -- cycle ;
%Straight Lines [id:da5241646100210418] 
\draw    (268,78) -- (393,200) ;
%Shape: Circle [id:dp8369495856087752] 
\draw   (477,139.5) .. controls (477,91.18) and (516.18,52) .. (564.5,52) .. controls (612.82,52) and (652,91.18) .. (652,139.5) .. controls (652,187.82) and (612.82,227) .. (564.5,227) .. controls (516.18,227) and (477,187.82) .. (477,139.5) -- cycle ;
%Straight Lines [id:da8504791090333597] 
\draw    (403.5,89) -- (348.5,155) ;
%Straight Lines [id:da49058003419630203] 
\draw    (639,95) -- (585,159) ;
%Straight Lines [id:da4667068794878637] 
\draw    (502,78.5) -- (627,200.5) ;
%Straight Lines [id:da1623654117027613] 
\draw [color={rgb, 255:red, 255; green, 0; blue, 0 }  ,draw opacity=1 ]   (249,170) -- (363,170) ;
%Straight Lines [id:da07462199992878116] 
\draw [color={rgb, 255:red, 255; green, 0; blue, 0 }  ,draw opacity=1 ]   (483,170) -- (597,170) ;
%Curve Lines [id:da05516956564382869] 
\draw    (60,41) .. controls (88.57,28.19) and (114.22,50.32) .. (108.29,82.52) ;
\draw [shift={(108,84)}, rotate = 281.98] [color={rgb, 255:red, 0; green, 0; blue, 0 }  ][line width=0.75]    (10.93,-3.29) .. controls (6.95,-1.4) and (3.31,-0.3) .. (0,0) .. controls (3.31,0.3) and (6.95,1.4) .. (10.93,3.29)   ;
%Curve Lines [id:da5914049251614841] 
\draw    (36,222) .. controls (59.28,223.94) and (64.68,223.06) .. (75.03,206.57) ;
\draw [shift={(76,205)}, rotate = 481.43] [color={rgb, 255:red, 0; green, 0; blue, 0 }  ][line width=0.75]    (10.93,-3.29) .. controls (6.95,-1.4) and (3.31,-0.3) .. (0,0) .. controls (3.31,0.3) and (6.95,1.4) .. (10.93,3.29)   ;
%Curve Lines [id:da049977326297285796] 
\draw    (178,188) .. controls (180.91,176.36) and (172.53,173.19) .. (154.68,168.44) ;
\draw [shift={(153,168)}, rotate = 374.74] [color={rgb, 255:red, 0; green, 0; blue, 0 }  ][line width=0.75]    (10.93,-3.29) .. controls (6.95,-1.4) and (3.31,-0.3) .. (0,0) .. controls (3.31,0.3) and (6.95,1.4) .. (10.93,3.29)   ;
%Curve Lines [id:da903042165873359] 
\draw    (518,227) .. controls (541.28,228.94) and (546.68,228.06) .. (557.03,211.57) ;
\draw [shift={(558,210)}, rotate = 481.43] [color={rgb, 255:red, 0; green, 0; blue, 0 }  ][line width=0.75]    (10.93,-3.29) .. controls (6.95,-1.4) and (3.31,-0.3) .. (0,0) .. controls (3.31,0.3) and (6.95,1.4) .. (10.93,3.29)   ;
%Curve Lines [id:da7520924934522966] 
\draw    (643,206) .. controls (645.9,194.42) and (635.75,183.77) .. (621.56,173.16) ;
\draw [shift={(620,172)}, rotate = 396.25] [color={rgb, 255:red, 0; green, 0; blue, 0 }  ][line width=0.75]    (10.93,-3.29) .. controls (6.95,-1.4) and (3.31,-0.3) .. (0,0) .. controls (3.31,0.3) and (6.95,1.4) .. (10.93,3.29)   ;
%Curve Lines [id:da906241956890359] 
\draw    (593,42) .. controls (575.36,34.16) and (560.6,47.45) .. (565.67,75.28) ;
\draw [shift={(566,77)}, rotate = 258.31] [color={rgb, 255:red, 0; green, 0; blue, 0 }  ][line width=0.75]    (10.93,-3.29) .. controls (6.95,-1.4) and (3.31,-0.3) .. (0,0) .. controls (3.31,0.3) and (6.95,1.4) .. (10.93,3.29)   ;
%Curve Lines [id:da4859670704413639] 
\draw    (489,81) .. controls (504.81,92.16) and (501.59,101.59) .. (501.06,110.09) ;
\draw [shift={(501,112)}, rotate = 270] [color={rgb, 255:red, 0; green, 0; blue, 0 }  ][line width=0.75]    (10.93,-3.29) .. controls (6.95,-1.4) and (3.31,-0.3) .. (0,0) .. controls (3.31,0.3) and (6.95,1.4) .. (10.93,3.29)   ;

% Text Node
\draw (178,100) node [anchor=north west][inner sep=0.75pt]  [font=\footnotesize]  {$ \textsc{Partition}$};
% Text Node
\draw (418,100) node [anchor=north west][inner sep=0.75pt]  [font=\footnotesize]  {$ \textsc{Update}$};
% Text Node
\draw (33,32.4) node [anchor=north west][inner sep=0.75pt]    {$\mc{Y}_{1}^{3}$};
% Text Node
\draw (12,210.4) node [anchor=north west][inner sep=0.75pt]    {$\mc{Y}_{2}^{3}$};
% Text Node
\draw (166,193.4) node [anchor=north west][inner sep=0.75pt]    {$\mc{Y}_{3}^{3}$};
% Text Node
\draw (491,213.4) node [anchor=north west][inner sep=0.75pt]    {$\mc{Y}_{2}^{4}$};
% Text Node
\draw (602,34.4) node [anchor=north west][inner sep=0.75pt]    {$\mc{Y}_{1}^{4}$};
% Text Node
\draw (462,64.4) node [anchor=north west][inner sep=0.75pt]    {$\mc{Y}_{4}^{4}$};
% Text Node
\draw (626,211.4) node [anchor=north west][inner sep=0.75pt]    {$\mc{Y}_{3}^{4}$};
% Text Node
\draw (63,81.4) node [anchor=north west][inner sep=0.75pt]  [font=\scriptsize]  {$y_{1}$};
% Text Node
\draw (103,191.4) node [anchor=north west][inner sep=0.75pt]  [font=\scriptsize]  {$y_{2}$};
% Text Node
\draw (28,126.4) node [anchor=north west][inner sep=0.75pt]  [font=\scriptsize]  {$y_{2,2}^{3}$};
% Text Node
\draw (98,117.4) node [anchor=north west][inner sep=0.75pt]  [font=\scriptsize]  {$y_{1,2}^{3}$};
% Text Node
\draw (146,122.4) node [anchor=north west][inner sep=0.75pt]  [font=\scriptsize]  {$y_{3,2}^{3}$};
% Text Node
\draw (128,151.4) node [anchor=north west][inner sep=0.75pt]  [font=\scriptsize]  {$y_3$};
% Text Node
\draw (291,76.4) node [anchor=north west][inner sep=0.75pt]  [font=\scriptsize]  {$y_{1}$};
% Text Node
\draw (340,120.4) node [anchor=north west][inner sep=0.75pt]  [font=\scriptsize]  {$y_{1,2}^{3}$};
% Text Node
\draw (378,125.4) node [anchor=north west][inner sep=0.75pt]  [font=\scriptsize]  {$y_{3,2}^{3}$};
% Text Node
\draw (364,149.4) node [anchor=north west][inner sep=0.75pt]  [font=\scriptsize]  {$y_{3}$};
% Text Node
\draw (337,188.4) node [anchor=north west][inner sep=0.75pt]  [font=\scriptsize]  {$y_{2}$};
% Text Node
\draw (255,124.4) node [anchor=north west][inner sep=0.75pt]  [font=\scriptsize,color={rgb, 255:red, 255; green, 0; blue, 0 }  ,opacity=1 ]  {$y_{2,2}^{3} =y_{4}$};
% Text Node
\draw (532,77.4) node [anchor=north west][inner sep=0.75pt]  [font=\scriptsize]  {$y_{1}$};
% Text Node
\draw (576,122.4) node [anchor=north west][inner sep=0.75pt]  [font=\scriptsize]  {$y_{1,2}^{4}$};
% Text Node
\draw (620,125.4) node [anchor=north west][inner sep=0.75pt]  [font=\scriptsize]  {$y_{3,2}^{4}$};
% Text Node
\draw (601,155.4) node [anchor=north west][inner sep=0.75pt]  [font=\scriptsize]  {$y_{3}$};
% Text Node
\draw (578,192.4) node [anchor=north west][inner sep=0.75pt]  [font=\scriptsize]  {$y_{2}$};
% Text Node
\draw (487,124.4) node [anchor=north west][inner sep=0.75pt]  [font=\scriptsize]  {$y_{4}$};
% Text Node
\draw (542,144.4) node [anchor=north west][inner sep=0.75pt]  [font=\scriptsize,color={rgb, 255:red, 255; green, 0; blue, 0 }  ,opacity=1 ]  {$y_{4,2}^{4}$};
% Text Node
\draw (511,175.4) node [anchor=north west][inner sep=0.75pt]  [font=\scriptsize,color={rgb, 255:red, 255; green, 0; blue, 0 }  ,opacity=1 ]  {$y_{2,2}^{4}$};

\end{tikzpicture}

\caption{Alg.~\ref{alg:efficient-computation-configurations} scheme for
  sequential partitioning: first, partition the subset containing the
  $m+1$-th best configuration, $y_{2,2}^3$ in this case, then compute both
  second-best configurations in the newly formed subsets of the
  partition---here $\mc{Y}_2^4$ and $\mc{Y}_4^4$. }
\label{fig:alg-config-scheme}
\end{figure}
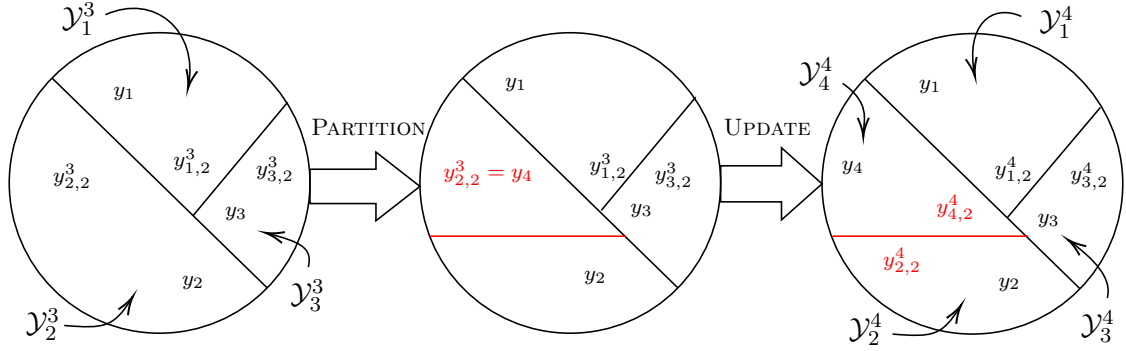

Let us now fix $M \ge 1$, and focus on retrieving the $M$ configurations
with the lowest scores.
Algorithm~\ref{alg:efficient-computation-configurations} provides a general
recipe using dynamic programming, and it is efficient as long
as we can efficiently compute certain partitions of the label space.
We require the following definition.
\begin{definition}
  \label{def:partition}
  A function $\textsc{Partition}: 2^\mc{Y} \times \mc{Y} \times \mc{Y} \to
  2^\mc{Y} \times 2^\mc{Y}$ is \emph{valid for a score function}
  $\score$ if, for
  every subset $\tilde{\mc{Y}} \subset \mc{Y}$ and pair of configurations
  $y_1, y_2 \in \tilde{\mc{Y}}$ satisfying
  \begin{align*}
    y_1 \in \argmin_{y \in \tilde{\mc{Y}}} \score(x,y)
    ~~ \mbox{and} ~~
    y_2 \in \argmin_{y \in \tilde{\mc{Y}} \setminus \{ y_1 \}} \score(x,y),
  \end{align*}
  \textsc{Partition}$(\tilde{\mc{Y}}, y_1, y_2)$ returns a partition
  $(\tilde{\mc{Y}}_1,\tilde{\mc{Y}}_2)$ of $\tilde{\mc{Y}}$ such that $y_1
  \in \tilde{\mc{Y}}_1$ and $y_2 \in \tilde{\mc{Y}}_2$.
\end{definition}
\noindent
We thus leverage two conditions: a valid \textsc{Partition} for our score
function $\score$ and, for each pair of subsets $\mc{Y}_1, \mc{Y}_2 \subset
\mc{Y}$ that it produces, we must be able to (efficiently) compute the
second-best configurations in $\mc{Y}_1$ and $\mc{Y}_2$, i.e.,
\begin{align*}
  y_{1,2} \in \argmin_{y \in \mc{Y}_1 \setminus \{ y_1\} } \score(x,y)
  ~~\text{and}~~
  y_{2,2} \in \argmin_{y \in \mc{Y}_2 \setminus \{ y_2\} } \score(x,y).
\end{align*}

Figure~\ref{fig:alg-config-scheme} encapsulates the main idea
Alg.~\ref{alg:efficient-computation-configurations}: at each step $m \in
[M]$, we maintain a partition $\{ \mc{Y}_j^m\}_{j=1}^m$ of $\mc{Y}$ such
that if $ y_{j}^m \in \argmin_{y \in \mc{Y}_j^m} s(x,y), $ then for all $j
\in [m]$, we have
\begin{align*}
  y_{j}^m \in \argmin_{y \in \mc{Y} \setminus \{ y_1^m, \dots, y_{j-1}^m \}} s(x,y),
\end{align*}
i.e., $y_{j}^m$ is the $j$-th best configuration in $\mc{Y}$.
Now, for each $j \in [m]$, let the configuration
$y_{j,2}^m \in \argmin_{y \in \mc{Y}_j^m \setminus \{ y_j^m \} } s(x,y)$
be the second-best configuration in $\mc{Y}_j^m$.
The key is then to observe that if we set
\begin{align*}
  \text{ind}(m) \defeq \argmin_{j \in [m]} s(x, y_{j,2}^m),
\end{align*}
then $y_{\text{ind}(m), 2}^m$ is the $(m+1)$st best configuration in
$\mc{Y}$.  The \textsc{Partition} function then divides
$\mc{Y}_{\text{ind}(m)}^m$ into two sets $\mc{Y}_{\text{ind}(m)}^{m+1}$ and
$\mc{Y}_{m+1}^{m+1}$ such that $y_{\text{ind}(m)}^m \in
\mc{Y}_{\text{ind}(m)}^{m+1}$ and $y_{\text{ind}(m),2}^m \in
\mc{Y}_{m+1}^{m+1}$.  Under the assumption that \textsc{Partition} is valid
(Def.~\ref{def:partition}) for the score $\score$, the following lemma
guarantees the validity of
Algorithm~\ref{alg:efficient-computation-configurations}.

\begin{algorithm}[t]
  \caption{Sequential partitioning}
  \label{alg:efficient-computation-configurations}
  \begin{algorithmic}
    \REQUIRE score function
    $\score: \mc{X} \times \mc{Y} \to \R$;
    valid (Def.~\ref{def:partition})
    \textsc{Partition}: $2^\mc{Y} \times  \mc{Y} \times \mc{Y} \to 2^\mc{Y} \times 2^\mc{Y}$;
    instance $x \in \mc{X}$
   
    \STATE \textbf{initialize:}
    Compute $y_1^1 \defeq \argmin_{y \in  \mc{Y}} s(x,y)$ and $y_{1,2}^1 \defeq \argmin_{y \in \mc{Y} \setminus \{ y_1^1 \}} s(x,y)$.
   \STATE Set $\mc{Y}^m_1 \defeq \mc{Y}$ \hfill\COMMENT{Initialize the partition}
   
   \FOR{$m = 1, 2, \ldots, M - 1$}
  
   \STATE $\text{ind}(m) \defeq \argmin_{j \in [m]} s(x, y_{j,2}^m)$
   \hfill\COMMENT{Find the $m+1$-th best configuration}
   \STATE $y_{m+1}^{m+1} \defeq y_{\text{ind}(m),2}^m$
   
   \FOR{$j \in [m] \setminus \{ \text{ind}(m) \}$}
   \STATE $( \mc{Y}^{m+1}_j, y_j^{m+1}, y_{j,  2}^{m+1}) \defeq 
   (\mc{Y}^m_j, y_j^m, y_{j,2}^m)$
   \hfill\COMMENT{All subsets $\{ \mc{Y}^m_j \}_{j \neq \text{ind}(m)}$ remain identical}
   \ENDFOR
   
   \STATE $\mc{Y}^{m+1}_{\text{ind}(m)},  \mc{Y}^{m+1}_{m+1} \defeq \textsc{Partition}( \mc{Y}^{m}_{\text{ind}(m)},  y_{\text{ind}(m)}^m,  y_{m+1}^{m+1})$   
   \hfill\COMMENT{Update the partition for $ \mc{Y}^m_{\text{ind}(m)}$}
   \STATE $y_{\text{ind}(m)}^{m+1} \defeq y_{\text{ind}(m)}^{m}$ 
   and $y_{\text{ind}(m),  2}^{m+1} \defeq \argmin_{y \in \mc{Y}^{m+1}_{\text{ind}(m)} \setminus \{ y_{\text{ind}(m)}^{m+1} \} } s(x,y)$ 
   \STATE $ y_{m+1, 2}^{m+1} \defeq \argmin_{y \in \mc{Y}^{m+1}_{m+1} \setminus \{ y_{m+1}^{m+1} \} } s(x,y)$
   \hfill\COMMENT{Compute second-best configurations}
      \ENDFOR

    \RETURN{$\{ y_m^M \}_{m=1}^M$}
      \end{algorithmic}
\end{algorithm}

\begin{lemma}
  \label{lem:validity-config-algo}
  Assume the \textsc{Partition} function is valid for the score
  function $\score$.  Then
  Algorithm~\ref{alg:efficient-computation-configurations} returns a set of
  configurations $\{ y_j \}_{j=1}^M$ such that for each $j \in [M]$,
  \begin{align*}
    y_j \in \argmin_{y \in \mc{Y} \setminus \{ y_1, \dots, y_{j-1} \}} s(x,y).
\end{align*}
\end{lemma}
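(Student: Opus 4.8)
The plan is to prove a stronger loop invariant and then read off the claim from its last instance. Writing $(\mc{Y}_j^m, y_j^m, y_{j,2}^m)_{j=1}^m$ for the level-$m$ quantities the algorithm maintains, I would show by induction on $m \in [M]$ that: (a) $\{\mc{Y}_j^m\}_{j=1}^m$ is a partition of $\mc{Y}$; (b) for each $j \in [m]$ the stored configurations are correct for their block, i.e. $y_j^m \in \argmin_{y \in \mc{Y}_j^m} \score(x,y)$ and $y_{j,2}^m \in \argmin_{y \in \mc{Y}_j^m \setminus \{y_j^m\}} \score(x,y)$; and (c) the block-minimizers are globally ordered, i.e. $y_j^m \in \argmin_{y \in \mc{Y} \setminus \{y_1^m, \dots, y_{j-1}^m\}} \score(x,y)$ for every $j \in [m]$. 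Property (c) at $m = M$ is exactly the conclusion of the lemma, since the returned configurations are $y_j = y_j^M$.

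The base case $m = 1$ is immediate from the initialization: $\mc{Y}_1^1 = \mc{Y}$ trivially partitions $\mc{Y}$, and the computed $y_1^1$ and $y_{1,2}^1$ are by construction the best and second-best configurations in $\mc{Y}$, so (a)--(c) all hold.

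For the inductive step I would assume the invariant at level $m$ and analyze one pass of the loop. The first key observation is that, because $\{y_1^m, \dots, y_m^m\}$ picks exactly one element (the block-minimizer) from each block, we have the disjoint decomposition $\mc{Y} \setminus \{y_1^m, \dots, y_m^m\} = \bigsqcup_{j=1}^m (\mc{Y}_j^m \setminus \{y_j^m\})$. Minimizing $\score(x, \cdot)$ over the left side therefore reduces, using (b), to $\min_{j \in [m]} \score(x, y_{j,2}^m)$, which the algorithm attains at $j = \text{ind}(m)$; hence $y_{m+1}^{m+1} = y_{\text{ind}(m),2}^m$ is a global minimizer over $\mc{Y} \setminus \{y_1^m, \dots, y_m^m\}$, which gives (c) for the new index $m+1$. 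Since every block $j \neq \text{ind}(m)$ and its minimizer are copied unchanged, and $y_{\text{ind}(m)}^{m+1} = y_{\text{ind}(m)}^m$, property (c) for $j \le m$ is inherited directly.

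It remains to re-establish (a) and (b) after the split, and this bookkeeping is the main obstacle. Validity of \textsc{Partition} applies precisely because, by (b), $y_{\text{ind}(m)}^m$ and $y_{\text{ind}(m),2}^m$ are a minimizer and a second-best of $\mc{Y}_{\text{ind}(m)}^m$; it then returns $(\mc{Y}_{\text{ind}(m)}^{m+1}, \mc{Y}_{m+1}^{m+1})$ partitioning $\mc{Y}_{\text{ind}(m)}^m$ with $y_{\text{ind}(m)}^m \in \mc{Y}_{\text{ind}(m)}^{m+1}$ and $y_{m+1}^{m+1} \in \mc{Y}_{m+1}^{m+1}$, which preserves (a). For (b): $y_{\text{ind}(m)}^m$ remains a minimizer of the shrunken block $\mc{Y}_{\text{ind}(m)}^{m+1}$ because it minimized the superset and stays inside, while the inclusion $\mc{Y}_{m+1}^{m+1} \subset \mc{Y}_{\text{ind}(m)}^m \setminus \{y_{\text{ind}(m)}^m\}$ forces $y_{\text{ind}(m),2}^m$ to be a minimizer of $\mc{Y}_{m+1}^{m+1}$; the second-best entries $y_{\text{ind}(m),2}^{m+1}$ and $y_{m+1,2}^{m+1}$ are then recomputed directly in the algorithm. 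The delicate point is to verify that the \textsc{Partition} guarantee---that $y_1$ and $y_2$ land in opposite blocks---is exactly what certifies that the parent's second-best becomes the new block's minimizer and that no previously chosen configuration reappears. Ties in $\score(x, \cdot)$ cause no difficulty, as every statement is phrased through $\argmin$ membership rather than a strict ranking.
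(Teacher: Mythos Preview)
Your proposal is correct and follows the same inductive scheme as the paper's proof, which also maintains that $\{\mc{Y}_j^m\}_{j=1}^m$ partitions $\mc{Y}$, that each $y_j^m$ minimizes $\score(x,\cdot)$ over its block, and that these block-minimizers are globally ordered. Your version is considerably more detailed---in particular, you make explicit the second-best invariant on $y_{j,2}^m$ and the verification that \textsc{Partition}'s separation guarantee is exactly what lets the parent's second-best become the child block's minimizer---whereas the paper's proof is a terse sketch; the underlying argument is the same.
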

\begin{proof}
  This follows by an induction over $m \ge 1$,  which guarantees that at every step $m \ge 1$,  $\{ \mc{Y}_j^m \}$ is a partition of $\mc{Y}$ such that $y_j^m = \argmin_{y \in \mc{Y}_j^m}$ and
  \begin{align*}
    s(x,y_1^m) \le s(x,y_2^m) \dots \le s(x,y_m^m) \le \min_{y \in \mc{Y} \setminus \{ y_j^m \} } s(x,y).
  \end{align*}
  The property transitions from $m$ to $m+1$ as the~\textsc{Partition}
  function is valid, and we choose $y_{m+1}^{m+1}$ as the best second-best
  configuration, hence it is the $(m+1)$st best configuration.
\end{proof}

The existence of an efficient valid \textsc{Partition} function is
instance-dependent and typically requires a specific choice of scoring
function; we provide concrete implementations for two
types of structured prediction problems.

\subsection{Two structured prediction examples}
\label{subsec:structured-pred-examples}

While Algorithm~\ref{alg:efficient-computation-configurations} is generic,
we now show that efficient partitioning and minimization functions exist in
two structured prediction instances, so that we may efficiently carry out
both Algs.~\ref{alg:partial-supervised-conformal}
and~\ref{alg:efficient-computation-configurations} in the instance.

\subsubsection{Perfect matching scores and weak supervision}
\label{subsubsec:matching-prediction-scores}

A matching task consists of trying to optimally pair elements of a bipartite
graph given a feature vector $x \in \mc{X}$ (for example, one may wish to
identify paired amino acids in protein folding~\cite{TaskarGuKo03}). We
assume there exists a bipartite graph $G$ with disjoint sets $U$ and $V$ of
$K \ge 1$ nodes; each label $Y$ is then a perfect matching between $U$ and
$V$, i.e., a bijection $Y \in \mc{Y} = \biject(U,V)$.
General supervised approaches for perfect matching problems, such as
structured Support Vector Machines~\cite{TsochantaridisHoJoAl04} or
Adversarial Bipartite Matching~\cite{FathonyBeZhZi18}, generally learn
pairwise score functions $\varphi_{u,v} : \mc{X} \to \R$ for all $(u,v) \in
U \times V$, which measure the cost of adding the edge $e \defeq (u,v)$ for
a feature vector $x \in \mc{X}$, and then output a prediction
\begin{align*}
y^\star(x) \defeq \left\{ \argmin_{y \in \biject(U,V)} \sum_{u \in U} \varphi_{u,y(u)}(x) = \sum_{u\in U, v\in V} \indic{v = y(u)} \varphi_{u,v}(x) \right\},
\end{align*}
an instance of minimum cost perfect matching solvable
in time $\mc{O}(K^ 3)$ with the Hungarian algorithm.

To efficiently adapt this approach in the context of
Alg.~\ref{alg:partial-supervised-conformal}, we assume we have trained a set
of pairwise score functions $\{ \varphi_{u,v} \mid (u,v) \in U \times V\}$,
(e.g.\ using supervised training data) and wish to conformalize with
partially supervised data, using the following score function
\begin{align}
  \label{eqn:matching-score-function}
  \score^{\text{Matching}}(x, y) \defeq \sum_{u \in U, v  \in V} \indic{v=y(u)} \varphi_{u,v}(x,y).
\end{align}

In a matching problem, weak supervision can arise under the form of a
partial matching between subsets $U_i \subset U$ and $V_i \subset V$ of the
nodes, which we write $Y_i^\weak \in \biject(U_i,V_i)$: each $u \in U_i$ has
a matching element $Y_i^\weak(u) = Y_i(u) \in V_i$.  Computing the minimum
partial scores~\eqref{eqn:min-partial-score} in
Alg.~\ref{alg:partial-supervised-conformal} is then computationally
efficient, as it reduces to yet another minimum cost perfect matching
problem:
\begin{align*}
\scorerv_i\defeq \sum_{u \in U_i} \varphi_{u, Y_i^\weak(u)}(x) + \min_{\tilde{y} \in \biject(U \setminus U_i,  V \setminus V_i)} \Biggr\{ \sum_{\substack{u \in U \setminus U_i \\  v \in V \setminus V_i}} \indic{v=\tilde{y}(u)}  \varphi_{u,v}(x)  \Biggr\}.
\end{align*}

%We further explain in Section~\ref{subsubsec:efficient-confidence-sets} how to carry out the last step of Alg.~\ref{alg:partial-supervised-conformal} in the matching case, that is how to efficiently compute the different potential matchings to include in $\what{C}_n(x)$.

In the matching case,
Alg.~\ref{alg:efficient-computation-configurations} is equivalent to finding
the $M$-best minimal weight perfect matching in a bipartite graph,
which~\citet{ChegireddyHa87} efficiently solve.  In the context of
Alg.~\ref{alg:efficient-computation-configurations}, their approach
iteratively chooses an edge $e_m \in y_{\text{ind}(m)}^m \setminus
y_{\text{ind}(m),2}^m$, then partitions the set of matchings $\mc{M} \in
\mc{Y}_{\text{ind}(m)}^m$ depending on whether $e_m \in \mc{M}$ or not. The
computation of each second-best configuration then amounts to solving at
most $K$ different perfect matching problems, resulting in an overall
$\mc{O}(MK^4)$ cost of the procedure.

\subsubsection{Ranking problems and partial labeling mechanisms}
\label{subsubsec:ranking-conformal-scores}

\newcommand{\rankscore}{\score^{\textup{rank}}}

Our second structured prediction example addresses partially supervised
ranking tasks.  The goal here is to predict a preference ranking $y \in
\mc{Y} = \biject_K$ of $K$ different items, documents, for a certain user or
query $x \in \mc{X}$, where $y(i)$ denotes the item of rank $i$. Typically,
one achieves this  by learning relevance functions $r_k: \mc{X} \to
\R$, which evaluate each item $1\le k \le K$ individually before aggregating
into a single ranking prediction~\cite{FreundIyScSi03, DuchiMaJo13, QiniLi13,
  CaoQiLiTsLi07}. We assume here that we have access to such relevance
functions.

In ranking tasks, there are two reasonable ways in which practitioners may
acquire partial supervision or user feedback.  The first
mechanism~\cite{CabannesRuBa20} assumes they only receive a subset of all
$\choose{K}{2}$ pairwise comparisons $\left( \indic{y^{-1}(i) < y^{-1}(j)}
\right)_{1\le i<j \le K}$ as a partial label, which is especially relevant
in cases where the practitioner solicits feedback from users by asking them
to compare a small number of items.  Unfortunately, carrying out the
computation~\eqref{eqn:min-partial-score} in
Alg.~\ref{alg:partial-supervised-conformal} is an NP-hard problem, namely
the minimal cost feedback arc set problem~\cite{AilonMoNe08,
  ZuylenHeJaWi07}, for which only an approximate solution is available (by
solving an integer linear program).

Another form of feedback, on which we focus in the rest of the section and
that allows to run both Algs.~\ref{alg:partial-supervised-conformal}
and~\ref{alg:efficient-computation-configurations} efficiently, instead
assumes that users only provide a fraction of their preferred ranking and
reveal $ \left( y(i) \right)_{i=1}^{K^\text{partial}}$ for some
$K^\text{partial} \le K$.  In order to construct score functions suitable to
the application of Alg.~\ref{alg:efficient-computation-configurations}, we
first introduce ranking-consistent score functions.

\begin{definition}
  A scoring function $\rankscore$ is \emph{ranking-consistent} with a set of
  relevance functions $\{ r_k : \mc{X} \to \R \}_{k \in [K]}$ if
  a for all $1\le i < j \le K$ and $y \in \biject_K$,
  \begin{align}
    \label{eqn:consistency-ranking-score}
    \rankscore(x, (i, j) \circ y)) \le \rankscore(x, y)  ~ \text{if} ~ r_{y(i)}(x) \le r_{y(j)}(x),
  \end{align}
  where $(i, j) \circ y$ denotes transposition of $i$ and $j$ in the
  permutation $y$.
\end{definition}
Such a scoring mechanism should always favor a ranking that gives a higher
rank to $y(j)$ than $y(i)$ if $r_{y(i)}(x) \le r_{y(j)}(x)$, i.e., if $y(j)$
has a greater relevance than $y(i)$.  It ensures in particular that the
$(m+1)$st best ranking is always a ``neighbor" of one of $m$ best; this is
an immediate property of the score function, as it can always increase
by swapping two elements $i$ and $j$ that are mis-ordered.

An example of ranking-consistent scoring function is the disagreement-based
scoring function~\cite{Kendall38, Kemeny59, DuchiMaJo13}
\begin{align}
  \label{eqn:kendall-ranking-score}
  \rankscore(x,y) \defeq \sum_{i<j} \psi\left( r_{y(i)}(x),  r_{y(j)}(x) \right),
\end{align}
where $\psi: \R^2 \to \R \ge 0$ is a function satisfying $\psi(a,b) = 0$
when $a \ge b$ and $\psi(a, b) > 0$ when $a < b$, non-increasing in the first
argument and non-decreasing in the second.  Unless we specify otherwise
we use $\psi(a, b) = \hinge{b - a}$ in our experiments.

Finding the configuration $y$ that minimizes the partial
score~\eqref{eqn:min-partial-score} of a ranking-consistent score function
is straightforward: it suffices to rank all the elements in $[K] \setminus
\left\{ y(i) \right\}_{i=1}^{K^\text{partial}}$ according to their relevance
scores $(r_j(x))_{j=1}^K$, and then append them to the first
$K^\text{partial}$ elements.  More interestingly, this property allows
efficiently retrieving the $M \ge 1$ best configurations with
Alg.~\ref{alg:efficient-computation-configurations}.  Throughout the loop,
we make sure that any set of permutations $\mc{Y}_j^m$ is a subset of
permutations consistent with a finite number of partial rankings (pairwise
comparisons), and that its best and second-best configurations $y_j^m$ and
$y_{j,2}^m$ only differ by a neighboring transposition of the form $(i+1,
i)$, satisfying
\begin{align}
  \label{eqn:ranking-second-best-config}
  y_{j,2}^m \defeq \argmin_{y \in \mc{Y}_j^m} \{ s(x,y) \mid \exists i \in [K],  ~ y = (i+1, i) \circ y_j^m  \}.
\end{align}
If we can guarantee this loop invariant, then there always exists $i_{j,m}
\in [K]$ such that $ y_{j,2}^m = (i_{j,m} +1, i_{j,m}) \circ y_j^m$, and we
only need to define the partition function on a smaller subset of
$2^{\mc{Y}} \times \mc{Y} \times \mc{Y}$: for any subset of permutations
$\tilde{\mc{Y}} \subset \mc{Y}$, $\tilde{y} \in \tilde{\mc{Y}}$ and $i \in
[K]$ such that $(i+1, i ) \circ \tilde{y} \in \tilde{\mc{Y}}$, we let
\begin{align}
\label{eqn:ranking-partition-function}
\begin{split}
  &\textsc{Partition}_\text{Ranking}( \tilde{\mc{Y}},  \tilde{y},  (i+1, i )
  \circ \tilde{y}) \defeq  \\
  & ~~~
  \tilde{\mc{Y}}   \cap \big\{ y \in \mc{Y} \mid y^{-1}(\tilde{y}(i)) < y^{-1}(\tilde{y}(i+1)) \big\}, 
   \tilde{\mc{Y}}   \cap \big\{ y \in \mc{Y} \mid y^{-1}(\tilde{y}(i)) < y^{-1}(\tilde{y}(i+1)) \big\},
  \end{split}
\end{align}
splitting $\tilde{\mc{Y}}$ according to whether $\tilde{y}(i)$ has a higher
rank than $\tilde{y}(i+1)$.

The next lemma, whose proof is in
Appendix~\ref{proof-lem:ranking-config-algo}, states that this partition
rule indeed guarantees that, at every step $m$ of the loop in
Algorithm~\ref{alg:efficient-computation-configurations}, the second-best
configuration in $\mc{Y}_j^m$ satisfies the
invariant~\eqref{eqn:ranking-second-best-config}.

\begin{lemma}
  \label{lem:ranking-config-algo}
  Assume the score function is
  ranking-consistent~\eqref{eqn:consistency-ranking-score} for a set of
  relevance functions $\{ r_k \}_{k=1}^K$.  Then
  Algorithm~\ref{alg:efficient-computation-configurations} with
  the~$\textsc{Partition}_\text{Ranking}$
  function~\eqref{eqn:ranking-partition-function} produces a sequence of
  partitions with second-best configurations satisfying
  equation~\eqref{eqn:ranking-second-best-config}.
\end{lemma}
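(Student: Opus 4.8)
The plan is to prove the loop invariant by induction on the loop index $m$, carrying a slightly stronger hypothesis than~\eqref{eqn:ranking-second-best-config}: that each cell $\mc{Y}_j^m$ is an \emph{interval in the (right) weak Bruhat order} on $\biject_K$ whose minimal element is exactly the best configuration $y_j^m$. I fix once and for all the reference order given by decreasing relevance $r_\cdot(x)$ (breaking ties consistently), and measure a permutation $y$ by its inversion set $\mathrm{inv}(y) \defeq \{ \{a,b\} : r_a(x) > r_b(x),\ y^{-1}(a) > y^{-1}(b)\}$, so that $y \preceq y'$ in the weak order iff $\mathrm{inv}(y) \subseteq \mathrm{inv}(y')$. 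This hypothesis gives everything at once: an interval is cut out by finitely many pairwise ``before'' constraints (the stated loop invariant that $\mc{Y}_j^m$ is consistent with a finite set of pairwise comparisons), it has a unique weak-order bottom, and its covers of that bottom are precisely the single adjacent transpositions that stay in the cell, which is exactly the set over which~\eqref{eqn:ranking-second-best-config} minimizes.

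The easy half is that ranking-consistency~\eqref{eqn:consistency-ranking-score} makes $\rankscore(x,\cdot)$ monotone nondecreasing along the weak order. Each covering step is an adjacent transposition $y' = (i{+}1,i)\circ y$ that creates one inversion, i.e.\ $r_{y(i)}(x) > r_{y(i+1)}(x)$; applying~\eqref{eqn:consistency-ranking-score} to positions $(i,i{+}1)$ of $y'$ (whose first entry is now the less relevant item) yields $\rankscore(x,y) \le \rankscore(x,y')$. Hence the bottom $y_j^m$ minimizes $\rankscore(x,\cdot)$ over $\mc{Y}_j^m$, and for any other $y \in \mc{Y}_j^m$ a saturated chain from $y_j^m$ up to $y$ stays inside the interval, so its first step is a cover $c = (i{+}1,i)\circ y_j^m$ with $c \preceq y$ and therefore $\rankscore(x,c) \le \rankscore(x,y)$. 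Thus the second-best score over $\mc{Y}_j^m$ is attained at a cover of $y_j^m$, i.e.\ at an adjacent transposition remaining in $\mc{Y}_j^m$, which is precisely the quantity~\eqref{eqn:ranking-second-best-config} computes.

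For the induction, the base case $m=1$ is immediate, since $\mc{Y}_1^1 = \biject_K$ is the full weak order with bottom the relevance-sorted permutation. In the inductive step, the cells $\mc{Y}_j^{m+1} = \mc{Y}_j^m$ for $j \neq \text{ind}(m)$ are unchanged and inherit the hypothesis. For the split cell, the step-$m$ hypothesis identifies the second best as the cover $y_{m+1}^{m+1} = y_{\text{ind}(m),2}^m = (i{+}1,i)\circ y_{\text{ind}(m)}^m$, so $\textsc{Partition}_\text{Ranking}$ of~\eqref{eqn:ranking-partition-function} splits $\mc{Y}_{\text{ind}(m)}^m$ according to whether the single pair $\{a,b\}$ with $a = y_{\text{ind}(m)}^m(i)$, $b = y_{\text{ind}(m)}^m(i{+}1)$ lies in $\mathrm{inv}(\cdot)$, where $a,b$ are \emph{adjacent} in the bottom. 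I would then show that cutting a weak-order interval by a single such coordinate yields two weak-order intervals: the ``keep'' branch retains $y_{\text{ind}(m)}^m$ as its bottom, while the ``flip'' branch has bottom exactly $(i{+}1,i)\circ y_{\text{ind}(m)}^m = y_{m+1}^{m+1}$, with flipping only this adjacent pair consistent with all inherited constraints precisely because $a,b$ are adjacent and no third item lies between them. Together with Lemma~\ref{lem:validity-config-algo}, this closes the induction.

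The hard part is this last lattice-theoretic step: verifying that restricting the inversion coordinate of an \emph{adjacent} pair carves a weak-order interval into two subintervals with the designated minima. Adjacency is essential and cannot be dropped. Splitting a non-adjacent pair can produce a constraint set whose linear extensions have two incomparable minima: for items with $r_a(x) > r_b(x) > r_c(x)$, enforcing only ``$c$ before $a$'' leaves the two incomparable optima obtained by moving $c$ up versus moving $a$ down, which have equal best scores yet differ by a $3$-cycle rather than a single transposition, so the second-best would not be an adjacent neighbor of the best. The proof must therefore lean on the fact that the algorithm only ever splits pairs adjacent in the current optimum, combined with the monotonicity above and careful bookkeeping of relevance and score ties, to rule out exactly such isolated secondary optima.
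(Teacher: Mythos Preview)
Your approach is correct and takes a genuinely different, more structural route than the paper.

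The paper never names the weak Bruhat order. Instead, it runs the algorithm all the way to $M=K!$ while \emph{defining} $y_{j,2}^m$ by~\eqref{eqn:ranking-second-best-config} (best adjacent-transposition neighbor) rather than as the true second best, and inductively verifies two elementary invariants: (i) a cell is a singleton iff no adjacent transposition of $y_j^m$ lies in it; (ii) any adjacent transposition of $y_j^m$ that remains in the cell cannot strictly lower the score. For (ii) the paper argues by constraint history: if the swap lowered the score, ranking-consistency would force $r_{y_j^m(\alpha)}(x) < r_{y_j^m(\alpha+1)}(x)$, but then the pair $\{y_j^m(\alpha),y_j^m(\alpha+1)\}$ must have been the subject of an earlier flip-branch split, so the cell already carries the constraint pinning their relative order and the transposition is excluded. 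From (i)--(ii) the paper concludes that the algorithm enumerates all $K!$ permutations in sorted order, which \emph{a posteriori} certifies that the adjacent-transposition candidate was in fact the true second best throughout.

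Your plan instead strengthens the loop invariant to ``each $\mc{Y}_j^m$ is a weak-order interval with bottom $y_j^m$'', proves score monotonicity along covers directly from~\eqref{eqn:consistency-ranking-score}, and deduces at once that the minimizer is the bottom and the second best is a cover. This buys a cleaner, non-retrospective argument: you never need to run to $K!$ or appeal to global sorting. The price is the lattice step you flag as hard. It is in fact short once stated precisely: since the weak order is a lattice, the keep branch is $[u,\, v \wedge v_{ab}]$ (where $v_{ab}$ is the longest permutation with $a$ before $b$) and hence an interval with bottom $u$; for the flip branch, because $a,b$ are adjacent in $u$ the set $\mathrm{inv}(u)\cup\{\{a,b\}\}$ is a legitimate inversion set equal to $\mathrm{inv}((i{+}1,i)\circ u)$, so the flip branch is exactly $[(i{+}1,i)\circ u,\, v]$. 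Your caution about non-adjacent splits is well placed and matches the paper's implicit reliance on adjacency, but once you record this two-line lattice computation the induction closes without further work. Both proofs require the same tie-breaking convention (initialize $y_1^1$ as the relevance-sorted permutation), which the paper also assumes.
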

\noindent
That is, Algorithm~\ref{alg:efficient-computation-configurations} is
correct.

\section{Experiments}
\label{sec:experiments}

In this section, we test our weakly supervised methods experimentally, in
different classification and regression problems, on both synthetic and real
datasets, with an emphasis on their computational efficiency and
informativeness.  The primary goal of this paper is not to provide
end-to-end models with only partially supervised data, but rather to
introduce a new form of coverage validity and show how to achieve it with
partially labeled data.  In contrast to the split-conformal
method~\cite{VovkGaSh05}, which requires fully supervised instances for both
training and validating, we only need these to train a model and form a
scoring function suitable for the application of
Alg.~\ref{alg:partial-supervised-conformal}. In some cases, standard models
already exist, such as in image classification~\cite{HeZhReSu16}.

To provide a meaningful comparison with existing conformal methods and test
for predictive set size efficiency, we use fully labeled real datasets, and
introduce different plausible forms of weak supervision on our calibration
and test sets before applying
Algorithm~\ref{alg:partial-supervised-conformal} to construct confidence
sets.  Our method displays similar behavior across all datasets and forms
of partial information. To provide a baseline, we also run a standard fully
supervised conformal scheme (FSC) using the strong labels $Y_i$
and true scores $\score(X_i,Y_i)$, which runs similarly as
Alg.~\ref{alg:partial-supervised-conformal}, but with threshold
\begin{align}
  \label{eqn:fsc-method-threshold}
  \what{t}^\text{full}_n \defeq (1+n^{-1})(1-\alpha)\text{-quantile of } \{ \score(X_i,Y_i) \}_{i=1}^n.
\end{align}
We can then estimate the gain in efficiency---in the form of decreased
confidence set sizes---that stems from the weakening of strong
coverage~\eqref{eqn:conformal-inference-guarantee} to weak
coverage~\eqref{eqn:partial-conformal-inference-coverage}.

%% \subsection{Weakly labeled multiclass problems}
\subsection{A toy classification example}
\label{sec:experiments:toy}

\definecolor{mygray}{rgb}{0.85,0.85,0.85}

\begin{figure}
 \centering
  % To show a grid to better position drawings, uncomment grid
  \begin{overpic}[
  		%		grid, %		
  				scale=0.55]{%
     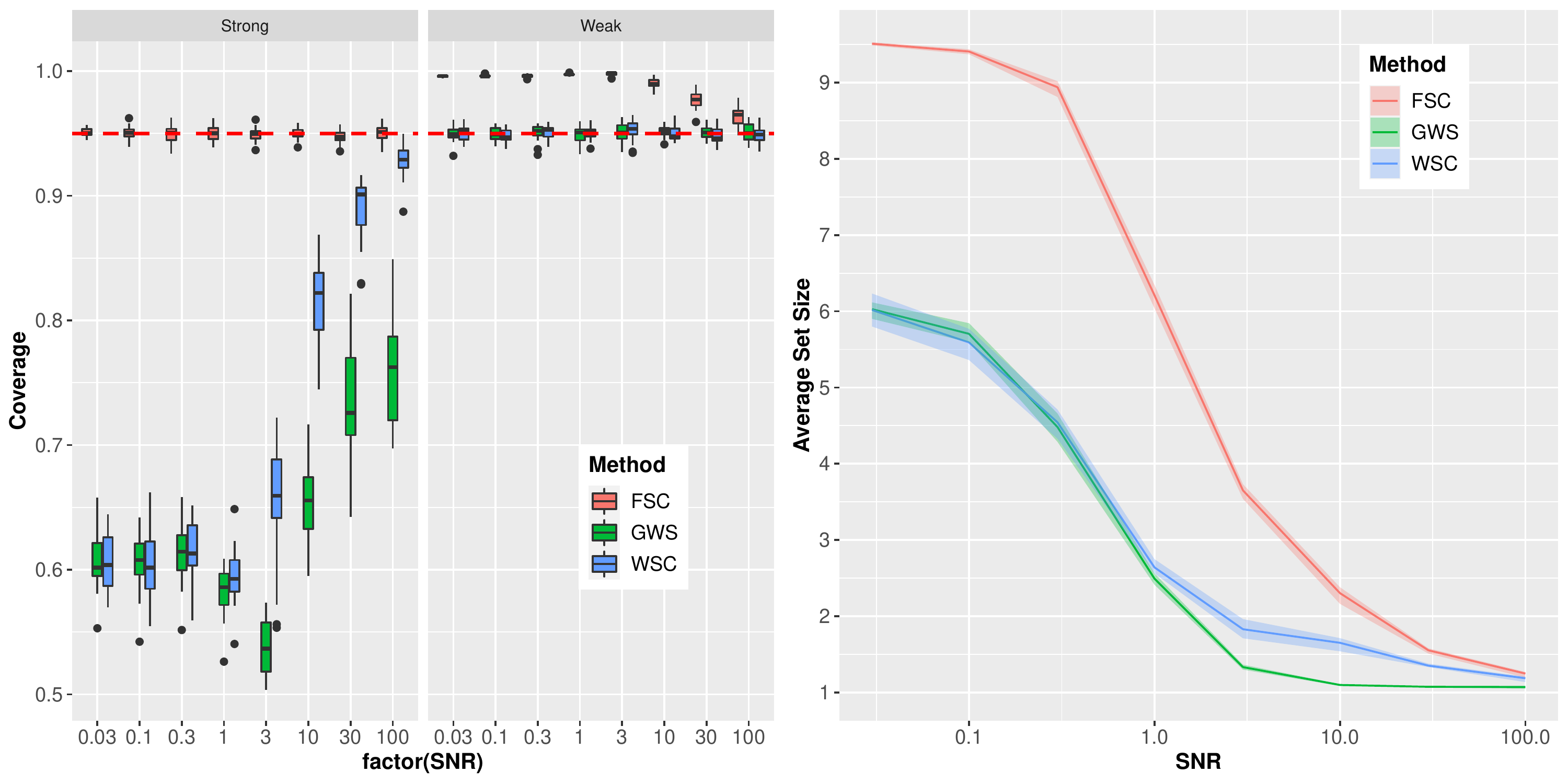}
    % Put white box over vertical axis label in left plot
    \put(-1,10){
      \tikz{\path[draw=white, fill=white] (0, 0) rectangle (.4cm, 6cm)}
    }
%    \put(0,10){\rotatebox{90}{
%        \small $\P(Y \in \what{C}(X))$ and $\P( S \cap \what{C}(X) \neq \emptyset)$}
%    }
    % Put white box over horizontal axis label in left plot
    \put(15, 0){
      \tikz{\path[draw=white, fill=white] (0, 0) rectangle (4cm, .4cm)}
    }
    \put(15, 0){
       \small $\sigma^{-1}$
    }
        \put(38, 0){
       \small $\sigma^{-1}$
    }
    % Put white box over horizontal axis label in right plot
    \put(70, 0){
      \tikz{\path[draw=white, fill=white] (0, 0) rectangle (4cm, .3cm)}
    }
    \put(75, 0){
       \small $\sigma^{-1}$
    }
    % Put white box over vertical axis label in right plot
    \put(49, 13){
      \tikz{\path[draw=white, fill=white] (0, 0) rectangle (.4cm, 5cm)}
    }
    % Vertical axis label on right plot
    \put(50, 22){\rotatebox{90}{
        \small $\E \left[ \big|\hat{C}(X)\big| \right]$}}

             \put(6,47.5){
      \tikz{\path[draw=mygray, fill=mygray] (0, 0) rectangle (3cm, .3cm)}
    }
       \put(30,47.5){
      \tikz{\path[draw=mygray, fill=mygray] (0, 0) rectangle (3cm, .3cm)}
    }
            \put(10,47.5){
     \small $\P(Y \in \what{C}(X))$
    }
       \put(31,47.5){
      \small $\P( \weakset \cap \what{C}(X) \neq \emptyset)$
    }
    
  \end{overpic}
  \caption{Results for the simulated multiclass data~\eqref{eqn:weak-multiclass-simulation-data}, over $N_\text{trials}=20$ trials. The left plot shows respectively the strong~\eqref{eqn:conformal-inference-guarantee} and weak~\eqref{eqn:partial-conformal-inference-coverage} coverage for the greedy weakly supervised (GWS),  the weakly supervised conformal (WSC) and the fully supervised conformal (FSC) confidence sets. The right plot displays the average confidence set size for these methods. }
  \label{fig:exp-multiclass-simu-coverage-avgsize}
\end{figure}

We first perform an experiment with a toy multiclass data set containing $K=10$ different classes and $d=2$ dimensional features. 
We consider a partially supervised problem on $\mc{X} \times \mc{Y} = \R^d \times [K]$ for which we wish to output valid confidence sets.
We use the following model: each potential response $y \in [K]$ has a noisy
score depending on the feature vector $X \in \R^d$ though a vector
$\theta_y^\star \in \R^d$,
\begin{align}
  \label{eqn:weak-multiclass-simulation-data}
  \{ \scorerv^\text{oracle}_{y} \}_{y \in [K]} \mid X=x  \sim \normal( \{x^T \theta_y^\star \}_{y \in [K]} , \sigma^2 I_K)
\end{align} 
Ideally, we would recover the strong label $Y \defeq \argmin_{y \in \mc{Y}}
\scorerv^\text{oracle}_{y}$, but our weakly supervised methods do not
observe $Y$ directly: instead, for a random instance-dependent threshold
$T$, we only have access to the weak set
\begin{align*}
  \weakset \defeq \{ y \in \mc{Y} \mid \scorerv^\text{oracle}_y \le T \}.
\end{align*} 
As motivation, consider a supervised learning task in which, out of all
potential responses, there is always only one ground truth, but there are
other labels that are ``good enough'' (i.e.\ have a low enough score) to answer
a certain query.  In this setting, a confidence set is weakly
valid~\eqref{eqn:partial-conformal-inference-coverage} as long as it
contains at least one label $y$ such that $\scorerv^\text{oracle}_y \le T$,
whereas it is strongly valid~\eqref{eqn:conformal-inference-guarantee} if it
contains $Y$.

We vary the signal-to-noise ratio $\sigma^{-1} \in \{ 10^{-2}, \dots, 10^2
\}$: when it is too small, no model (even an oracle one) can be highly
predictive, and a standard conformal method should provide large
uninformative confidence sets, whereas we expect our new definition of
coverage to yield smaller sets, as any label in $\weakset$ (i.e.\ with a low
enough score) provides valid coverage.

In this experiment, we compare three different methods.  The ``Greedy weakly
supervised'' (GWS) method only uses partially labeled data both when training
and conformalizating.  It first trains $K$ separate logistic regressions
with $\{ X_i \}$ as features and each $\{ \indic{y \in \weakset_i} \}$ for
all $y \in \mc{Y}$ as potential response, providing a model for $P(y \in
\weakset \mid X=x)$, and models the distribution of $\weakset$ given $X=x$
as label-independent (see
Definition~\ref{def:independence-structure-probability-distribution}).  It
then computes a nested sequence of confidence sets thanks to
Alg.~\ref{alg:greedy-weakly-supervised-optimal-scoring}, which we then feed
to the conformalization Algorithm~\ref{alg:partial-supervised-conformal}
using the nested scoring mechanism~\eqref{eqn:score-optimal-conditional}.

The second and third methods, the ``Weakly supervised conformal" (WSC) and
the ``Full supervised conformal" (FSC) methods respectively, use fully
supervised data for training: we first train a standard logistic regression
model $p_\theta(y \mid x) \propto \exp(\theta_y^T x) $ on $\{ (X_i, Y_i)
\}$, and then construct a scoring function using the Generalized Inverse
Quantile (GIQ) procedure that~\citet{RomanoSeCa20} introduce.  In the
conformalization step, the WSC method runs
Algorithm~\ref{alg:greedy-weakly-supervised-optimal-scoring} with partially
labeled calibration data, while the FSC method uses strongly labeled data to
compute the threshold $\what{t}_n^\text{full}$
in~\eqref{eqn:fsc-method-threshold}.  The threshold
$\what{t}_n$ in Alg.~\ref{alg:greedy-weakly-supervised-optimal-scoring} is
always smaller than $\what{t}_n^\text{full}$, so the FSC method returns
larger confidence sets than the WSC method.  We expect that
as the signal to noise ratio decreases, the gap between the GWS and WSC
confidence sets and the FSC confidence sets increases.

The precise experimental set-up is as follows: we simulate $n=10^4$ data
points, splitting them into training ($30\%$), calibration ($20\%$) and test
($50\%$) sets.  We draw each $\theta_y$ uniform on $\sphere^{d-1}$, $\{
X_i\}_{i=1}^n \simiid \normal(0,I_d)$, choosing weak threshold $T
\sim \uniform[\min_{y \in \mc{Y}} \{S^\text{Oracle}_{y}\},
  \max_{y \in \mc{Y}} \{S^\text{Oracle}_{y}\}]$.  We repeat the entire
process $N_\text{trials}=20$ times to account for uncertainty, presenting
our results in Figure~\ref{fig:exp-multiclass-simu-coverage-avgsize}.

As we expect, using an alternative weaker version of
coverage~\eqref{eqn:partial-conformal-inference-coverage} allows us to
significantly decrease the size of the confidence set (by up to a factor of
3), especially when the signal-to-noise ratio is small, as one must include
more classes in the confidence set to maintain strong coverage.  Indeed, we
can see that the strong coverage~\eqref{eqn:conformal-inference-guarantee}
for the GWS and WSC procedures fall well below $1-\alpha = 95\%$ in this
case, since they only strive for weak $1-\alpha$ coverage, which they
consistently achieve.  Since the GWS method aims to construct minimal
confidence sets, we expect that it produces smaller confidence sets
than the WSC method, which simply leverages an existing strongly supervised
model; we consistently observe this across different values of $\sigma$.

\subsection{Document ordering for query answering}

We now present the results of two experiments using
Alg.~\ref{alg:partial-supervised-conformal} in a ranking problem.  The first
one is a simulation of a standard ranking task, while the second focuses on
ranking documents' relevance to specific queries in the Microsoft LETOR
dataset~\cite{QiniLi13}.

\subsubsection{Ranking simulation study}
\label{sec:exp-ranking-simulation}
\begin{figure}
 \centering
  % To show a grid to better position drawings, uncomment grid
  \begin{overpic}[
  	%			 grid, %		
  				scale=0.55]{%
     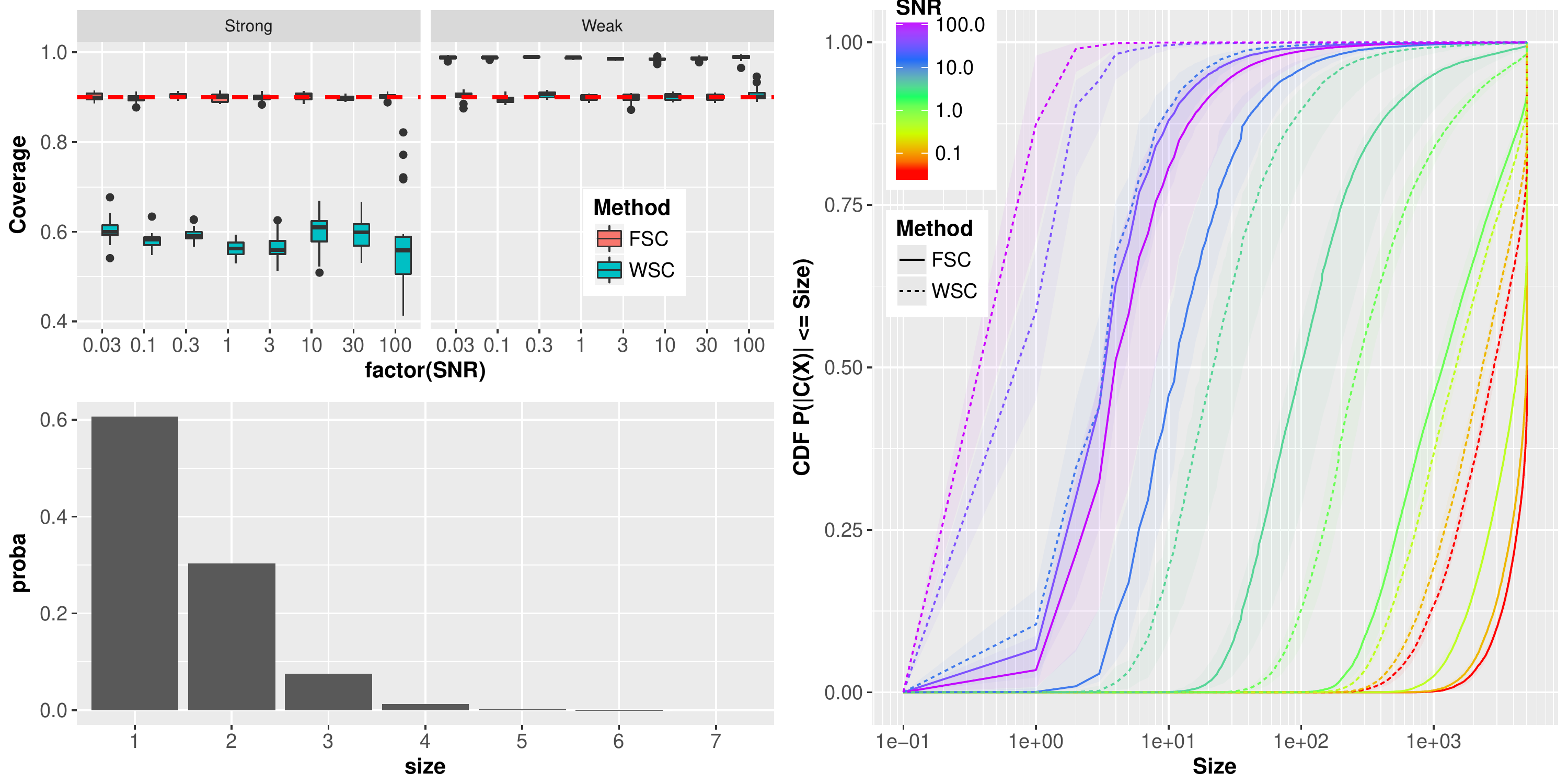}
    % Put white box over vertical axis label in left plot
    \put(-1,10){
      \tikz{\path[draw=white, fill=white] (0, 0) rectangle (.4cm, 6cm)}
    }
    \put(0,6){\rotatebox{90}{
        \small $\P( K_i^\text{partial} = t)$}
    }
%    \put(0,22){\rotatebox{90}{
%        \small $\P(Y \in \what{C}(X))$ and $\P( S \cap \what{C}(X) \neq \emptyset)$}
%    }
    % Put white box over horizontal axis label in left plot
    \put(15, 25){
      \tikz{\path[draw=white, fill=white] (0, 0) rectangle (4cm, .4cm)}
    }
    \put(14, 25.5){
       \small $\sigma^{-1}$
    }
        \put(37, 25.5){
       \small $\sigma^{-1}$
    }

    \put(25, 0){
      \tikz{\path[draw=white, fill=white] (0, 0) rectangle (4cm, .35cm)}
    }
    \put(22, 0){
       \small Size $t$
    }    
     
    % Put white box over horizontal axis label in right plot
    \put(70, 0){
      \tikz{\path[draw=white, fill=white] (0, 0) rectangle (4cm, .35cm)}
    }
    \put(75, 0){
       \small Size $t$
    }
    % Put white box over vertical axis label in right plot
    \put(49.5, 13){
      \tikz{\path[draw=white, fill=white] (0, 0) rectangle (.4cm, 5cm)}
    }
    % Vertical axis label on right plot
    \put(49.5, 20){\rotatebox{90}{
        \small $\P \left(|\what{C}(X)| \le t) \right)$}}
        
        \put(6,47.5){
      \tikz{\path[draw=mygray, fill=mygray] (0, 0) rectangle (3cm, .3cm)}
    }
       \put(30,47.5){
      \tikz{\path[draw=mygray, fill=mygray] (0, 0) rectangle (3cm, .3cm)}
    }
            \put(10,47.5){
     \small $\P(Y \in \what{C}(X))$
    }
       \put(31,47.5){
      \small $\P( \weakset \cap \what{C}(X) \neq \emptyset)$
    }
    
	\put(5, 50){
        \bf A}
    	\put(5, 25){
        \bf B}
    \put(52, 50){
        \bf C}
        
     \put(56, 49){
      \tikz{\path[draw=white, fill=white] (0, 0) rectangle (0.6cm, .4cm)}
    }
    \put(56.3, 49){
      \small $\sigma^{-1}$
    }
  \end{overpic}
  \caption{Results for the ranking simulation study~\eqref{eqn:ranking-simulation-setup-weak} over $N_\text{trials}=20$ trials.
  A: Strong~\eqref{eqn:conformal-inference-guarantee} and Weak~\eqref{eqn:partial-conformal-inference-coverage} coverage for the weakly supervised conformal (WSC) and the fully supervised conformal (FSC) confidence sets. 
  B: Density histogram of the variable $K^\text{partial}$~\eqref{eqn:ranking-simulation-setup-weak} governing the weak distribution in this example.
  C: Distribution of the confidence set size  $|\what{C}(X)|$ for different signal-to-noise ratios $\sigma^{-1}$.}
  \label{fig:exp-ranking-simu-summary}
\end{figure}

In a first simulation study, we aim to predict a ranking of labels $y \in [K]$ based on a feature vector $X \in \R^d$. 
Think here of a supervised problem where we want to rank users' preferences for a set of items.
Each user has an unknown relevance score $\scorerv^\text{Oracle}_y \in \R$ for each item $y \in [K]$, which induces a ground truth ranking over the labels:
\begin{align*}
Y \defeq \text{argsort} \{ \scorerv^\text{Oracle}_y \}_{y \in [K]} \in \biject_K.
\end{align*}
The problem is to recover this noisy ranking and produce valid confidence sets in $\biject_K$, but our weakly supervised methods do not observe the full ranking when conformalizing: they can only observe the ranking up to the $K_\text{partial} \le K$-th element, leading to the weak set
 \begin{align}
\label{eqn:ranking-simulation-setup-weak}
\weakset = \{ y  \in \biject_K \mid ~ \forall j \in [K_\text{partial}], y(j) = Y(j) \}.
\end{align}
In our experiment, we simulate $n=10^4$ i.i.d.\ different users, using the same (30,20,50) train/validation/test split as in Section~\ref{sec:experiments:toy}. 
With $K=7$ and $d=2$, we draw the user feature vector $X_i \simiid \normal(0,I_d)$,  and then conditionally on $X_i$, we produce normal item-wise relevance scores $\{ \scorerv^\text{Oracle}_{iy} \}_{i\in [n], y \in [K]}$ following the distribution~\eqref{eqn:weak-multiclass-simulation-data}. 
We finally simulate partial supervision by drawing the number of observed elements in the ranking $K_i^\text{partial}$ as $\min(K, 1+ A_i)$, where $A_i \simiid \text{Poi}(.5)$. 
The lower left panel of Figure~\ref{fig:exp-ranking-simu-summary} shows the overall distribution of this quantity: most users only reveal the first 1 to 3 items in their optimal ranking.

%We first focus on the same simulation setup~\eqref{eqn:weak-multiclass-simulation-data} with $n=10^4$, $K=7$, but now the label $Y_i \in \biject_K$ ($|\biject_K|=5040$) satisfies

%\begin{align}
%\label{eqn:ranking-simulation-setup}
%S^\text{Oracle}_{i, Y_i(1)} \le S^\text{Oracle}_{i, Y_i(2)} \le \cdots \le S^\text{Oracle}_{i, Y_i(K)},
%\end{align}
%i.e. we want to recover the ordering $Y_i$ on the scores $\{ S^\text{Oracle}_{i, j} \}_{j \in [K]}$.
%Additionally, we introduce weak supervision through partial ranking information: for each instance, there exists $K_i^\text{partial} \in [K]$ so that

We then produce strongly and weakly valid confidence sets at the $1-\alpha \defeq 90\%$ level.
We use the same scoring model for both the fully supervised conformal (FSC) and weakly supervised conformal (WSC) procedures: we learn linear individual relevance score functions $\{ r_y \}_{y \in [K]}$ (with fully supervised training data) via the ListNet procedure~\cite{CaoQiLiTsLi07}, which we briefly describe here.
Given a set of relevance scores $\{ r_y \}_{y \in \mc{Y}} \in \R^K$,  ListNet models the probability of a ranking $\pi \in \biject_K$ as    
\begin{align}
\label{eqn:listnet-model}
P_r(\pi) \defeq \prod_{y=1}^K \frac{\exp(r_{\pi(y)})}{\sum_{l=y}^k \exp(r_{\pi(l)})},
\end{align}
which gives each item $y \in \mc{Y}$ a top-1 probability (of ranking first) equal to
\begin{align*}
P^1_r(y)  \defeq P_r(\pi(1) = y) = \frac{\exp(r_{y})}{\sum_{l=1}^k \exp(r_{l})}.
\end{align*}
Given a training data set containing pairs $(X, R) \in \mc{X} \times \R^K$ of features/relevance scores,  we learn score mappings by minimizing the log-loss of the top-1 distribution over a set $\mc{F}$ of functions
\begin{align*}
\{ \hat{r}_y \}_{y \in [K]} \defeq \argmin_{ \tilde{r} \in \mc{F}^\mc{Y} } \left\{ \sum_{(X,R) \in \text{ training data}} \sum_{k=1}^K -P^1_{R}(k) \log\left( P^1_{\tilde{r}(X)}(k) \right)\right\}.
\end{align*}
%i.e we try to match the top-1 distribution induced by $P_r$ by its counterpart under $P_{\tilde{r}(x)}$.
In our experiment, we only observe the ranking (or even a fraction of),  not the true per-item relevance scores, hence, following common practice~\cite{CaoQiLiTsLi07}, we use $R_y = K - ~ \text{the rank of the item} = K - Y^{-1}(y)$ as a proxy for our observed item-wise relevance scores when training our model.

In our experimental set up, each relevance score function $r_y: \mc{X} \to \R$ ideally estimates the true conditional mean of the oracle scores, $x \mapsto x^T \theta_y^\star$.  
Given these individual scores,  we use the scoring mechanism~\eqref{eqn:kendall-ranking-score} with $\psi (x,y) \defeq (y-x)_+$ and conformalize using the strategy we describe in Section~\ref{subsubsec:ranking-conformal-scores}.
%Each confidence set is readily computable as the total number of configurations $7!=5040$ still allows to fully enumerate them.

The difference between the WSC and FSC methods is the conformalization step on calibration data: WSC runs Alg.~\ref{alg:partial-supervised-conformal} with partially supervised data to compute the score threshold $\what{t}_n$ while FSC uses strongly supervised data to return the more conservative threshold $\what{t}_n^\text{full}$ in~\eqref{eqn:fsc-method-threshold}.

Our results fit our initial expectations, in line with our first experiments: the size of the confidence set, as Figure~\ref{fig:exp-ranking-simu-summary}C shows,  benefits from the weaker definition of coverage: for any value of the signal to noise ratio $\sigma^{-1} > 0$, the WSC method produces much smaller and more informative confidence sets than the FSC method, as it only needs to include a ranking with the correct first $K_\text{partial}$ elements to provide valid coverage.  ,At the cost of the strong coverage falling below $1-\alpha$ (see Fig.~\ref{fig:exp-ranking-simu-summary}A),  and with little information (only the first $K_\text{partial} < K$ labels), the WSC method constructs predictive sets that are much smaller and yet still valid (in a weak sense). 

\subsubsection{Ranking experiment with Microsoft LETOR dataset}

\begin{figure}
 \centering
  % To show a grid to better position drawings, uncomment grid
  \begin{overpic}[
%  				 grid, %		
  				scale=0.55]{%
     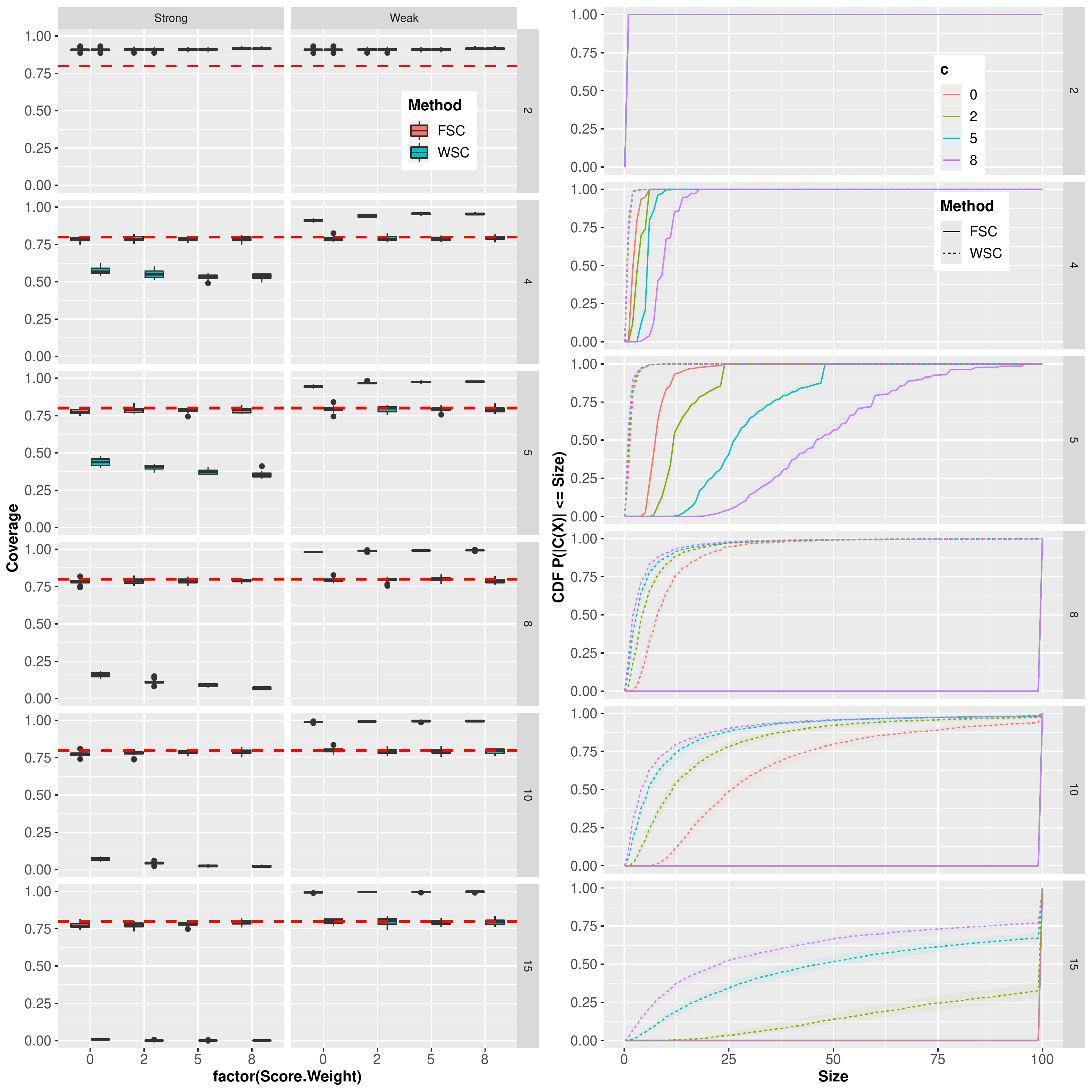}
    % Put white box over vertical axis label in left plot
    \put(-1,45){
      \tikz{\path[draw=white, fill=white] (0, 0) rectangle (.4cm, 6cm)}
    }
%    \put(0,22){\rotatebox{90}{
%        \small $\P(Y \in \what{C}(X))$ and $\P( S \cap \what{C}(X) \neq \emptyset)$}
%    }
    % Put white box over horizontal axis label in left plot
    \put(15, 0){
      \tikz{\path[draw=white, fill=white] (0, 0) rectangle (4cm, .4cm)}
    }
    \put(14, 0){
       \small $c$
    }
        \put(37, 0){
       \small $c$
    }
%         \put(81.2, 92.8){
%      \tikz{\path[draw=white, fill=white] (0, 0) rectangle (2.3cm, .3cm)}
%    }
%    \put(82.5, 93.5){
%       \small $c$
%    }

    % Put white box over horizontal axis label in right plot
    \put(70, 0){
      \tikz{\path[draw=white, fill=white] (0, 0) rectangle (4cm, .35cm)}
    }
    \put(75, 0){
       \small Size $t$
    }
    % Put white box over vertical axis label in right plot
    \put(49.5, 42){
      \tikz{\path[draw=white, fill=white] (0, 0) rectangle (.4cm, 5cm)}
    }
    % Vertical axis label on right plot
    \put(49.5, 42){\rotatebox{90}{
        \small $\P \left(|\what{C}(X)| \wedge M \le t) \right)$}}
        
        \put(6,97.5){
      \tikz{\path[draw=mygray, fill=mygray] (0, 0) rectangle (2.5cm, .3cm)}
    }
       \put(30,97.5){
      \tikz{\path[draw=mygray, fill=mygray] (0, 0) rectangle (2.5cm, .3cm)}
    }
            \put(10,97.5){
     \small $\P(Y \in \what{C}(X))$
    }
       \put(31,97.5){
      \small $\P( \weakset \cap \what{C}(X) \neq \emptyset)$
    }
    
    \put(47,88){
      \tikz{\path[draw=mygray, fill=mygray] (0, 0) rectangle (.2cm, 1cm)}
    }
	 \put(47.5, 95){\rotatebox{270}{
        \small $K=2$}
        }    

 \put(47,72){
      \tikz{\path[draw=mygray, fill=mygray] (0, 0) rectangle (.2cm, 1cm)}
    }
	 \put(47.5, 79){\rotatebox{270}{
        \small $K=4$}
        }    

 \put(47,56){
      \tikz{\path[draw=mygray, fill=mygray] (0, 0) rectangle (.2cm, 1cm)}
    }
	 \put(47.5, 63){\rotatebox{270}{
        \small $K=6$}
        }    

 \put(47,40){        
	\tikz{\path[draw=mygray, fill=mygray] (0, 0) rectangle (.2cm, 1cm)}
    }
	 \put(47.5, 47){\rotatebox{270}{
        \small $K=8$}
        }    
    
\put(47,24){    
	\tikz{\path[draw=mygray, fill=mygray] (0, 0) rectangle (.2cm, 1cm)}
    }
	 \put(47.5, 31){\rotatebox{270}{
        \small $K=10$}
        }    

 \put(47,8){
	\tikz{\path[draw=mygray, fill=mygray] (0, 0) rectangle (.2cm, 1cm)}
    }
	 \put(47.5, 15){\rotatebox{270}{
        \small $K=15$}
        }

    \put(97,88){
      \tikz{\path[draw=mygray, fill=mygray] (0, 0) rectangle (.2cm, 1.8cm)}
    }
	 \put(97.5, 95){\rotatebox{270}{
        \small $K=2$}
        }    

 \put(97,72){
      \tikz{\path[draw=mygray, fill=mygray] (0, 0) rectangle (.2cm, 1.8cm)}
    }
	 \put(97.5, 79){\rotatebox{270}{
        \small $K=4$}
        }    

 \put(97,56){
      \tikz{\path[draw=mygray, fill=mygray] (0, 0) rectangle (.2cm, 1.8cm)}
    }
	 \put(97.5, 63){\rotatebox{270}{
        \small $K=6$}
        }    

 \put(97,40){        
	\tikz{\path[draw=mygray, fill=mygray] (0, 0) rectangle (.2cm, 1.8cm)}
    }
	 \put(97.5, 47){\rotatebox{270}{
        \small $K=8$}
        }    
    
\put(97,24){    
	\tikz{\path[draw=mygray, fill=mygray] (0, 0) rectangle (.2cm, 1.8cm)}
    }
	 \put(97.5, 31){\rotatebox{270}{
        \small $K=10$}
        }    

 \put(97,8){
	\tikz{\path[draw=mygray, fill=mygray] (0, 0) rectangle (.2cm, 1.8cm)}
    }
	 \put(97.5, 15){\rotatebox{270}{
        \small $K=15$}
        }    
                          
	\put(5, 100){
        \bf A}
    \put(52, 100){
        \bf B}
  \end{overpic}
  \caption{Results for LETOR ranking dataset~\cite{QiniLi13} over $N_\text{trials}=20$ trials.  Each plot represents a different value of $K \in \{ 2, 4, 8, 10, 15\}$, the number of documents to rank, and we compare different scoring functions by varying the value of $c \in \{0, 2, 5,8\}$ in equation~\eqref{eqn:pairwise-ranking-loss}.
  A: Strong~\eqref{eqn:conformal-inference-guarantee} and Weak~\eqref{eqn:partial-conformal-inference-coverage} coverage for the weakly supervised conformal (WSC) and the fully supervised conformal (FSC) confidence sets. 
  B: Distribution of the confidence set size $|\what{C}(X)|$ for different numbers $K$ of suggested documents.  We display here the distribution of $\min(|\what{C}(X)|, M)$ for $M=100$.}
  \label{fig:exp-ranking-letor-summary}
\end{figure}

We now tackle a slightly different type of ranking problem:
we wish to rank a set of potential documents by order of relevance to a specific user query: documents more relevant to the query should occupy a higher position in the final ranking.
A search engine is a good example of such problem: a user makes a search query,  and the task is to sort Web pages that best answer that query among a (potentially large) set of potential pages.

\paragraph{Learning to rank with Microsoft LETOR dataset~\cite{QiniLi13}}
To study that problem, we design an experiment with Microsoft LETOR data set. 
For each potential query/document pair $(x,d)$,  the dataset aggregates several quantities of interest to determine whether $d$ is relevant to $x$ into a $d=46$-dimensional feature vector $\phi(x,  d) \in \R^d$.
For a query $x$ with a set of potentially relevant documents $D(x) \defeq \{ d_j  \}_{j=1}^{|D(x)|}$, our data set additionally contains a ranking $\Pi(x) \in \biject_{|D(x)|}$ that orders these documents according to their relevance.
Our goal is to retrieve that ranking using the feature vectors $ \{ \phi(x, d_j) \}_{j=1}^{D(x)}$.

\paragraph{A semi-synthetic weakly supervision set-up}
Here is how we construct weakly supervised calibration and test data sets.
For each split (calibration/test), we first sample $n=2000$ queries from the entire set of queries in LETOR validation and test datasets. 
For every query $X_i$, we select $K \in \{ 2, 4, 6, 8, 10, 20\}$ documents by first sorting $D(X_i)$ into $K$ equally sized subsets by relevance, so subset $\ell \in [K]$ contains the documents with rank $\Pi(x)_y$ for every $y \in \{ \frac{(\ell-1)|D(X_i)|}{K} +1,  \dots,  \frac{\ell |D(X_i)|}{K} \}$,
and then drawing one document from each box uniformly at random. 

This procedure ensures that there exists a significant relevance gap between any two potential documents in the query, and that the number of documents to rank is sufficient to allow reasonably-sized confidence sets. 
$\Pi(X_i)$ additionally induces a sub-ranking $Y_i \in \biject_K$ on these documents, which we treat as a strong label. 
Similarly to our approach in Section~\ref{sec:exp-ranking-simulation}, we introduce partial labels by assuming that our weakly supervised method can only access the first $K_i^\text{partial}$ elements of $Y_i$, where $K_i^\text{partial} \simiid 1+\text{Poi}(.5)$: this simulates the plausible setting where a user has given feedback on the most relevant documents to the query, but certainly not to all of them.
We repeat the entire simulation procedure $N_\text{trials} = 20$ times.

\paragraph{Building a ranking scoring function~\eqref{eqn:kendall-ranking-score}}
We next describe how we use fully supervised training data to construct the scoring function that we feed Alg.~\ref{alg:partial-supervised-conformal} with.
We first learn a linear query/document relevance function
\begin{align}
\label{eqn:relevance-function-letor}
r_\theta(x,d) \defeq \theta^T  \phi(x,d)
\end{align}
using the ListNet procedure~\eqref{eqn:listnet-model} on LETOR (fully supervised) train data 
\begin{align*}
\left( x_i,  (d_{i,j})_{j=1}^{D(x_i)},  y_i \in \biject_{D(x_i)} \right)_{i=1}^{n_\text{train}},
\end{align*}
containing 55700 different query/document pairs.

We then use a specific implementation of the score function $s^\text{Ranking}$ as in Eqn~\eqref{eqn:kendall-ranking-score}:
if we rank $K$ documents $\{ d_k \}_{k \in [K]}$ for a query $x$,  we rescale our relevance scores to the interval $[0,1]$,
\begin{align*}
\{ r_k(x) \}_{k \in [K]}  \defeq \biggr\{ \dfrac{r_\theta(x,d_k) - \min_{j \in [K]} r_\theta(x,d_j) }{\max_{j \in [K]} r_\theta(x,d_j) - \min_{j \in [K]} r_\theta(x,d_j)} \biggr\}_{k \in [K]},
\end{align*}
and then, for a choice of $c \in \{0,2,5,8 \}$,  apply the scoring mechanism~\eqref{eqn:kendall-ranking-score} with these relevance scores and pairwise comparison function
\begin{align}
\label{eqn:pairwise-ranking-loss}
\psi_c( r_1,  r_2) \defeq \exp(-c r_1 ) \left(r_2 - r_1 \right)_+.
\end{align}
In this example, we guarantee weak coverage if the true ranking $Y_i$ on the first $K_i^\text{partial}$ elements coincides with either one of the predictive rankings.
To keep the predictive set size small, we thus wish to ensure that it doesn't contain two rankings with the same first $K_i^\text{partial}$ elements (as they would be redundant): this is why we introduce the exponential term $\exp(-cr_1)$, which makes sure that when ranking all configurations by their score,  highly ranked configurations have different first elements (rather than different last elements).
%$c \ge 0$ is a parameter controlling how much we penalize inversions of elements with high relevance scores compared to low relevance scores. 
%The higher $c$ is,  the more configurations with inversions at the top of the ranking (rather than at the bottom) will have a higher ranking.
%Here, the value of $c$ is especially relevant because of the form of partial supervision: a configuration $y \in \biject_K$ guarantees weak coverage as soon as its first $K_i^\text{partial}$ elements are the same as $Y_i$, which is why there is no coverage gain if we have two redundant configurations with the same first elements in the confidence set. 
%By favoring inversions at the top of the ranking, we thus expect a higher value of the parameter $c$ to produce smaller confidence sets (with less ``useless" configurations).
To estimate the distribution of $|\what{C}(X)|$, we then compute the $M=100$ best rankings for each query using Alg~\ref{alg:efficient-computation-configurations}, and then replace the  size of the confidence set by $\min(M, |\what{C}(X)|))$, effectively truncating it to $M$. 

\paragraph{Experimental results}
We present our results in Figure~\ref{fig:exp-ranking-letor-summary}.
The confidence sets display the behavior we expect: when the number $K$ of items to classify is small,  the fully supervised conformal (FSC) and weakly supervised conformal (WSC) methods are similar, since partial labels are often equal to strong labels.  Since the overall number of configurations is small,  both methods are also able to maintain fairly small confidence sets.
On the other hand,  when $K$ grows,  the weak supervision method quickly departs from the full supervision one,  and is able to produce confidence sets that are much smaller: when $K \ge 8$,  the FSC method is unable to produce confidence sets with fewer than $100$ configurations,  as the number of configurations is large,  and the problem is inherently noisy,  especially for comparing documents with a fairly small relevance.  
The WSC (partially) overcomes that difficulty with its restrained notion of coverage,  and is able to maintain a majority of confidence sets with size smaller than $M$, at least until $K=15$. 
Of course,  this method pays a price in terms of strong coverage,  as for large $K$,  the confidence set almost never contains the actual ground true ranking.  
That said,  it may not a real issue as we are more interested in detecting which documents are actually relevant, and hence should have a higher rank, rather than correctly ordering documents with very little relevance to the query at the bottom of the list.

In addition, as we predicted, higher values of $c$ in the pairwise comparison function~\eqref{eqn:pairwise-ranking-loss} produce much smaller confidence sets by favoring more diverse rankings at the top of the list.
\subsection{Pedestrian tracking with partial matching information}

\begin{figure}
 \centering
  % To show a grid to better position drawings, uncomment grid
  \begin{overpic}[
%  				grid, %		
  				scale=0.55]{%
     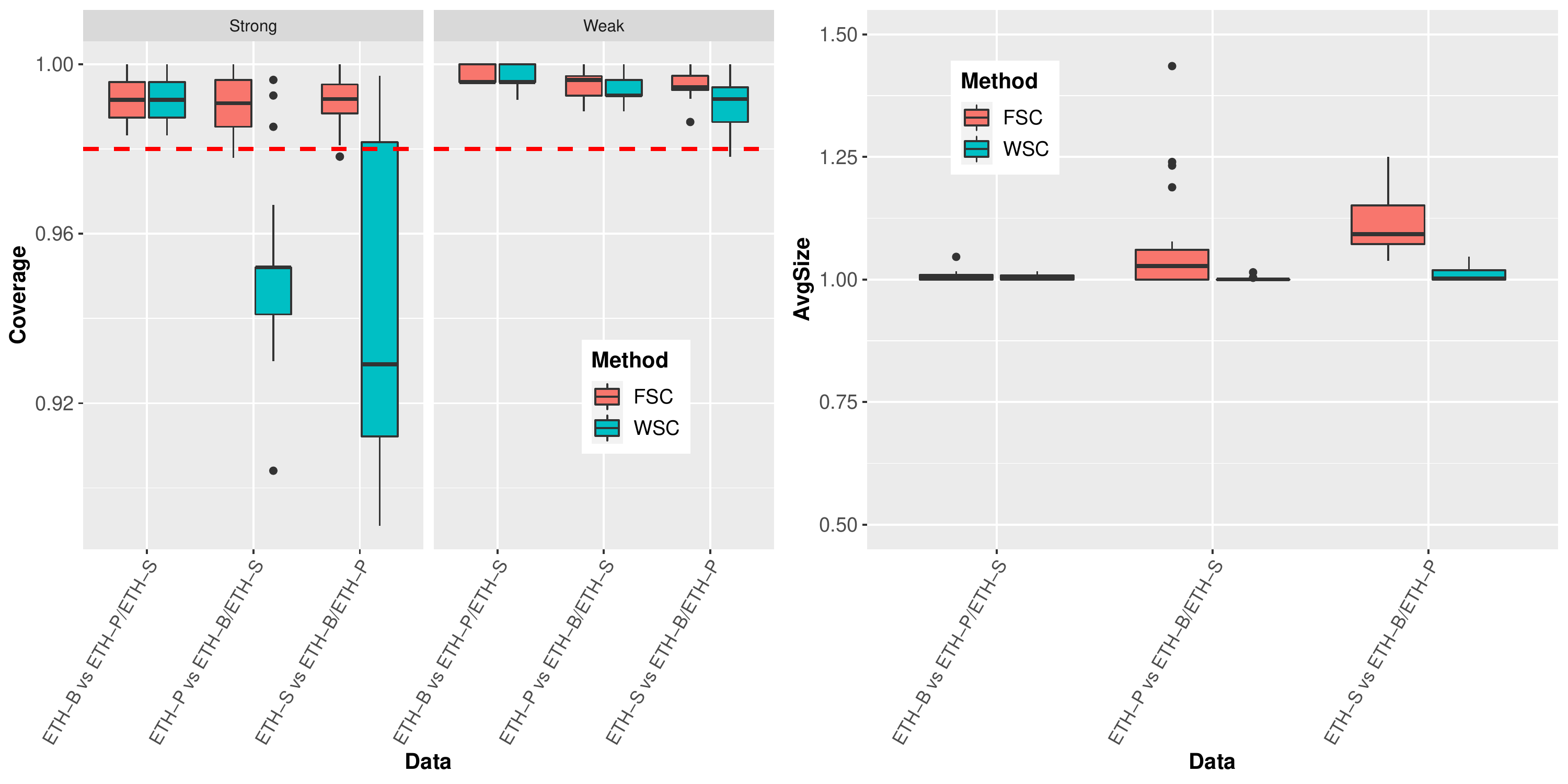}
    % Put white box over vertical axis label in left plot
    \put(-1,10){
      \tikz{\path[draw=white, fill=white] (0, 0) rectangle (.4cm, 6cm)}
    }
%     Put white box over horizontal axis label in left plot
    \put(15, 0){
      \tikz{\path[draw=white, fill=white] (0, 0) rectangle (4cm, .4cm)}
    }
    \put(15, 0){
       \small Train/Validation split
    }
    % Put white box over horizontal axis label in right plot
    \put(70, 0){
      \tikz{\path[draw=white, fill=white] (0, 0) rectangle (4cm, .35cm)}
    }
    \put(70, 0){
       \small Train/Validation split
    }
    % Put white box over vertical axis label in right plot
    \put(49, 13){
      \tikz{\path[draw=white, fill=white] (0, 0) rectangle (.4cm, 5cm)}
    }
    % Vertical axis label on right plot
    \put(49.5, 27){\rotatebox{90}{
        \small $\E \big|\hat{C}(X)\big|$}}

             \put(6,47.5){
      \tikz{\path[draw=mygray, fill=mygray] (0, 0) rectangle (3cm, .3cm)}
    }
       \put(30,47.5){
      \tikz{\path[draw=mygray, fill=mygray] (0, 0) rectangle (3cm, .3cm)}
    }
            \put(10,47.5){
     \small $\P(Y \in \what{C}(X))$
    }
       \put(31,47.5){
      \small $\P( \weakset \cap \what{C}(X) \neq \emptyset)$
    }
   
   	\put(5, 50){
        \bf A}
    \put(52, 50){
        \bf B}
  \end{overpic}
  \caption{Results for the video tracking matching dataset MOT2015~\cite{LealMiReSc15}, over $N_\text{trials}=20$ trials.  We use one video sequence for training (ETH-B, ETH-P or ETH-S) and the two others for calibration and testing.  A: Strong~\eqref{eqn:conformal-inference-guarantee} and weak~\eqref{eqn:partial-conformal-inference-coverage} coverage. B: Average confidence set size for fully supervised (FSC) and weakly supervised conformal (WSC) methods.}
  \label{fig:exp-matching-mot15}
\end{figure}

We now apply our weakly supervised conformal methods to a bipartite matching problem.
A common objective in computer vision,  relevant for instance for self-driving cars,  is to track people's trajectory throughout different time frames. 
Since we can leverage powerful algorithms~\cite{RedmonDiGiFa16} to individually detect objects in every single frame, the problem that we study here is actually a matching task where the goal is to match two sets of people appearing in two separate frames: this is an instance of a maximal matching problem.

\paragraph{Weak supervision with partial matchings}
In this context,  we expect partial supervision to come under the form of a partial matching: some people, e.g. people that are easier to track between two consecutive frames because they are in the foreground,  already have their match in the second frame, when others,  perhaps more difficult to track, are still waiting for a potential match.
Given these partially labeled instances, the goal then becomes to return confidence sets of matchings that guarantee $1-\alpha$ coverage: to provide valid weak coverage~\eqref{eqn:partial-conformal-inference-coverage}, we wish to include a configuration that contains at least all the partial matches.

\paragraph{Predicting trajectories in the MOT2D15 data set~\cite{LealMiReSc15}}
We experiment using the MOT2D15 pedestrian video tracking dataset~\cite{LealMiReSc15}.
It is a public benchmark data set that contains short street videos with pedestrians,  and for which the goal is to track each of them while they appear in the frame.  
Specifically, each frame already has a set of bounding boxes corresponding to each individual present in that frame, and we aim at matching boxes representing the same person between two consecutive frames.
Since an individual can enter or exit the frame between two consecutive images, we need to account for potentially unmatched boxes, which we do by including ``virtual" nodes in the bipartite graph, similarly to previous approaches~\cite{KimKwFeHa13, FathonyBeZhZi18}.

We use the same feature representation of~\citet{KimKwFeHa13} and~\citet{FathonyBeZhZi18}: given a pair $x \defeq (x_1, x_2)$ of two consecutive images, and two bounding boxes $u \subset x_1$, $v \subset x_2$, we compute a $d=46$ dimensional vector $\phi(x_1, x_2, u, v)$ that summarizes key features (e.g. position of the bounding box, color distribution) allowing to determine whether $u$ and $v$ represent the same person.
We then train our model using a structured S-SVM approach~\cite{TsochantaridisHoJoAl04},  following the approach of~\citet{KimKwFeHa13}.  
Such model outputs a pairwise score function $s^\text{PW} : (x,u,v) \mapsto \theta^T \phi(x_1,  x_2, u, v)$ for some vector $\theta\in \R^d$ where the feature vector $x = (x_1, x_2) \in \mc{X}$ consists of two consecutive frames,  and $(u,v)$ are two potential bounding boxes (one in each image). 

\paragraph{Experimental set-up and partial labels}
We use the data set MOT2D15 as follows.  For each of the ETH-Bahnhof,  ETH-Pedcross2 and ETH-Sunnyday video sequences,  which contain respectively 1000, 837,  and 354 consecutive images,  we select one of them for training, one of them for calibration and one of them for actual testing, using $\alpha = 0.02$ for conformalization purposes. 
We further introduce weak supervision by assuming that for each pair of images, we observe a partial matching, precisely that among the $K_i$ paired individuals, a user provided us feedback on $K_i^\text{partial} \simiid 1 + \text{Poi}(0.5)$ matches. 

For both our FSC and WSC methods, we use a translated version of the score $\score^ \text{Matching}$~\eqref{eqn:matching-score-function} with pairwise functions $\{ s_{u,v} : x \mapsto \score^\text{PW}(x,u,v) \}_{u,v}$: for each instance $(x,y) \in \mc{X} \times \mc{Y}$, we use the score
\begin{align*}
\tilde{\score}^ \text{Matching}(x,y) \defeq \score^ \text{Matching}(x,y) - \min_{\tilde{y} \in \mc{Y}} \score^\text{Matching}(x,\tilde{y})
\end{align*}
This operation simply ensures that $\min_{y \in \mc{Y}} \tilde{\score}^ \text{Matching}(x,y) = 0$ for every $x \in \mc{X}$ and thus does not change the ordering of configurations, nor the score difference between two configurations. 
We simply use it to make sure to place all the instances $X_i$ on the same scale when applying Algorithm~\ref{alg:partial-supervised-conformal} in the sense that they all have the same best achievable score. 
In particular, in the noiseless case where the true label $Y_i$ is always the minimizer of $y \mapsto s^\text{Matching}(X_i,y)$, it guarantees that $\what{C}(X)$ eventually contains a single configuration (as we should, since the score function in this case outputs perfect predictions).

\paragraph{Experimental results}
This specific problem is actually low-noise,  as it is possible to achieve a very high accuracy with the S-SVM approach, which is not so surprising as we assume perfect detection of every person thanks the bounding boxes. 
As a result, we can expect the FSC and WSC methods to output very similar confidence sets, as the configuration minimizing the score is often the true label itself.  
This is precisely what we observe in Figure~\ref{fig:exp-matching-mot15},  where both methods are actually indistinguishable and where, even with a very high confidence $1-\alpha.=0.98$, both the FSC and WSC methods are able to return confidence sets with a single configuration in average. 
We only notice a slight difference between both methods when training on the ETH-Sunnyday sequence, which contains fewer images, and hence produces slightly worse score functions.

\subsection{Prediction intervals for weakly supervised regression}
As we mentioned earlier, much of our development thus far goes beyond (finite) spaces with combinatorial structure, as we saw in the last few examples.  Therefore, we finish with a regression problem.  We consider predicting the fraction of votes in each United States county for the Democratic Party candidate in the 2016 United States presidential election, using demographic (census) data as covariates and the results of past elections.  It is common during elections for forecasters to build predictive models from both census and historical election data, as well as current polling data.  We view the historical data as strong supervision (it tells us exactly how many people voted for each candidate), and the polling data as weak supervision (as polls always come with a margin of error).  Our goal here is to fit a regression model to the strongly supervised past election data, and then form prediction intervals for the fraction of people in each county who voted Democrat by leveraging the weakly supervised polling data.  We hope by combining both we obtain valid intervals narrower than the polling margins of error.

Our data comes from the 2013--2017 American Community Survey 5-Year Estimates, which is a longitudinal survey that records demographic information about each of the 3220 United States counties.  We use 34 of the available demographic features.  The response is simply the fraction of people in each county who voted Democratic (and is publicly available).  Thus our sample size is 3220, and the number of dimensions is $d=34$.

We construct a stylized version of the above situation in the following way.  First, we split the data set into thirds: 33\% of the counties (and their associated fractions of Democratic voters) go into the training set, 33\% go into the calibration set, and the rest go into the test set; as our splits are random, they are exchangeable.  Then, we fit a Beta regression model to the strongly supervised training data.  To simulate the availability of weakly supervised polling data, we replace each calibration set response $Y_i \in [0,1]$ with a weak response $W_i \subset [0,1]$, $i=1,\ldots,n$, by forming
\begin{align}
W_i = [Y_i - Z_i, Y_i + Z_i], \quad Z_i \sim \normal(\mu, 0.0001), \quad i=1,\ldots,n, \label{eq:voting-how-weak}
\end{align}
for various values of $\mu \in \{ .01, .05, .1, .15, .2 \}$,
so that $W_i$ captures fluctuations around $Y_i$ that are roughly $\pm\mu$.
We conformalize by running Alg.~\ref{alg:partial-supervised-conformal} with the absolute error scoring function,
\[
s(x,y) = |\hat y(x) - y|,
\]
where $\hat y(x) \in [0,1]$ denotes the Beta regression model's prediction for the point $x \in \R^d$.  Here, conformalization boils down to solving a simple linear program, i.e.,
\begin{align*}
s(X_i, Y_i) = \min_{\gamma \in \R} ~ \left\{ |\hat y(X_i) - \gamma| ~ \mid
~ Y_i - Z_i \le \gamma, ~ \gamma \le Y_i + Z_i \right\}, \quad i=1,\ldots,n.
\end{align*}
Finally, we evaluate both strong~\eqref{eqn:conformal-inference-guarantee} and weak~\eqref{eqn:partial-conformal-inference-coverage} coverage on the test set (we apply the same transformation as in~\eqref{eq:voting-how-weak} to generate the weak labels for the test set).  We compute the two types of coverage, as well as the lengths of the prediction intervals, by repeating this process 20 times.  We set the miscoverage level $\alpha = .05$.

Similar to our other experiments, we find that Alg.~\ref{alg:partial-supervised-conformal} achieves weak coverage at the nominal level $.95$, for all values of $\mu$ (governing the amount of weak supervision), shown in the middle panel of Figure~\ref{fig:exp-voting-strong-cvg-lens}.  On the other hand, we expect the strong coverage to be much lower.  The left panel of Figure~\ref{fig:exp-voting-strong-cvg-lens} shows the strong coverage for Alg.~\ref{alg:partial-supervised-conformal} in teal and standard conformal inference in pink, and we can see that this is indeed the case.  However, in return, we expect the prediction intervals that Alg.~\ref{alg:partial-supervised-conformal} generates to be smaller than those coming from standard conformal inference.  The right panel of Figure~\ref{fig:exp-voting-strong-cvg-lens} shows that this is also the case: in particular, when $\mu \geq .1$, the average length of Alg.~\ref{alg:partial-supervised-conformal}'s intervals is at least three times smaller than standard conformal's, and half the length of the average weakly supervised interval $W_i$ from~\eqref{eq:voting-how-weak} ($\approx.2$).  We can also see from these figures, as in our other experiments, that Alg.~\ref{alg:partial-supervised-conformal}'s strong coverage degrades as $\mu$ grows, whereas its weak coverage improves and the length of its prediction intervals shrinks.

\begin{figure}
\begin{center}
\includegraphics[width=.32\linewidth]{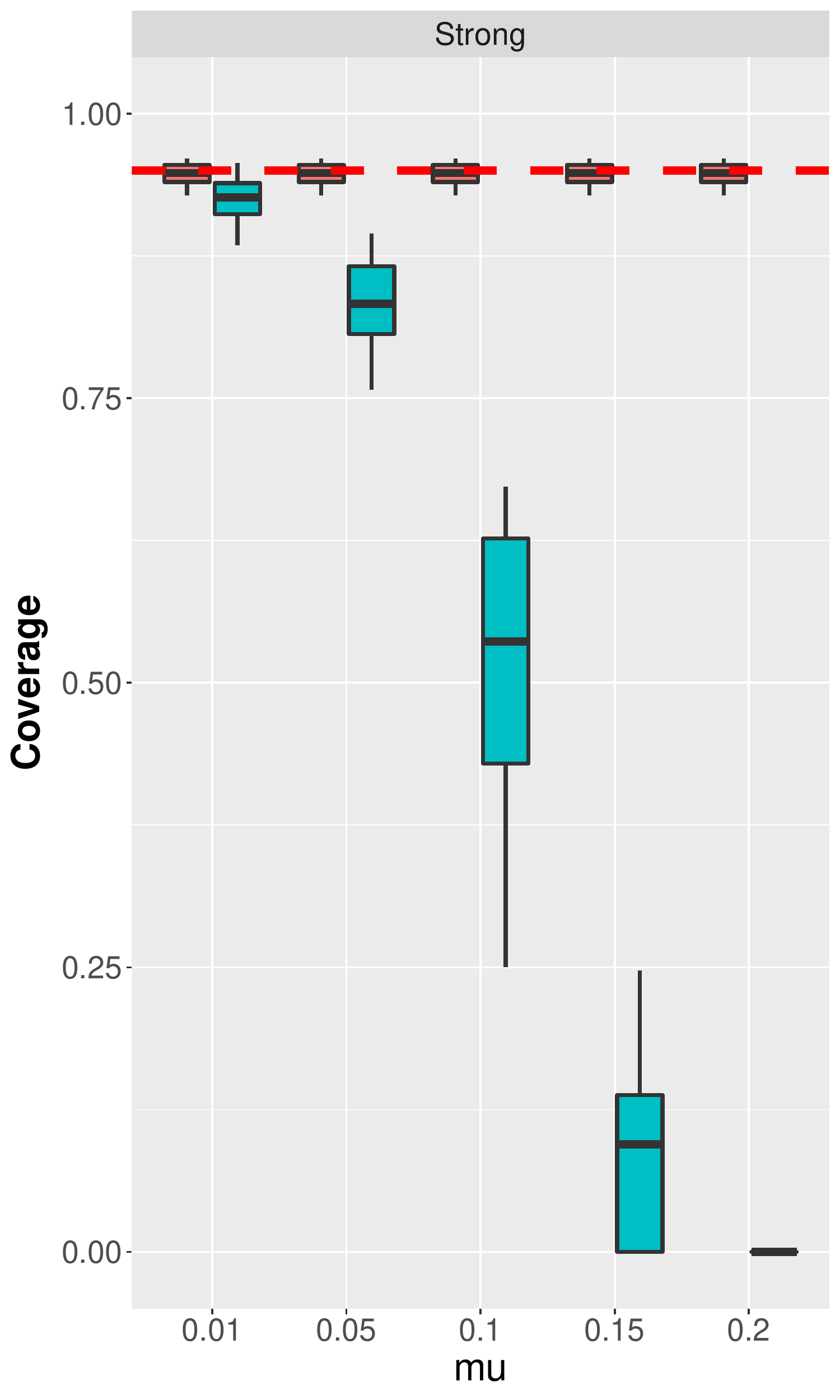}
%\hfill
\includegraphics[width=.32\linewidth]{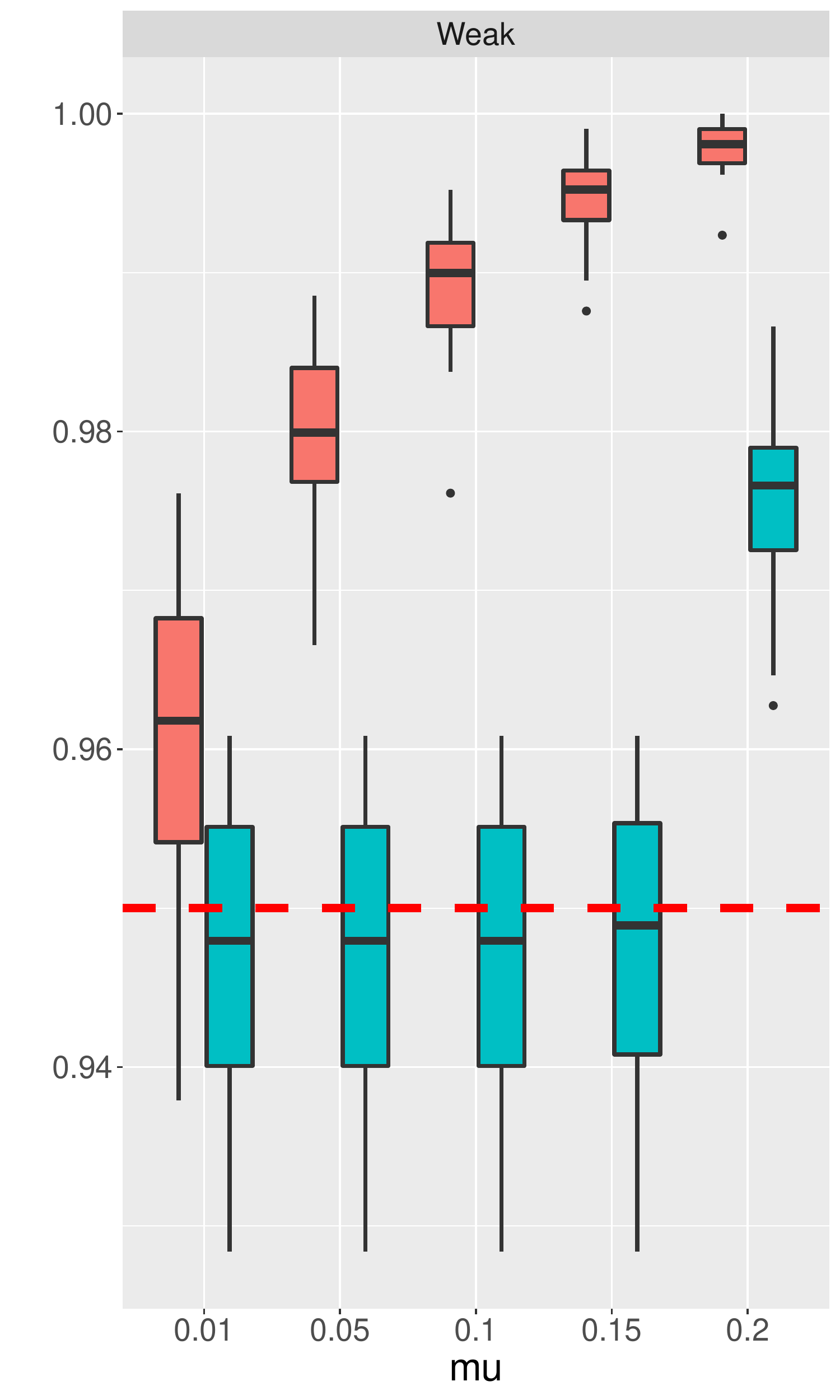}
%\hfill
\includegraphics[width=.32\linewidth]{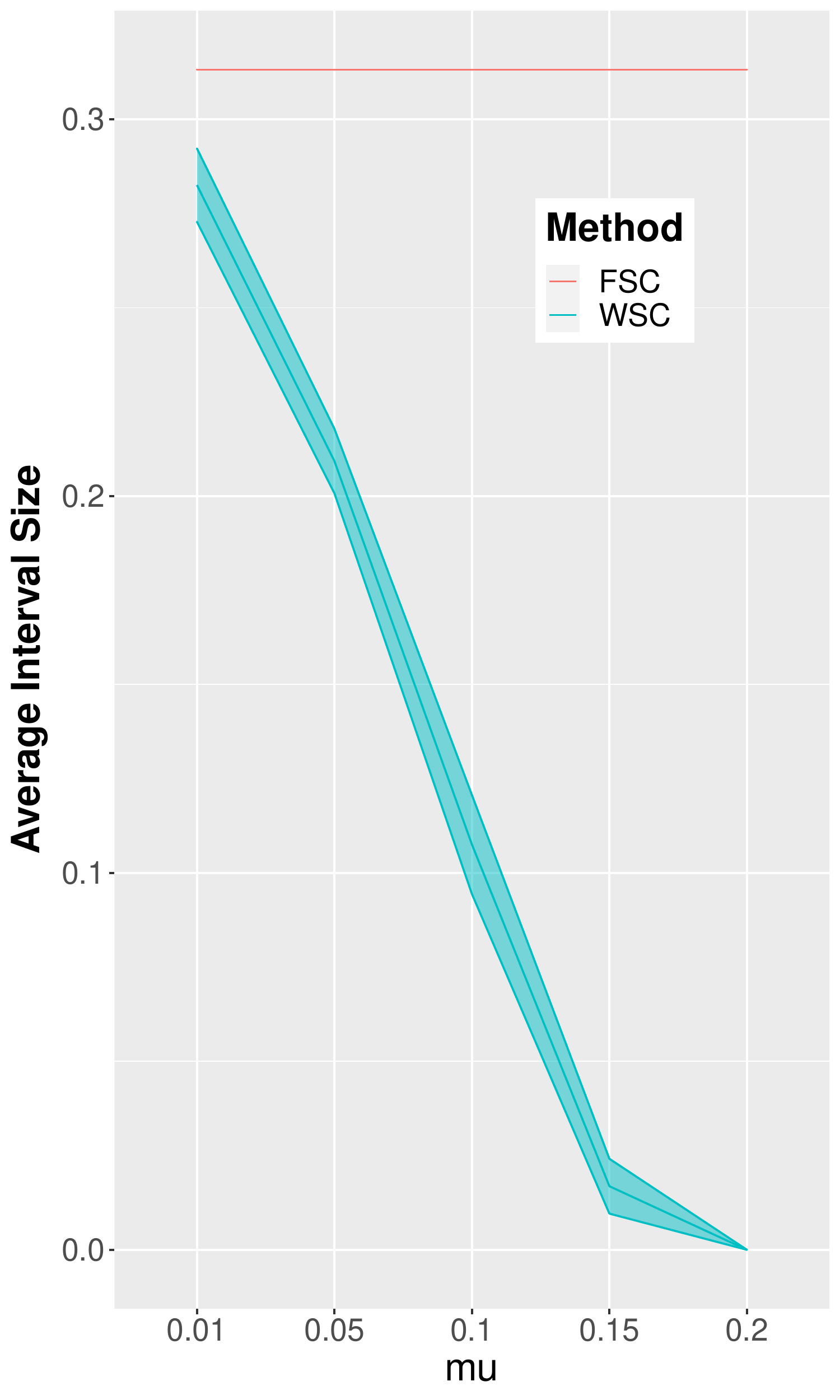}
%\vskip -0.1in
\caption{Results for the regression problem with the election data, over 20 trials.  The left panel shows the strong coverage~\eqref{eqn:conformal-inference-guarantee}, and the middle panel shows the weak coverage~\eqref{eqn:partial-conformal-inference-coverage}.  The dashed red line indicates the nominal coverage level, $1-\alpha=.95$.  The right panel shows the prediction interval lengths.  In these plots, we show Alg.~\ref{alg:partial-supervised-conformal}, denoted ``WSC'', in teal.  We show standard conformal inference, denoted ``FSC'', in pink. }
\label{fig:exp-voting-strong-cvg-lens}
\end{center}
\vskip -0.25in
\end{figure}

We view these results from a slightly different perspective in Figures~\ref{fig:exp-voting-usa-true} and~\ref{fig:exp-voting-usa-lo-hi}.  In Figure~\ref{fig:exp-voting-usa-true}, we show the true fraction of people in each county from the test set that voted Democratic.  In Figure~\ref{fig:exp-voting-usa-lo-hi}, we show the lower and upper endpoints of Alg.~\ref{alg:partial-supervised-conformal}'s prediction intervals, for a randomly chosen repetition with $\mu = .05$.  In these two figures, we color the counties with strong (predicted) Democratic majorities blue, and those with strong (predicted) Republican majorities red.  By comparing the colors, we can see that the prediction intervals only sometimes contain the true response, which is expected.  Finally, we note that the colors of the lower and upper endpoints in Figure~\ref{fig:exp-voting-usa-lo-hi} are similar, because the length of the prediction intervals is usually small.

\begin{figure}
\begin{center}
\includegraphics[width=.95\linewidth]{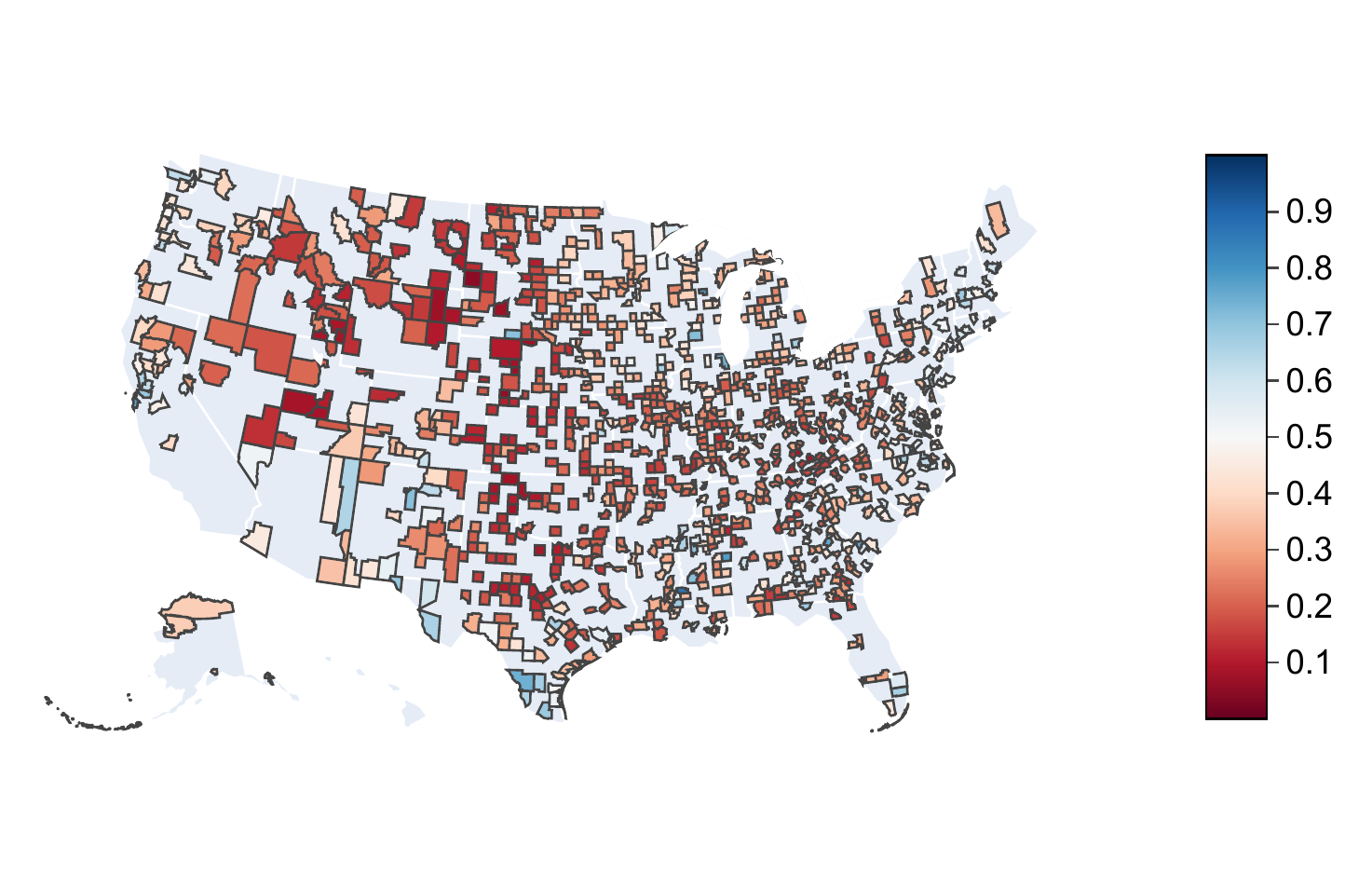}
\vskip -0.1in
\caption{Map of United States counties.  We color each county according to the actual fraction votes for the Democratic candidate in the 2016 United States presidential election.  We color counties with strong Democratic majorities blue, and those with strong Republican majorities red.  We color the counties from the training and calibration sets gray.}
\label{fig:exp-voting-usa-true}
\end{center}
\vskip -0.25in
\end{figure}

%\clearpage
\begin{figure}
\begin{center}
\includegraphics[width=.99\linewidth]{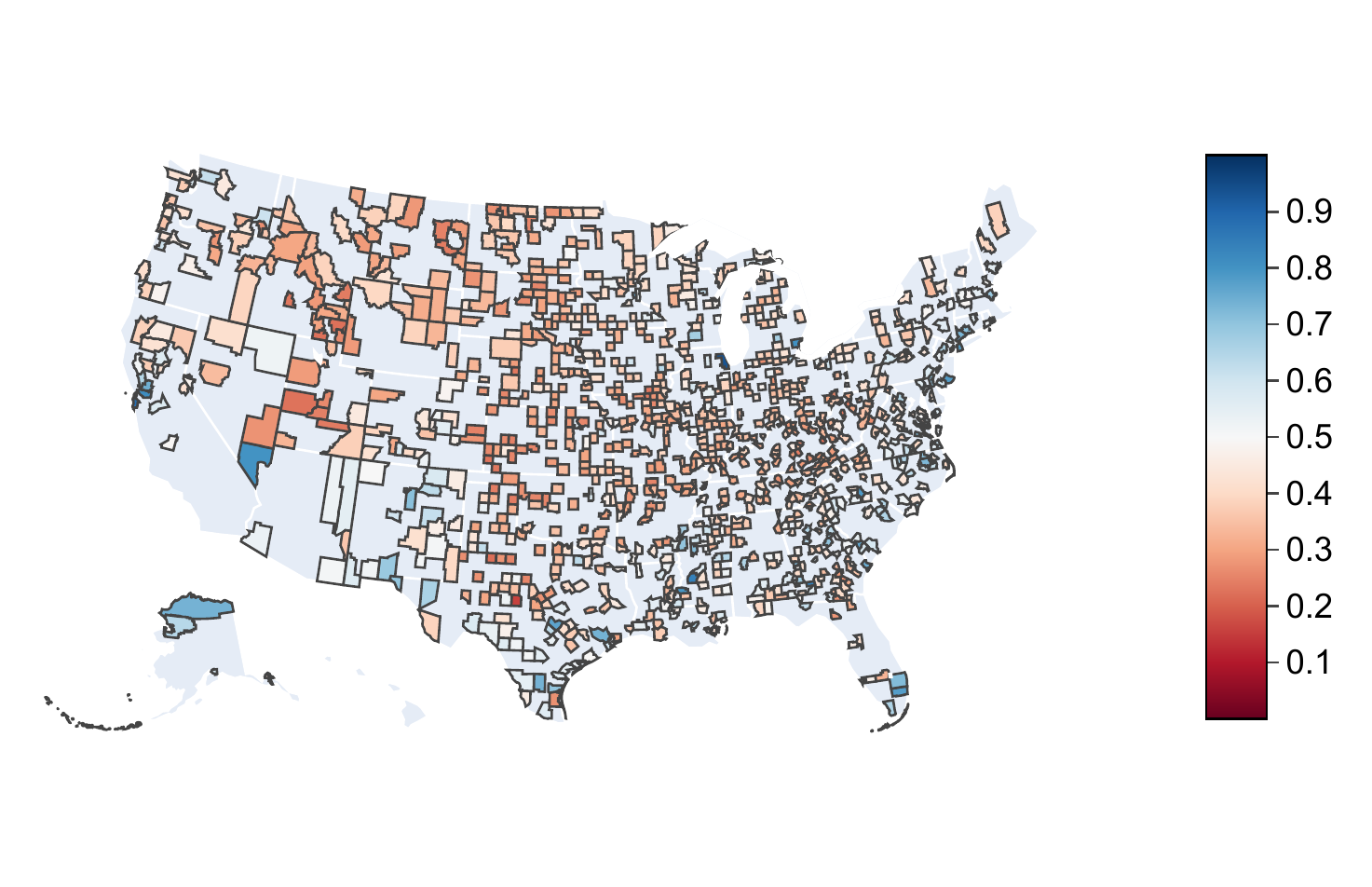} \\
\includegraphics[width=.99\linewidth]{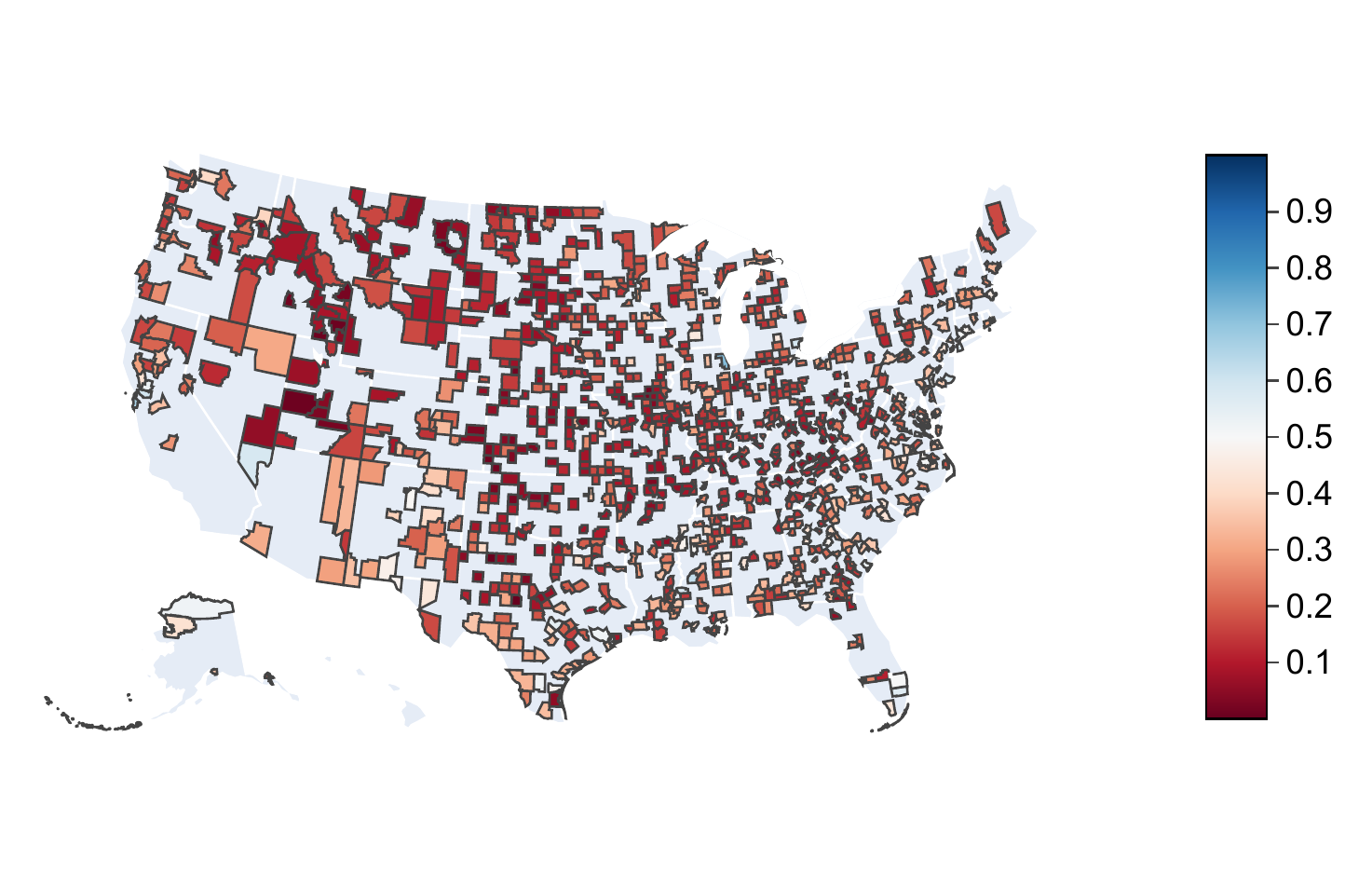}
\vskip -0.1in
\caption{Map of United States counties.  We color each county according to the value of the upper (top panel) and lower (bottom panel) endpoints of the confidence interval that Alg.~\ref{alg:partial-supervised-conformal} returns, when $\mu = .05$.   We color counties with values close to 1 blue, and those with values close to 0 red.  We color the counties from the training and calibration sets gray.}
\label{fig:exp-voting-usa-lo-hi}
\end{center}
\vskip -0.25in
\end{figure}

\section{Discussion}
\label{sec:discussion}

The new measures of coverage we develop here---tailored to partially
supervised data that may be easier to collect in many engineering and
measurement-centric scientific scenarios---help to bridge a gap between
typical conformal predictive inference methods, which require expensive
supervised data, and problems with partial supervision, whose typical focus
is on prediction but not uncertainty quantification.  Our hope is that this
work opens several avenues for future work. The new
definition~\eqref{eqn:partial-conformal-inference-coverage} a 0-1 loss-based
approach, in the sense that the confidence set $\what{C}_n$ either covers
the weakly supervised set or fails. A natural initial extension is thus
similarly to what~\citet{BatesAnLeMaJo21} propose in the strongly supervised
case, recognizing that many structured prediction problems (e.g.,
segmentation tasks or multilabel problems)  benefit from more subtle
and granular loss functions.  In the same vein, we present a few
efficient choices of scoring mechanisms for structured prediction, which
highlight the practicality and potential
application of our general methodology; it seems quite
plausible that more sophisticated scoring models could yield substantial
improvements.

In our view, one of the more exciting potential applications of this work
reposes on the (growing) centrality of partial and weakly labeled data in
statistical learning~\cite{RatnerBaEhFrWuRe17}. Whether this be from partial
reporting in surveys, or because collecting labeled data is quite expensive,
a major challenge in modern machine learning deployments and the release of
statistical models is monitoring their performance. The weaker notions of
predictive inference and coverage here, we might hope to build more
effective and applicable guardrails and uncertainty measures for modern
statistical systems, even as they are released to the world.

\bibliography{bib}
\bibliographystyle{abbrvnat}

%-*- mode: latex -*- %

\appendix

\section{Proofs of mathematical results}

\subsection{Proofs of lower bounds on confidence set sizes}

\subsubsection{Proof of Theorem~\ref{thm:lower-bound-strong-coverage}}
\label{proof-thm:lower-bound-strong-coverage}
Suppose that $P$ is a consistent distribution on $\mc{X} \times \mc{Y} \times 2^\mc{Y}$, with marginal $\Pweak$ over $\mc{X} \times 2^\mc{Y}$, and consider a procedure $\what{C}_n$ offering $1-\alpha$ strong distribution-free coverage. 
Let $\tilde{P}$ be the distribution on $\mc{X} \times \mc{Y} \times 2^\mc{Y}$ with $\tilde{P}_\weak = \Pweak$, and where we define $\tilde{P}$ by the triple $(\tilde{X}, \tilde{Y}, \tilde{\weakset}) \sim \tilde{P}$ according to
\begin{align*}
\tilde{Y} = \argmin_{y \in \tilde{\weakset}} \left\{ p_n(\tilde{X}, y) \defeq  \P_{(X_i,\weakset)_{i=1}^n \simiid \Pweak}\left[y \in \what{C}_n(\tilde{X}) \right] \right\}.
\end{align*}
Then, $\tilde{P}$ is a consistent distribution on $\mc{X} \times \mc{Y} \times 2^\mc{Y}$, which ensures that
\begin{align*}
\P_{(X_i,Y_i, \weakset_i)_{i=1}^{n+1} \simiid \tilde{P}} \left[ Y_{n+1} \in \what{C}_n(X_{n+1})\right] \ge 1- \alpha.
\end{align*}
By definition of $\tilde{P}$, we have
\begin{align*}
\P_{(X_i,Y_i, \weakset_i)_{i=1}^{n+1} \simiid \tilde{P}} \left[ Y_{n+1} \in \what{C}_n(X_{n+1})\right] = \E_{(X_{n+1},Y_{n+1},\weakset_{n+1}) \sim \tilde{\weakset}} \left[ p_n(X_{n+1}, Y_{n+1}) \right],
\end{align*}
the law of $\what{C}_n$ is identical under $P$ or $\tilde{P}$, as it only depends on $(X_i, \weakset_i)_{i=1}^n \simiid \Pweak$.

On the other hand, we observe that when $(X_{n+1},Y_{n+1},\weakset_{n+1}) \sim \tilde{P}$,
\begin{align*}
p_n(X_{n+1}, Y_{n+1}) = \inf_{y  \in \weakset_{n+1}} p_n(X_{n+1}, y),
\end{align*}
which guarantees that
\begin{align*}
1-\alpha &\le \E_{(X_{n+1},Y_{n+1},\weakset_{n+1}) \sim \tilde{P}} \left[ p_n(X_{n+1}, Y_{n+1}) \right]  \\
&=
 \E_{(X_{n+1},Y_{n+1},\weakset_{n+1}) \sim \tilde{P}} \left[ \inf_{y  \in \weakset_{n+1}} p_n(X_{n+1}, y) \right] \\
 &= \E_{(X_{n+1}, \weakset_{n+1}) \sim \tilde{P}_\weak} \left[ \inf_{y  \in \weakset_{n+1}} p_n(X_{n+1}, y) \right] \\
 &= \E_{(X_{n+1}, \weakset_{n+1}) \sim \Pweak} \left[ \inf_{y  \in \weakset_{n+1}} p_n(X_{n+1}, y) \right].
\end{align*}

\subsubsection{Proof of Corollary~\ref{cor:pop-conf-set-big-aSS}}
\label{proof-cor:pop-conf-set-big-aSS}

Consider $(X, \weakset) \sim \Pweak$ independent of $(X_i, \weakset_i)_{i\ge
  1}$, and define $p(x,y) = \P( y \in C(x))$,
recalling the definition $p_n(x, y) = \P(y \in \what{C}_n(x))$.
By Jensen's inequality,
\begin{align*}
  \lefteqn{\E\left[ \Big|
      \inf_{y \in \weakset} p(X,  y) - \inf_{y \in \weakset} p_n(X, y) \Big|\right]
    \le \sum_{y \in \mc{Y}} \E\left[ \left| p(X,  y) - p_n(X,  y) \right|
      \right]} \\
  & \qquad\quad\qquad\qquad\qquad\qquad \le
  \E \!\left[ \sum_{y\in \mc{Y}} \left|
    \indics{y \in \what{C}_n(X)} - \indic{y \in C(X)} \right| \right]
  =
  \E\left[ |\what{C}_n(X) \setdiff C(X) | \right].
\end{align*}
Taking the limit as $n \to \infty$ and using that $\E[\inf_{y \in \weakset}
  p_n(X, y)] \ge 1-\alpha$ from
Theorem~\ref{thm:lower-bound-strong-coverage}, we must also have
\begin{align*}
  \E \left[  \inf_{y \in \weakset} p(X, y)  \right] \ge 1- \alpha.
\end{align*}
Since $ \inf_{y \in \weakset} p(X, y) \in [0,1]$, we must have $\P( \inf_{y
  \in \weakset} p(X, y)= 0) \le \alpha$, which gives the desired result:
there exists $y \in \weakset \cap \detpart_C(x) \setminus C(x)$ if
and only if $y \in \weakset \cap \{y \in \mc{Y}: p(x,y) = 0 \}$, i.e.\ if and
only if $ \inf_{y \in \weakset} p(x, y)= 0$.

\subsection{Proofs on size set optimality in weak supervision}

\subsubsection{Proof of Lemma~\ref{lemma:equivalence-score-nested}}
\label{proof:equivalence-score-nested}

Fix $\eta_0 \in (0,1)$.  Then $y \in C_{\eta_0}(x, u)$ implies that
$\score\nest(x, y, u) = \inf\{\eta \mid y \in C_\eta(x, u)\} \le \eta_0$ and
so $\score\nest(x, y, u) \le \eta_0$. Conversely, assume that
$\score\nest(x, y, u) \le \eta_0$. Then by definition of $\score\nest$,
$y \not \in C_{\eta_0}(x, u)$ if and only if
for all $\eta > \eta_0$, we have $y \in C_\eta(x, u)$ but
$y \not\in C_{\eta_0}(x, u)$, and therefore $\score\nest(x, y, u) = \eta_0$.
But of course, by continuity, $\P(\score\nest(x, y, U) = \eta_0) = 0$,
and so
\begin{equation*}
  \P(\score\nest(x, y, U) \le \eta_0
  ~ \mbox{and} ~ y \not\in C_{\eta_0}(x, U)) = 0.
\end{equation*}

\subsubsection{Proof of Proposition~\ref{prop:tree-structured-distribution}}
\label{proof-prop:tree-structured-distribution}
The case where $\weakset \mid X=x$ has a label-independent structure is immediate, hence we focus on proving the result when $\weakset \mid X=x$ has a label tree-structure~\eqref{eqn:label-tree-structure}.

We prove the result by induction on the size of $\mc{Y}^\star$, observing that the  result is immediate if $|\mc{Y}^\star|=1$.
If $|\mc{Y}^\star| = K > 1$, we assume that the result holds on sets with at most  $K-1$ elements.

We denote $P_{x}$ the law of $\weakset\mid X=x$, by $P_u$ the law of $U$ and $\P = P_{x} \otimes P_u$ their joint distribution, and similarly for their expectations.

Fix $\eta \in (0,1)$, and let $C: [0,1] \toto \mc{Y}^ \star$ be a confidence set mapping satisfying
\begin{align*}
\P(C(U) \cap \weakset \neq \emptyset) \ge \eta.
\end{align*}
We will prove that
\begin{align*}
E_u |C_\eta^\text{Greedy}(x,U)|  \le E_u |C(U)|.
\end{align*} 
We use the label ranking $y_1(x), \dots, y_K(x)$ that Alg.~\ref{alg:greedy-weakly-supervised-optimal-scoring} defines, omitting $x$ for simplicity, and consider two cases:
\begin{itemize}
\item {\bf{Case 1:}}
		 $P_u( y_K \in C(U)) = 0$.

 Then $C$ provides coverage at level $\eta$ using only the $K-1$ first labels, which also guarantees that $C_\eta^\text{Greedy}(x,u)$ only contains labels in $\{y_1, \dots, y_{K-1} \}$ (since, in that case, $J_\eta \le K-1$ in Alg.~\ref{alg:greedy-weakly-supervised-optimal-scoring}).
 The induction hypothesis applied to the distribution of $\weakset \setminus \{ y_K \}$ thus ensures that $E_u |C_\eta^\text{Greedy}(x,U)|  \le E_u |C(U)|$.
 
 \item {\bf{Case 2:}} $P_u( y_K \in C(U)) > 0$.
 
In that case, we will prove that either $P_u( y_j \in C(U)) =  1$ for all $j \in [K-1]$, or that we can build a new confidence set $C^\text{final}(U)$ such that
\begin{align*}
\P(C^\text{final}(U) \cap \weakset \neq \emptyset \mid X=x) \ge \P(C(U) \cap \weakset \neq \emptyset \mid X=x), 
~ E_u |C^\text{final}(x,U)|  =  E_u |C(U)|,
\end{align*}
and verifies either $P_u( y_j \in C^\text{final}(U)) =  1$ for all $j \in [K-1]$, or $P_u( y_K \in C^\text{final}(U)) = 0$.

%\begin{figure}[h]
%\centering
%\caption{Example of distribution for $\weakset$ given $X=x$, with $\mc{Y}^ \star = \{1, 2, 3, 4 \}$, with labels ordered as Alg.~\ref{alg:greedy-weakly-supervised-optimal-scoring}. The only acceptable configurations for $\weakset$ are each singleton, $\weakset_1 = \{y_1,y_3\}$,  $\weakset_2 = \{y_2, y_5\}$, $\weakset_0 = \{y_1, y_2, y_3, y_5 \}$, and $\mc{Y}^\star$ itself.}
%\label{fig:exmp-distribu-tree}
%\begin{forest} 
%[$\mc{Y}^\star$, tikz={\draw[{Latex}-, thick] (.north) --++ (0,1);}
%	[$\weakset_0$
%    [$\weakset_1$
%        [$y_1$] 
%        [$y_3$] 
%    ]   
%    [$\weakset_2$
%        [$y_2$]   
%        [$y_5$] 
%    ]
%    ]
%    [$y_4$]
%] 
%\end{forest}
%\end{figure}

The distribution $P_x$ induces a tree whose leaves are the labels $y_1, \dots, y_K$, and each inner node $N$ (apart from the root, which is $\mc{Y}^\star$ itself) is a subset of $\mc{Y}^\star$ such that $P_x( \weakset=N)>0$, and two nodes $N_1$ and $N_2$ share the same parent if any subset $C$ containing strictly either $N_1$ or $N_2$ such that $P_x(\weakset = C) >  0$ contains $N_1 \cap N_2$. This parent is then the smallest subset $N_p$ such that $N_1 \cap N_2 \subset N_p$ and $P_x(\weakset=N_p) > 0$. 
Each parent is then the union of all its children. 
Figure~\ref{fig:exmp-distribu-tree} provides an example of such a tree.

Defining $D(C) \defeq \{y \in \mc{Y}^\star \setminus \{ y_K \} \mid P_u( y_j \in C(U)) <  1 \} \neq \emptyset$, we consider the element $\tilde{y} \in D(C)$ sharing the lowest common ancestor with $y_K$ in the tree. 
For instance, in Figure~\ref{fig:exmp-distribu-tree}, if $D(C) = \{  y_3, y_4 \}$, then $y_D = y_3$, as their common ancestor $\weakset_0$ is lower than the common ancestor of $y_5$ and $y_4$ ($\mc{Y}^\star$ itself).

We then proceed to define $\tilde{C}(U)$ from $C(U)$ so that 
\begin{align}
\label{eqn:proof-tree-size-unchanged-1}
&\tilde{C}(U) \setminus \{ y_K, y_D \} = C(U) \setminus \{ y_K, y_D \} 
\end{align}
and
\begin{align}
 \label{eqn:proof-tree-size-unchanged-2}
&E_u |\tilde{C}(U) \cap \{ y_K, y_D \}| = E_u |C(U) \cap \{ y_K, y_D \}|,
\end{align}
but now either
\begin{align*}
P_u( y_D \in \tilde{C}(U)) = 1 ~ \text{or} ~ P_u( y_K \in \tilde{C}(U)) = 0.
\end{align*}
In practice, we do so by replacing $y_K$ by $y_D$ when $C(U) \cap \{ y_K, y_D \} = \{ y_K(x) \}$, or/and decreasing the probabilities that $\tilde{C}(U) \cap \{ y_K, y_D \} = \{ y_D, y_K \}$ and $C(U) \cap \{ y_K, y_D \} = \emptyset$, in such a way that the average size does not vary, but the probability that $C(U) \cap \{ y_K, y_D \} = \{ y_D \}$ increases.

We then proceed to check that such a change cannot hurt our coverage, while it evidently leaves the average confidence set size unchanged (because of equations~\eqref{eqn:proof-tree-size-unchanged-1} and~\eqref{eqn:proof-tree-size-unchanged-2}).

The only way we can have $C(U) \cap \weakset  \neq \emptyset$ and $\tilde{C}(U) \cap \weakset = \emptyset$ is if $\weakset = \left\{ y_K \right\}$. 
On the other hand, when $\weakset=\left\{ y_D  \right\}$, we can have $C(U) \cap \weakset  = \emptyset$ but $\tilde{C}(U) \cap \weakset \neq \emptyset$. 
Because of the definition of $y_D$ with respect to $y_K$, any other value of $\weakset$ such that $C(U) \cap \weakset \neq \emptyset$ will be such that $\tilde{C}(U) \cap \weakset  \neq \emptyset$.
In particular, by independence of $\weakset$ and $U$, we have
\begin{align*}
&\P( \tilde{C}(U) \cap \weakset \neq \emptyset) 
-  \P( C(U) \cap \weakset \neq \emptyset) \\
\begin{split} 
&\ge P_x(\weakset=\{ y_K(x)\})
 \left[ P_u( y_K \in \tilde{C}(U))
- P_u( y_K \in C(U))
\right] 
\\
&+  
P_x(\weakset=\{ y_D \} )
\left[ P_u( y_D\in \tilde{C}(U))
- P_u( y_D \in C(U))
\right]  
 \end{split} \\
 &= \left( P_u( y_D \in \tilde{C}(U))
- P_u( y_D \in C(U))
\right) \left( P_x(\weakset=\{ y_D  \}) - P_x(\weakset=\{ y_K  \}) \right),
\end{align*}
since $P_u( y_K \in \tilde{C}(U)) + P( y_D \in \tilde{C}(U))
= P_u( y_K \in C(U)) + P( y_D\in C(U))$, as the total average size does not  vary.
  
In addition, since $y_K$ gets selected last in Alg.~\ref{alg:greedy-weakly-supervised-optimal-scoring}, we know that for all $y \in \mc{Y}^\star$,
\begin{align*}
P_x \left( \weakset= \{ y_D  \} \right) \ge P_x \left(\weakset=\{ y_K  \}) \right),
\end{align*} 
which achieves to prove that
\begin{align*}
\P( \tilde{C}(U) \cap \weakset \neq \emptyset) 
\ge \P( C(U) \cap \weakset \neq \emptyset) \ge \eta.
\end{align*}

If $P_u( y_D \in \tilde{C}(U)) = 1$, then $|D(\tilde{C})| \le  |D(C)| - 1$, and we can repeat the process until we obtain a final mapping $C^\text{final}$ such that either $D(C^\text{final})=\emptyset$ or $\P( y_K(x) \in C^\text{final}(U)) = 0$. 

In the first scenario where eventually $D(C^\text{final})=\emptyset$, it means that $C^\text{final}(U)$ is either $\mc{Y}^\star$ or $\mc{Y}^\star \setminus \{ y_K \}$, 
and this is immediate to check that since $\P(C^\text{final}(U) \cap \weakset \neq \emptyset \mid X=x) \ge \eta$, we must have $P_u( y_K \in C^\text{final}(U) ) \ge P_u( y_K \in C_\eta^\text{Cond-Prox}(x,U) )$, which in turn ensures that
\begin{align*}
E_u |C_\eta^\text{Cond-Prox}(x,U)|  \le E_u |C^\text{final}(U)| = \E |C(U)|.
\end{align*}
In the second case, since $P_U( y_K \in C^\text{final}(U)) = 0$, $C^\text{final}$ is effectively a confidence set over strictly less than $K$ labels, in which case we can apply the induction hypothesis to conclude that 
\begin{align*}
E_u |C_\eta^\text{Cond-Prox}(x,U)|  \le E_u |C^\text{final}(U)| = E_u |C(U)|.
\end{align*}
\end{itemize} 

\subsection{Proofs of algorithms validity}

\subsubsection{Proof of Lemma~\ref{lem:ranking-config-algo}}
\label{proof-lem:ranking-config-algo}
We prove the result by proving that if we run Algorithm~\ref{alg:efficient-computation-configurations} with $M=K!$,  defining at each step $y_{j,2}^m$ as in equation~\eqref{eqn:ranking-second-best-config},  then at each step $m \le K!$ of the algorithm,  $\{ \mc{Y}_j^{m} \}_{j \in [m]}$ is a valid partition of $\mc{Y}$ that satisfies the following conditions:
\begin{enumerate}
\item For each $j \in [m]$, we have $\mc{Y}_j^m = \{ y_j^m \}$ if and only if there exists no $i \in [K-1]$ such that $(i+1 ~ i) \circ y_j^m \in \mc{Y}_j^m$.
\item If $\mc{Y}_j^m \neq \{ y_j^m \}$ then $s(x, y_j^m) \le s(x, y_{j,2}^m)$.
\end{enumerate}
If the partition satisfies these two conditions at every step, then we can run the algorithm until step $m=K!$,  at which point it returns a partition $ \{ \mc{Y}_j^{K!} \}_{k=1}^{K!}$ such that $s(x, y_{1}^{K!}) \le \dots \le s(x,y_{K!}^{K!})$.
Now, since we have $s_j^m = s_j^{K!}$ for every $1 \le j \le m$, we conclude that, at each step $m \in [K!]$, we have
\begin{align*}
s(x, y_1^m) \le s(x,y_2^m) \le \dots \le s(x,y_m^m) \le \min_{y \in \mc{Y} \setminus \{ y_1^m,  \dots, y_m^m \} } s(x,y),
\end{align*}
which proves the validity of the algorithm.

This is of course true for $m =1$, since the best configuration simply ranks $r_k(x)$ in decreasing order. 

\begin{enumerate}
\item By definition of $y_{\text{ind}(m),2}^m$ and $y_{\text{ind}(m)}$,  $\{ \mc{Y}_j^{m+1} \}_{j=1}^{m+1}$ is a valid partition of $\mc{Y}$ such that each $y_j^{m+1} \in \mc{Y}_j^{m+1}$,  if $\{ \mc{Y}_j^{m} \}$ is itself a valid partition (and $m < K!$), so long as we can prove that if $\mc{Y}_j^m \ne \{ y_j^m \}$,  then there must exist $\alpha \in [K-1]$ such that $(\alpha+1 ~ \alpha) \circ y_j^m \in \mc{Y}_j^m$ (i.e the algorithm does not get stuck and terminates). 
But this is immediate as, if for all $\alpha \in [K-1]$,  we have $(\alpha+1 ~ \alpha) \circ y_j^m \notin \mc{Y}_j^m$, then it must be by construction that
\begin{align*}
\mc{Y}_j^m \subset \bigcap_{\alpha \in [K-1]} \{ y \in \mc{Y} \mid y^{-1}(y_j^m(\alpha)) < y^{-1}(y_j^m(\alpha+1)) \} = \{ y_j^m \}.
\end{align*}
Therefore, at each step $m \le K!$ of the algorithm, $\{ \mc{Y}_j^{m} \}_{j \in [m]}$ is a valid partition of $\mc{Y}$, and the algorithm terminates.  

\item On the other hand,  it requires more care to justify why, 
if we set, for all $j \le m$,  
\begin{align*}
y_{j,2}^m \defeq \argmin_{y \in \mc{Y}_j^m} \{ s(x,y) \mid \exists \alpha \in [K-1],  ~ y = (\alpha + 1 ~   \alpha) \circ y_j^m  \}
\end{align*}
then we should always have
\begin{align}
\label{eqn:ranking-config-algo-property}
s(x,y_j^m) \le s(x,y_{j,2}^m)
\end{align}
for all $j \in [m]$, i.e. why any permutation of the form $(\alpha ~ \alpha + 1) \circ y_j^m$ that belongs to $\mc{Y}_j^m $ cannot strictly decrease the score $s(x,y)$.

Equation~\eqref{eqn:ranking-config-algo-property} actually results from a crucial property of the score function~\eqref{eqn:kendall-ranking-score},  which ensures that if $s(x, y) < s(x, (\alpha ~ \alpha + 1) \circ y)$, then it must hold that
\begin{align*}
r_{y(\alpha)}(x) < r_{y(\alpha + 1)}(x),
\end{align*}
i.e.  the elements $y(\alpha)$ and $y(\alpha + 1)$ were originally in the wrong order in $y$.

But, since we start the partition process with $y_1^1$ such that $r_{y_1^1(1)}(x) \ge \dots \ge r_{y_1^1(k)}(x)$, i.e with all elements in the correct order, it is straightforward to check that at any time $m$, there cannot exist a permutation of the form $(\alpha ~ \alpha + 1) \circ y_j^m$ that also belongs to $\mc{Y}_j^m $
such that 
\begin{align*}
r_{y_j^m(\alpha)}(x) < r_{y_j^m(\alpha + 1)}(x):
\end{align*}
if that were the case, then there would exist $l \le j$ such that $y_j^m(\alpha)$ and $y_j^m(\alpha + 1)$ were the elements exchanged at time $l$ when creating the partition $\{ \mc{Y}^{l+1}_i \}_{i=1}^{l+1}$.
In turn, since $y_j^m \in  \mc{Y}_j^m$, this would guarantee that
\begin{align*}
\mc{Y}_j^m \subset \{ y \in \mc{Y} \mid y^{-1}(y_j^m(\alpha)) < y^{-1}(y_j^m(\alpha + 1)) \},
\end{align*}
and thus that $(\alpha ~ \alpha + 1) \circ y_j^m \notin \mc{Y}_j^m $.
This guarantees that any configuration $(\alpha ~ \alpha + 1) \circ y_j^m \in \mc{Y}_j^m $ satisfies
\begin{align*}
s(x, (\alpha ~ \alpha + 1) \circ y_j^m) \ge s(x, y_j^m),
\end{align*}
and thus that either $\mc{Y}_j^m = \{ y_j^m \}$, or
\begin{align*}
s(x,y_{j,2}^m) \ge s(x,y_j),
\end{align*}
which concludes the proof.
\end{enumerate}

%\subsubsection{Proof of Corollary~\ref{cor:submodular-random-bound}}
%\label{proof-cor:submodular-random-bound}
%XXX

\end{document}